\definecolor{lightblue}{RGB}{230,246,253}  
\newtheorem{assumption}{Assumption}
\newtheorem{theorem}{Theorem}
\newtheorem{remark}{Remark}
\newtheorem{lemma}{Lemma}
\newtheorem{proposition}{Proposition}
\newtheorem{corollary}{Corollary}
\definecolor{drj}{RGB}{67,151,143}
\definecolor{lyx}{RGB}{221,160,221}
\newcommand{\lyx}[1]{\textcolor{lyx}{#1}}
\newcommand{\ky}[1]{\textcolor{red}{[KY: #1]}}
\definecolor{gray}{RGB}{169,169,169}
\newcommand{\gray}[1]{\textcolor{gray}{#1}}
\newcommand{\method}{MISA}
\title{MISA: Memory-Efficient LLMs Optimization with \\ Module-wise Importance Sampling}
\author{
  Yuxi Liu$^*$ \\Peking University\\
  \texttt{yuxiliu666@stu.pku.edu.cn} \\
   \and
  Renjia Deng$^*$ \\
Peking University \\
\texttt{2501210078@stu.pku.edu.cn} \\
  \and
  Yutong He \\
Peking University \\
\texttt{yutonghe@pku.edu.cn}\\
  \and
 \hspace{7mm} Xue Wang \\
\hspace{7mm}Alibaba DAMO Academy \\
\hspace{7mm}\texttt{xue.wang@alibaba-inc.com}\\
  \and
 \hspace{-6mm} Tao Yao \\
\hspace{-2mm}Shanghai Jiao Tong University \\
\hspace{-2mm}\texttt{taoyao@sjtu.edu.cn}\\
\and
Kun Yuan$^\dagger$\\
Peking University \\
\texttt{kunyuan@pku.edu.cn}
}
\begin{document}

\setlength{\parindent}{0pt}
\setlength{\parskip}{0.5em}

\maketitle

\def\thefootnote{$^*$}\footnotetext{Equal contribution.}
\def\thefootnote{$^\dagger$}\footnotetext{Corresponding author.}

\begin{abstract}

The substantial memory demands of pre-training and fine-tuning large language models (LLMs) require memory-efficient optimization algorithms. One promising approach is layer-wise optimization, which treats each transformer block as a single layer and optimizes it sequentially, while freezing the other layers to save optimizer states and activations. Although effective, these methods ignore the varying importance of the modules within each layer, leading to suboptimal performance. Moreover, layer-wise sampling provides only limited memory savings, as at least one full layer must remain active during optimization.  To overcome these limitations, we propose \underline{\textbf{M}}odule-wise \underline{\textbf{I}}mportance \underline{\textbf{SA}}mpling (\textbf{MISA}), a novel method that divides each layer into smaller modules and assigns importance scores to each module. 
MISA uses a weighted random sampling mechanism to activate modules, provably reducing
gradient variance compared to layer-wise sampling. 
Additionally, we establish an \(\mathcal{O}(1/\sqrt{K})\) convergence rate under non-convex and stochastic conditions, where $K$ is the total number of block updates, and provide a detailed memory analysis showcasing MISA's superiority over existing baseline methods.  Experiments on diverse learning tasks validate the effectiveness of MISA. Source code is available at: \url{https://github.com/pkumelon/MISA}.
\end{abstract}

\section{Introduction}
Large language models (LLMs) have emerged as a cornerstone of modern artificial intelligence, driving groundbreaking advancements in diverse domains such as machine translation, commonsense reasoning, and mathematical problem-solving. Despite their impressive capabilities, fine-tuning these models for downstream tasks introduces significant challenges \citep{qiu2020pre,raffel2020exploring,rozière2024codellamaopenfoundation}, primarily due to the substantial memory overhead required to store optimizer states, gradients, and intermediate activations. These memory footprints are particularly acute in resource-constrained environments, where the full-parameter fine-tuning of billion-scale models often exceeds the available hardware capacity, thereby impeding the practical deployment and scalability of LLMs \citep{zhang2022opt,xu2023parameter,dubey2024llama}.

To address these challenges, parameter-efficient fine-tuning (PEFT) methods, such as Low-Rank Adaptation (LoRA) \cite{hu2021lora} and its variants \cite{liu2024dora,dettmers2023qloraefficientfinetuningquantized}, have gained considerable attention. These methods freeze the majority of pre-trained parameters and optimize only small, low-rank matrices integrated into transformer layers, leading to substantial reductions in memory usage. However, while these approaches are highly memory-efficient, they inherently constrain the model's adaptability. By optimizing only a sparse subset of parameters, such methods often lead to suboptimal performance compared to full-parameter fine-tuning, as task-specific features crucial to downstream performance may reside within the frozen portions of the network \citep{lialin2023relorahighranktraininglowrank, zhang2024scaling}.

Recent research has explored block-coordinate descent (BCD) \citep{tseng2001convergence,wright2015coordinate,xu2015block}, a classical strategy for high-dimensional optimization, as a promising alternative for fine-tuning LLMs, resulting in the layer-wise optimization methods such as BAdam \cite{luo2024badam}, HIFT \cite{liu2024hifthierarchicalparameterfinetuning}, LIFT \cite{zhu2024lift}, LISA \cite{pan2024lisa}, and BlockLLM~\cite{ramesh2024blockllm}. Unlike LoRA, these layer-wise methods optimize transformer blocks sequentially while keeping the remaining layers frozen. By iteratively updating all transformer blocks, layer-wise optimization effectively enables full-parameter updates, thereby preserving the expressive capacity of the original model. Empirical studies demonstrate that layer-wise optimization consistently outperforms LoRA in performance \citep{luo2024badam,pan2024lisa}. Furthermore, by skipping gradient computations for frozen layers, layer-wise optimization eliminates the need to store their intermediate activations, offering greater memory efficiency than LoRA.

\textbf{Motivating questions.} All existing layer-wise LLM  approaches~\cite{luo2024badam,liu2024hifthierarchicalparameterfinetuning,zhu2024lift,pan2024lisa,ramesh2024blockllm} adopt the traditional BCD optimization paradigm, in which the model weights are divided into manageable blocks and updated iteratively. Although BCD provides a structured framework for handling the high dimensionality of LLM weights, fully realizing its potential in practical LLM fine-tuning requires addressing the following fundamental open questions. 
\vspace{-6px}
\begin{center}
\setlength\fboxsep{4pt}  
\colorbox{gray!20}{%
    \parbox{\dimexpr\linewidth-2\fboxsep\relax}{%
        \textit{Q1. How to effectively partition the LLM's weights into blocks for optimization?}
    }%
}
\end{center}
\vspace{-6px}
The choice of partitioning strategy can significantly impact optimization efficiency and convergence. Current methods~\cite{luo2024badam,liu2024hifthierarchicalparameterfinetuning,zhu2024lift,pan2024lisa,ramesh2024blockllm} treat transformer layers as homogeneous units, overlooking the differing significance of internal modules—such as multi-head attention, feed-forward networks, and normalization layers—within each layer. By uniformly updating all parameters in a layer, these methods risk over-adapting less impactful modules while under-training critical ones, resulting in suboptimal optimization performance. Thus, more effective partitioning strategies must be explored.
\vspace{-6px}
\begin{center}
\setlength\fboxsep{4pt}  
\colorbox{gray!20}{%
    \parbox{\dimexpr\linewidth-2\fboxsep\relax}{%
        \textit{Q2. How to effectively sample each block to achieve fast empirical convergence?}
    }%
}
\end{center}
\vspace{-6px}
The performance of layer-wise methods is highly influenced by the block sampling strategies. However, current approaches primarily rely on cyclic \citep{luo2024badam} or uniform \citep{liu2024hifthierarchicalparameterfinetuning} sampling patterns, which overlook the varying importance across layers. LISA~\citep{pan2024lisa} keeps the Embedding and LLM head layers active but assigns equal sampling probabilities to all transformer layers, ignoring their varying significance. BlockLLM \citep{ramesh2024blockllm} prioritize more frequent updates to critical layers but ultimately focus on a small set of fixed blocks, failing to adequately explore the remaining layers. An effective sampling strategy should maintain a  balance between thorough exploration of the parameter space and efficient exploitation of the most promising optimization directions to ensure rapid convergence.
\vspace{-6px}
\begin{center}
\setlength\fboxsep{4pt}  
\colorbox{gray!20}{%
    \parbox{\dimexpr\linewidth-2\fboxsep\relax}{%
        \textit{Q3. How to establish convergence guarantees under practical LLM settings, incorporating Adam optimizer, stochastic gradients, and multiple updates per sampled block?}
    }%
}
\end{center}
\vspace{-6px}
Theoretical convergence of layer-wise LLM optimization remains understudied.  Traditional BCD optimization literature typically focuses on block-wise GD/SGD convergence \cite{wright2015coordinate,xu2013block,hong2017iteration}. Recent works \citep{luo2024badam,zhou2020randomized,pan2024lisa} provide convergence guarantees under restrictive assumptions, such as noiseless gradients or single updates per sampled block. These analyses fail to reflect practical layer-wise LLM optimization, where Adam optimizer, stochastic gradients, and multiple updates per block are standard practice. A rigorous convergence analysis under these realistic conditions is essential for broader adoption of layer-wise optimization methods.

\textbf{Main contributions.} To address the aforementioned open questions, we propose a novel \underline{\textbf{M}}odule-wise \underline{\textbf{I}}mportance \underline{\textbf{SA}}mpling (\textbf{MISA}) method. MISA partitions the LLM's weights into smaller modules, which serve as blocks for optimization. Additionally, we assign importance scores to each module and develop an effective strategy that dynamically samples modules based on real-time importance metrics. Our main contributions are summarized as follows:
\vspace{-0.1cm}
\begin{itemize}
\item[\textbf{C1.}] \textbf{Module-wise optimization.} We define a module as a matrix parameter within a transformer layer associated with weight gradients. Empirically, we observe that internal modules within transformer layers exhibit heterogeneous importance. Theoretically, we demonstrate that decomposing each layer into smaller modules {preserves more information in gradient}, thus motivating module-wise optimization. This addresses \textbf{Question 1}. Furthermore, we find this fine-grained update strategy eliminates the need to load a full layer into memory, making it more memory-efficient than layer-wise LLM optimization approaches~\cite{luo2024badam,liu2024hifthierarchicalparameterfinetuning,zhu2024lift,pan2024lisa,ramesh2024blockllm}. 

\item[\textbf{C2.}] \textbf{Improved importance sampling.} Traditional sampling strategies often rely on heuristics, which can be suboptimal for LLM optimization. To improve this, we parameterize the gradient variance as a function of the sampling probability for each module and {\color{black}maximize} it to optimize the sampling strategy. Additionally, we add a strategy to balance  importance and uniform distributions, ensuring both comprehensive exploration of all modules and efficient exploitation of critical ones, addressing \textbf{Question 2}.

\item[\textbf{C3.}] \textbf{Convergence guarantees.} We demonstrate that MISA achieves a convergence rate of \(\mathcal{O}(1/\sqrt{K})\), where \(K\) represents the total number of block updates. Our theoretical guarantees are derived under practical LLMs training scenarios, incorporating Adam optimization, stochastic gradients, and multiple updates per sampled block, thus addressing \textbf{Question 3}. Conventional BCD analysis relies on the assumption that block gradients are unbiased estimators of the full gradient—an assumption that fails when performing multiple updates within the same block. Our new analysis addresses this limitation by establishing fundamental connections between block-level gradients and the full gradient.

\end{itemize}
\vspace{-0.7cm}

\begin{table}[H]
\centering
\caption{\small 
Comparison of fine-tuning methods on LLaMA3-8B across eight commonsense reasoning tasks. The ''ChatGPT'' row presents results from ChatGPT obtained using Zero-shot CoT \citep{wei2022chain} with the GPT-3.5-turbo API. "Hella." refers to the HellaSwag dataset \citep{zellers2019hellaswag}, and "Wino." refers to the Winogrande dataset \citep{sakaguchi2021winogrande}. Memory usage is reported without the application of additional memory-saving techniques such as gradient checkpointing or flash attention. The symbol $\delta$ denotes the proportion of parameters updated in each training iteration.
}
\label{table:experiment_illustration}

\renewcommand{\arraystretch}{1.1} 
\resizebox{\textwidth}{!}{
\begin{tabular}{llcccccccccc}
\toprule
\textbf{Model} & \textbf{Method} & \textbf{Mem.(GB)} & \textbf{BoolQ} & \textbf{PIQA} & \textbf{SIQA} & \textbf{Hella.} & \textbf{Wino.} & \textbf{ARC-e} & \textbf{ARC-c} & \textbf{OBQA} & \textbf{Avg.$\uparrow$}\\
\midrule
ChatGPT & - & - & 73.1 & 85.4 & 68.5 & 78.5 & 66.1 & 89.8 & 79.9 & 74.8 & 77.0 \\
\midrule
\multirow{7}{*}{LLaMA3-8B} 
&\gray{FT}&\gray{150.5}&\gray{75.1}&\gray{89.2}&\gray{80.4}&\gray{96.2}&\gray{88.3}&\gray{92.4}&\gray{82.5}&\gray{89.8}&\gray{86.7} \\
&LoRA&35.7&70.8&85.2&79.7&92.5&84.9&88.9&78.7&84.4&82.5 \\
&DoRA&54.1&74.6&89.3&79.9&95.5&85.6&90.5&80.4&85.8&85.2 \\
&LISA&56.3&74.6&88.1&81.5&\textbf{96}&86.4&\textbf{92.5}&81.7&86.2&85.9 \\
&BAdam&34.1&74.2&87.1&\textbf{81.6}&95&84.6&91.2&79.8&84.8&84.8 \\

&\cellcolor{lightblue}\textbf{\method($\bm{\delta = 1\%}$)}&\cellcolor{lightblue}\textbf{30.7}&\cellcolor{lightblue}74.1&\cellcolor{lightblue}88.6&\cellcolor{lightblue}80.7&\cellcolor{lightblue}95&\cellcolor{lightblue}85.6&\cellcolor{lightblue}92&\cellcolor{lightblue}81&\cellcolor{lightblue}86.2&\cellcolor{lightblue}85.4 \\
&\cellcolor{lightblue}\textbf{\method($\bm{\delta = 3\%}$)}&\cellcolor{lightblue}34.4&\cellcolor{lightblue}\textbf{75.4}&\cellcolor{lightblue}\textbf{90.5}&\cellcolor{lightblue}81.4&\cellcolor{lightblue}95.9&\cellcolor{lightblue}\textbf{88.2}&\cellcolor{lightblue}92.2&\cellcolor{lightblue}\textbf{82.4}&\cellcolor{lightblue}\textbf{87}&\cellcolor{lightblue}\textbf{86.6} \\
\bottomrule
\end{tabular}
}
\vspace{-0.25cm}
\renewcommand{\arraystretch}{1.0}
\end{table}

\textbf{Experimental results.} MISA demonstrates strong empirical performance. 
\begin{itemize}
\vspace{-0.15cm}
\item[\textbf{E1.}] \textbf{Fine-tuning.} We evaluated MISA on different LLMs across three benchmarks: Commonsense Reasoning \citep{hu2023llm}, Math Reasoning \citep{hu2023llm}, and Instruction Following, encompassing a total of 16 datasets. We compared MISA against PEFT and layer-wise optimization methods, including LoRA \citep{hu2021lora}, DoRA \citep{liu2024dora}, BAdam \citep{luo2024badam}, and LISA \citep{pan2024lisa}. Under comparable memory constraints, MISA outperformed all baselines. 
An illustration of MISA’s superiority in fine-tuning tasks is provided in Table~\ref{table:experiment_illustration}, with more detailed results presented in Section~\ref{sec-exp}.
\item[\textbf{E2.}] \textbf{Pre-training.} We trained the LLaMA2 130M and 350M variant \citep{lialin2023relora} on the C4 dataset \citep{raffel2020exploring}. In 350M model training, MISA achieved a perplexity of 22.11 after 2.7B training tokens, significantly outperforming GaLore’s 24.34 \citep{zhao2024galore} , and approaching Adam’s 21.3. 

\end{itemize}

\newcommand{\red}[1]{\textcolor{red}{#1}}
\newcommand{\green}[1]{\textcolor{green!50!black}{#1}}

\setlength{\abovedisplayskip}{4pt}   
\setlength{\belowdisplayskip}{4pt}   
\setlength{\abovedisplayshortskip}{4pt}   
\setlength{\belowdisplayshortskip}{4pt}   

\section{Related Work}
{\textbf{Parameter-efficient fine-tuning.} 
Adapter-based methods introduce new modules (e.g., fully connected layers) into the backbone model and update only these modules during training \citep{pfeiffer2020mad, houlsby2019parameter, he2021towards, wang2022adamix, he2022sparseadapter}, incurring additional computational costs during inference.  
Prompt-based methods prepend random soft tokens to the input (typically as a prefix) and train their embeddings while keeping the backbone LLM frozen \citep{li2021prefix, han2022ptr, zhong2021factual, qin2021learning}, also introducing inference overhead.  

LoRA and its variants, including QLoRA \citep{dettmers2024qlora}, DoRA \citep{liu2024dora}, and AdaLoRA \citep{zhang2023adaloraadaptivebudgetallocation}, have become state-of-the-art PEFT methods. LoRA injects low-rank matrices into linear layers to estimate weight updates while keeping the backbone LLM parameters frozen. Unlike adapter- and prompt-based methods, LoRA has no additional inference overhead, as the low-rank matrices can be merged into the model weights. To address the limitations of the low-rank structure, recent methods such as ReLoRA \citep{lialin2023relora}, MoRA \citep{jiang2024mora}, and HiRA \citep{anonymous2025hira} have explored high-rank updates based on LoRA.

{\textbf{Memory-efficient optimization.} 
LOMO \citep{lv2023full} eliminates optimizer and gradient memory costs during training with standard SGD.  
AdaLOMO \citep{lv2023adalomo} extends LOMO by integrating a fused backward operation with an adaptive learning rate per parameter .  
GaLore \citep{zhao2024galore} enhances memory efficiency by projecting gradients into a low-rank subspace.  
Quantization techniques are widely applied to model parameters, gradients, optimizer states, and intermediate activations \citep{dettmers20218, li2309memory, deepseekai2024deepseekv3technicalreport}.  Adam-mini \citep{zhang2024adamminiusefewerlearning} reduces memory usage by optimizing learning rate allocation in Adam.

{\textbf{Layer-wise optimization.} 
Inspired by classical Block Coordinate Descent (BCD), layer-wise optimization has emerged as a promising approach for fine-tuning large language models. Methods such as LIFT \citep{zhu2024lift}, HIFT \citep{liu2024hifthierarchicalparameterfinetuning}, and BAdam \citep{luo2024badam} update LLM layers sequentially, achieving performance comparable to parameter-efficient fine-tuning (PEFT) methods while significantly reducing computational costs. LISA \citep{pan2024lisa} identified a skewed weight-norm distribution across layers in LoRA and introduced a randomized layer update strategy, keeping the embedding and language modeling head layers activated. Similarly, BlockLLM \citep{ramesh2024blockllm} prioritizes frequent updates to layers with larger gradient norms. However, none of these approaches incorporate module-wise importance sampling.

\textbf{Varying importance of modules in LLM fine-tuning.} 
Previous studies have shown that different layers contribute unequally to LLM performance.  
LayerSkip \citep{Elhoushi_2024}, LayerDrop \citep{fan2019reducingtransformerdepthdemand}, and Layer-wise Model Pruning \citep{fan2021layerwisemodelpruningbased} demonstrated that the overparameterization of pre-trained models leads to layer redundancy, enabling high performance to be maintained even after removing certain layers.  
IST \citep{yao2024layerwiseimportancemattersmemory} and ILA \citep{shi2024understandinglayersignificancellm} further highlighted that layer importance varies during LLM fine-tuning.  
Additionally, studies on PEFT have revealed that different modules within transformer blocks exhibit varying ranks \citep{biderman2024lora, zhang2023adaloraadaptivebudgetallocation}.  OwLore\citep{li2024owloreoutlierweighedlayerwisesampled} identifies layer importance by detecting outliers in the model weights.
However, these insights have not yet motivate module-wise importance sampling in LLM optimization.

 \vspace{-0.2cm}
\section{Module-wise Importance Sampling}
\subsection{Problem Formulation}
 We consider the following problem:
 \vspace{-0.2cm}
\begin{align}
    \min_{\theta\in \mathbb{R}^d} \  \mathbb{E}_{\xi}[F(\theta ; \xi)] 
    \label{eq:main:prob:xue}
\end{align}
where \( F(\theta ; \xi) \) is a loss function that depends on a random variable \( \xi \), and \( \theta \) represents the set of $d$ trainable parameters. 
Conventional algorithms update all elements of \( \theta \) simultaneously, leading to high memory and computational costs that hinder the training of LLMs on low-end hardware. In this work, we propose an alternative formulation of the problem in \eqref{eq:main:prob:xue}. Specifically, we introduce a block-wise representation of \( \theta \), defined as \( \theta = (\theta_1, \theta_2, \dots, \theta_B) \), where \( \theta_b \in \mathbb{R}^{d_b} \) represents the \( b \)-th block of weights. Here, we have $d = \sum_{b=1}^B d_b$. The resulting problem formulation is then given by:
\begin{align}
 \underset{\theta_1, ... , \theta_B}{\min} f\left( \theta_1, \theta_2, ... , \theta_B \right): = \mathbb{E}_{\xi}[F(\theta_1, \theta_2, ... , \theta_B ; \xi)].\label{eq:main:prob_blk:xue}
\end{align}
\vspace{-0.5cm}
\subsection{Block Sampling Strategy} 
\label{sec-block-sample}
\vspace{-0.1cm}
At each time step, block-wise LLM training updates only a subset of parameters \( \{\theta_i\} \). This approach inherently reduces the number of trainable parameters, resulting in lower memory and computational requirements. However, it also introduces greater variability in the gradient's unbiased estimator, which can potentially hinder convergence performance. This section will derive an effective importance sampling strategy to enhance block-coordinate optimization. 

\textbf{Block coordinate descent.} Let \( g_b^n := [\nabla f(\theta^n)]_b \in \mathbb{R}^{d_b} \) denote the \( b \)-th block gradient of \( \nabla f(\theta^n) \), where \( b \in [B] \) is the block index and $n \ge 0$ is the iteration index. Additionally, we define the matrix \( U_b := [0; \cdots; I_b; \cdots; 0] \in \mathbb{R}^{d \times d_b} \), where the \( b \)-th block of \( U_b \) is the identity matrix \( I_b \in \mathbb{R}^{d_b \times d_b} \), and all other blocks are zero. Suppose each block gradient $g_b^n$ is sampled with probability $p_b$, the BCD algorithm will iterate as follows
\begin{align}
\theta^{n+1} = \theta^n - \alpha U_b g_b^n, \quad \mbox{where block $b$ is sampled with probability $p_b$.} 
\end{align}
where $\alpha$ is the learning rate. Assume $f(\theta)$ is $L$-smooth, we have 
\begin{align}
    f(\theta^{n+1}) = f(\theta^n - \alpha U_b g_b^n) &\le f(\theta^n) - \alpha \langle \nabla f(\theta^n), U_b g_b^n \rangle + \frac{L\alpha^2}{2}\|U_b g_b^n\|^2 \\
    &\overset{(a)}{=} f(\theta^n) - \alpha (1 - \frac{L \alpha}{2})  \|g_b^n\|^2 \overset{(b)}{\le} f(\theta^n) - \alpha \|g_b^n\|^2/2
\end{align}
where (a) holds because $g^n_b = U_b^\top \nabla f(\theta^n)$ and (b) holds when $\alpha \le 1/L$. Taking expectation on $b$,
\begin{align}
    \mathbb{E}[f(\theta^n) - f(\theta^{n+1})] \ge \frac{\alpha}{2}\sum_{b=1}^B p_b \|g_b^n\|^2.
\end{align}

\vspace{-0.15cm}
\textbf{Importance sampling.} An ideal update $\theta^{n+1}$ should maximize the expected decrease $\mathbb{E}[f(\theta^n) - f(\theta^{n+1})]$, ensuring the most significant descent in the objective function. This can be achieved by 
\begin{align}
\max_{\{p_b\}_{b=1}^B}\quad \sum_{b=1}^B p_b \|g_b^n\|^2, \quad \mbox{s.t.} \quad \sum_{b=1}^B p_b = 1, \quad p_b \ge 0. 
\end{align}
Intuitively, this strategy prioritizes blocks with larger expected gradient norms, as they contribute more significantly to optimization progress. However, excessively prioritizing important blocks can lead to a scenario where only a few blocks are updated frequently. This results in over-exploitation at the expense of exploration. To encourage broader exploration, we constrain the sampling probabilities to stay close to a uniform distribution by incorporating a Kullback–Leibler (KL) divergence penalty: 
\begin{align}
\label{eq:prob:1:xue}
\max_{\{p_b\}_{b=1}^B}\quad \sum_{b=1}^B p_b \|g_b^n\|^2 - (1/\eta) \mathrm{KL}(p_b, q_B), \quad \mbox{s.t.} \quad \sum_{b=1}^B p_b = 1, \quad p_b \ge 0,
\end{align}
where $q_B = 1/B$ denotes the uniform distribution, and $\eta > 0$ is a coefficient controlling the trade-off between exploration and exploitation. As $\eta \to 0$, the KL divergence penalty dominates, and each $p_b$ approaches uniform sampling; as $\eta \to +\infty$, the penalty vanishes, recovering standard importance sampling. The following proposition provides a closed-form solution to problem \eqref{eq:prob:1:xue}. 
\begin{proposition}
The optimal solution to problem \eqref{eq:prob:1:xue} is given as follows
\begin{align}
p^n_b=\frac{\exp \left( \eta \|g_b^n\|^2 \right)}{\sum_{b=1}^B{\exp \left( \eta \|g_b^n\|^2 \right)}}, \label{eq:solution:xue}
\end{align}
which is the sampling probability of block $b$ at iteration $n$. 
\label{theorem distribution 1}
\end{proposition}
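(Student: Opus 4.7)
The plan is to solve the constrained optimization problem in \eqref{eq:prob:1:xue} via Lagrangian duality, exploiting the fact that the objective is strictly concave in $\{p_b\}$ (since the entropy term is strictly concave and the first term is linear), so any KKT point is the unique global maximizer. I will also observe that the non-negativity constraints $p_b \geq 0$ are not active at the optimum, since the exponential form forces $p_b > 0$, so they can be ignored in the Lagrangian setup.

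First, I would expand the KL divergence against the uniform $q_B = 1/B$ as
\begin{align}
\mathrm{KL}(p_b, q_B) = \sum_{b=1}^B p_b \log(B p_b) = \sum_{b=1}^B p_b \log p_b + \log B,
\end{align}
so the objective becomes $\sum_b p_b \|g_b^n\|^2 - (1/\eta)\sum_b p_b \log p_b$ up to the constant $-(1/\eta)\log B$, which does not affect the optimizer.

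Next, I would introduce a Lagrange multiplier $\lambda \in \mathbb{R}$ for the simplex constraint $\sum_b p_b = 1$ and write
\begin{align}
\mathcal{L}(\{p_b\}, \lambda) = \sum_{b=1}^B p_b \|g_b^n\|^2 - \frac{1}{\eta} \sum_{b=1}^B p_b \log p_b - \lambda \Bigl( \sum_{b=1}^B p_b - 1 \Bigr).
\end{align}
Setting $\partial \mathcal{L}/\partial p_b = 0$ gives $\|g_b^n\|^2 - (1/\eta)(\log p_b + 1) - \lambda = 0$, hence $p_b = \exp\bigl(\eta \|g_b^n\|^2 - 1 - \eta \lambda\bigr)$. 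Enforcing $\sum_b p_b = 1$ eliminates the constant $\exp(-1-\eta\lambda)$ by normalization, yielding the claimed formula \eqref{eq:solution:xue}.

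There is no substantive obstacle: the only thing worth checking is that the stationary point is indeed a global maximum. This follows because the Hessian of the entropy part is $-(1/\eta)\,\mathrm{diag}(1/p_b) \prec 0$ on the relative interior of the simplex, so the objective is strictly concave, making the KKT conditions sufficient. I would close the proof by noting that the resulting $p_b^n$ are strictly positive and sum to one by construction, so they are feasible and thus optimal.
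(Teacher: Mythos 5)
Your proof is correct and follows essentially the same route as the paper: introduce a Lagrange multiplier for the simplex constraint, set the partial derivatives to zero, and normalize to eliminate the multiplier. The paper keeps $q_B$ explicit in the Lagrangian rather than first expanding the KL term into $\sum_b p_b \log p_b + \log B$, and it does not explicitly note strict concavity or the inactivity of the non-negativity constraints, but these are minor presentational additions on your part rather than a different argument.
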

\vspace{-0.1cm}
\textbf{Practical implementation.} The sampling probability in \eqref{eq:solution:xue} is impractical to implement, as $g_b^n = [\nabla f(\theta^n)]_b$ represents a full-batch block gradient, which is inaccessible during LLM optimization. In block-wise LLM training, each sampled block is typically updated for $T$ steps using the Adam optimizer before switching to another block. Let $g_b^{n,t} := [\nabla F(\theta^{n,t}; \xi^{n,t})]_b$ denote the $b$-th block stochastic gradient at outer iteration $n$ and inner update $t$. We approximate the full-batch gradient norm $\|g_b^n\|^2$ using the empirical average $\frac{1}{T} \sum_{t=1}^T \|g_b^{n,t}\|^2$, resulting in a practical sampling strategy: 
\begin{align}
p^n_b=\frac{\exp \left( \eta G_b^n \right)}{\sum_{b=1}^B{\exp \left( \eta G_b^n \right)}} \quad 
\mbox{where}
\quad 
G_b^n \hspace{-0.8mm}= \hspace{-0.8mm}
\left\{
\begin{array}{ll}
\hspace{-0.8mm}\beta G_b^{n-1} \hspace{-0.8mm}+\hspace{-0.8mm} (1\hspace{-0.8mm}-\hspace{-0.8mm}\beta) \frac{1}{T} \sum_{t=1}^T \|g_b^{n,t}\|^2 & \mbox{If  $b$ is sampled;} \\
\hspace{-0.8mm}G_b^{n-1} & \mbox{otherwise.}
\end{array}
\right.
\label{eq:solution:xue-ky}
\end{align}
Instead of relying solely on the most recent $T$ mini-batch block stochastic gradients, $G_b^n$ aggregates all historical stochastic gradients to approximate the full-batch block gradient norm $\|g_b^n\|^2$ in \eqref{eq:solution:xue-ky}. To eliminate the impact of differences in the number of parameters across blocks on gradient norm calculation, we actually use the \textbf{scaled gradient norm} in practice rather than the original gradient norm, and the specific definition is provided in Appendix \ref{appendix:scaled gradient norm}. We thus address \textbf{Question 2} by answering how to effectively sample blocks.

\begin{remark} [\sc Memory and Computation overhead]
The computation and storage cost to maintain $\{G_b^n\}_{b=1}^B$ is negligible compared to that of the block stochastic gradients. See Sections \ref{Computation analysis of MISA's indicators} and \ref{Memory analysis of MISA's indicators} for details.
\end{remark}

\begin{remark} [\sc Clarification on "Layer", "Module", and "Block"]
A layer is a standard transformer component (e.g., with MHA and FFN), a module is a fine-grained subcomponent within a layer (e.g., $W_q$, $W_k$, $W_v$), and a block is a flexible optimization unit in block coordinate descent that can be a layer, a module, or a group of modules.
\end{remark}

\vspace{-0.2cm}
\subsection{Partition the Weights into Fine-Grained Modules Instead of Coarse Layers} 
\label{sec-separate-block}
\begin{wrapfigure}{r}{0.45\textwidth}               
  \centering
  \includegraphics[width=0.44\textwidth]{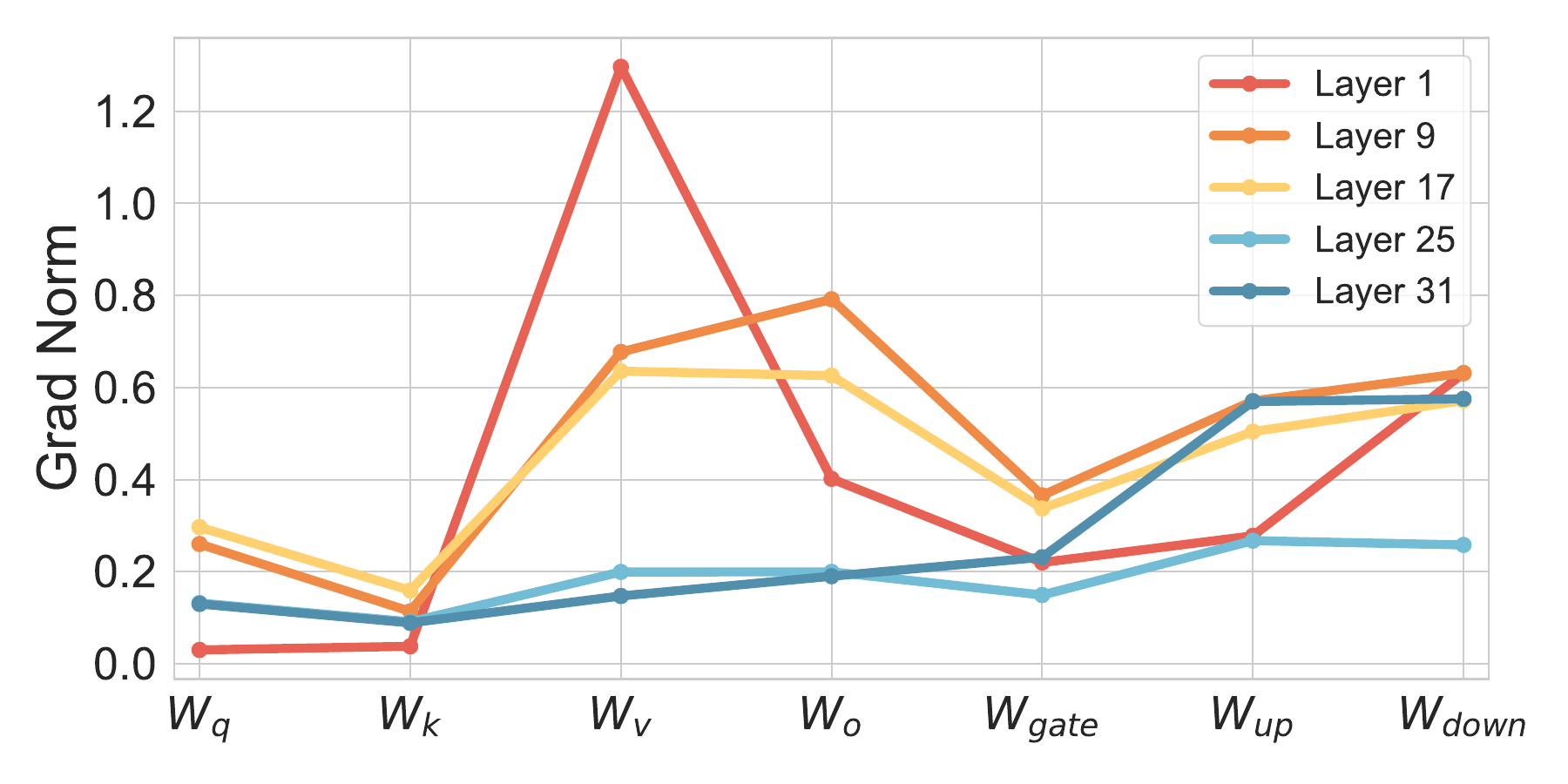}
  \caption{\small The gradient norm of different modules in different layers when fine-tuning LLaMA3-8B.}
  \label{fig:gradnorm}
  \vspace{-1.5em}             
\end{wrapfigure}
Current approaches~\cite{luo2024badam,liu2024hifthierarchicalparameterfinetuning,zhu2024lift,pan2024lisa,ramesh2024blockllm} often treat transformer layers as homogeneous structures, neglecting the distinct roles and contributions of their internal components—such as multi-head attention mechanisms, feed-forward networks, and normalization layers. As shown in Fig.~\ref{fig:gradnorm}, different modules within the same transformer layer exhibit significantly varying gradient norms, underscoring their differing levels of importance. Uniformly updating all modules within a layer risks over-adapting less critical components while under-training more essential ones, ultimately leading to suboptimal performance.

Motivated by this observation, we propose decomposing layers into distinct internal modules. In this work, a module is defined as a matrix parameter within a transformer layer that is associated with a weight gradient. For example, in the multi-head attention mechanism, we identify four modules—\(W_q\), \(W_k\), \(W_v\), and \(W_o\)—while in the feed-forward network, we define two modules—\(W_{up}\) and \(W_{down}\). In other words, we treat each module as a sampling block, addressing \textbf{Question 2}. Proposition~\ref{thm:variance MISA and LISA} characterizes the advantages of module-wise over layer-wise block sampling.
\vspace{-0.1cm}
\begin{proposition}
Suppose $\theta_b$ in problem \eqref{eq:prob:1:xue} contains \(K\) modules such that \(\theta_b = (\theta_{b,1}, \theta_{b,2}, \dots, \theta_{b,K})\) which associates with block gradient $g_b = (g_{b,1}, g_{b,2}, \dots, g_{b,K})$. We further introduce $p_{bj}$ as the sampling probability of the block gradient $g_{b,j}$. Under this setting, problem \eqref{eq:prob:1:xue} transforms into
\begin{align}
\label{eq:prob:114:xue}
\max_{\{p_{bj}\}}\quad \sum_{b=1}^B \sum_{j=1}^K p_{bj} \|g_{b,j}^n\|^2 - (1/\eta) \mathrm{KL}(p_{bj}, q_{BK}), \quad \mathrm{s.t.} \quad \sum_{b=1}^B\sum_{j=1}^K p_{bj} = 1, \quad p_{bj} \ge 0,
\end{align}
Here, \(q_{BK} = 1/(BK)\) is the uniform distribution. Any layer-wise importance sampling (i.e., a solution to problem \eqref{eq:prob:1:xue}) is also a feasible solution to \eqref{eq:prob:114:xue}. Consequently, module-wise importance sampling (i.e., the optimal solution to problem \eqref{eq:prob:114:xue}) will yield a larger objective function value than layer-wise importance sampling, making it a superior strategy to layer-wise sampling.
\label{thm:variance MISA and LISA}
\end{proposition}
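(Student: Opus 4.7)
The plan is to exhibit an explicit embedding of layer-wise sampling distributions into the feasible set of the module-wise problem \eqref{eq:prob:114:xue}, and then conclude by optimality. First I would define the map $\phi: \{p_b\} \mapsto \{p_{bj}\}$ by $p_{bj} := p_b / K$. A direct check shows $p_{bj} \ge 0$ and
\begin{align}
\sum_{b=1}^B \sum_{j=1}^K p_{bj} \;=\; \sum_{b=1}^B p_b \cdot \tfrac{K}{K} \;=\; 1,
\end{align}
so $\phi$ lands in the feasible region of \eqref{eq:prob:114:xue}. This already proves the first half of the proposition.

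Next I would evaluate the module-wise objective at the embedded point. Because $g_b^n$ is the concatenation $(g_{b,1}^n,\dots,g_{b,K}^n)$, we have $\sum_j \|g_{b,j}^n\|^2 = \|g_b^n\|^2$, so the gradient-weighted term reduces to $(1/K)\sum_b p_b \|g_b^n\|^2$. For the regularizer, substituting $p_{bj} = p_b/K$ and $q_{BK} = 1/(BK)$ gives $\log(p_{bj}/q_{BK}) = \log(p_b B)$, hence
\begin{align}
\mathrm{KL}(p_{bj}, q_{BK}) \;=\; \sum_b p_b \log(p_b B) \;=\; \mathrm{KL}(p_b, q_B).
\end{align}
Denoting by $\mathrm{OPT}_{\mathrm{mod}}$ the optimal value of \eqref{eq:prob:114:xue}, optimality then yields $\mathrm{OPT}_{\mathrm{mod}} \ge (1/K)\sum_b p_b \|g_b^n\|^2 - (1/\eta)\mathrm{KL}(p_b,q_B)$ for every feasible $\{p_b\}$, and in particular for the optimizer of \eqref{eq:prob:1:xue}. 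This is the inequality claimed.

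The main subtlety, which I would flag as the principal obstacle, is that the module-wise and layer-wise objectives are not literally the same function on a shared domain: after embedding, the gradient term acquires an extra $1/K$ factor, so "larger objective" must be read as "the module-wise problem is a relaxation of the layer-wise problem." Concretely, \eqref{eq:prob:114:xue} permits distributions $\{p_{bj}\}$ that are nonuniform in $j$ within a layer and can thus concentrate mass on the heaviest modules, whereas $\phi$ produces only uniform-in-$j$ images. To sharpen "larger" to "strictly larger," I would apply the KKT characterization of Proposition \ref{theorem distribution 1} in the module-wise setting to obtain $p_{bj}^\star \propto \exp(\eta \|g_{b,j}^n\|^2)$; whenever $\|g_{b,j}^n\|^2$ varies with $j$ inside some layer (as empirically confirmed by Fig.~\ref{fig:gradnorm}), this optimizer cannot lie in the image of $\phi$, and the inequality is strict.
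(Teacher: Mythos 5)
Your proof is correct and takes a genuinely cleaner route than the paper's. You argue by embedding the layer-wise distribution into the module-wise feasible set via $p_{bj}\mapsto p_b/K$ and then invoking optimality, which handles both the gradient term and the KL penalty in one stroke. The paper instead sets $p_b=\sum_j p_{bj}^\star$ (the marginal of the module-wise optimizer), reduces the claim to the per-layer inequality $\frac{1}{K}\sum_j G_{b,j}^n \le \sum_j G_{b,j}^n\,\hat p_{b,j}$ with $\hat p_{b,j}=p_{bj}^\star/p_b$, and establishes it by observing that the importance-sampling form $p_{bj}^\star\propto\exp(\eta G_{b,j}^n)$ makes $G_{b,j}^n$ and $\hat p_{b,j}$ similarly ordered, so Chebyshev's sum inequality applies; summing over $b$ gives $\tfrac{1}{K}\sum_b G_b^n p_b\le\sum_{b,j}G_{b,j}^n p_{bj}$. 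Note the paper's appendix proof only addresses the gradient term and silently drops the KL penalty, so your calculation verifying $\mathrm{KL}(p_b/K,q_{BK})=\mathrm{KL}(p_b,q_B)$ is an actual improvement, not just a stylistic one: it makes the optimality argument go through for the full objective rather than a truncation of it. Your explicit flagging of the $1/K$ factor is the right reading of what "larger objective value" can honestly mean here — both you and the paper ultimately compare against $\frac{1}{K}\sum_b p_b\|g_b^n\|^2$, the module-wise objective value of the embedded layer-wise point, and this factor is the natural consequence of treating "pick a layer, then a module uniformly within it" as a distribution over modules. Finally, your strictness argument via strict concavity of $p\mapsto p^\top a-(1/\eta)\mathrm{KL}(p,q)$ and the explicit optimizer $p_{bj}^\star\propto\exp(\eta\|g_{b,j}^n\|^2)$ is correct and again sharper than what the paper records.
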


\vspace{-0.4cm}
\subsection{Module-wise Importance Sampling (MISA) Method}
Based on the discussions in Sec.~\ref{sec-block-sample} and \ref{sec-separate-block}, we now introduce \underline{\bf M}odule-wise \underline{\bf I}mportance \underline{\bf SA}mpling ({\bf MISA}). The pseudocode for MISA is provided in Algorithm~\ref{alg:MISA}. MISA is a double-loop algorithm: the outer loop, indexed by $n$, selects a module, while the inner loop (Lines 7-12) updates the parameters of the selected module $T$ times using the Adam optimizer, with each update indexed by $t$. As shown in Line 1, MISA partitions the model into modules rather than layers. Lines 14-15 illustrate that MISA follows the effective sampling strategy described in Proposition~\ref{theorem distribution 1}. By adjusting $\eta$, we can balance exploration across all modules with exploitation of the more important ones. In Line 16, the module weight $\theta^{n,T}_{\tau_n}$ undergoes an additional Adam step to obtain $\theta^{n+1,0}_{\tau_n}$, which facilitates the convergence analysis. In Line 17, we clear the optimizer states to ensure consistent memory efficiency throughout the training process.

\begin{algorithm}[H]
\caption{\textbf{Module-wise Importance Sampling (MISA)}}
\label{alg:MISA}
\begin{algorithmic}[1]
{\color{black}
\REQUIRE $\theta^0$,$N,T,B$, $\eta,\alpha,\delta >0
$, $\beta_1,\beta_2\in(0,1)$ and the sampling block $\tau_n$ at outer loop $n$ \vspace{0.5mm} 
        \STATE Partition the model into $B$ modules (not layers);  \hspace{2.2cm}$\mbox{\color{blue}$\triangleright\,$ \footnotesize{Partition weights into modules};}$ \vspace{0.5mm}
        \STATE Initialize probability weights $P^1=( \frac{1}{B},...,\frac{1}{B})$;
        \STATE Initialize the module gradient estimate $G_b^0 = 0$} for $b\in [B]$ and let $G^0=(G_1^0,\cdots,G_B^0) $;
        \FOR{$n=1,...,N$}
            \STATE Sample $L_n$ modules (labeled with index $\tau_n$) according to $P^n$ such that the ratio of trainable parameter is less than $\delta$. (See Algorithm \ref{alg:sampling details} for more details);  \hspace{1.5cm}$\mbox{\color{blue}$\triangleright\,$ \footnotesize{Importance sampling};}$
            \STATE Initialize $m_{\tau_n}^{n,0} = 0, v_{\tau_n}^{n,0}=0$ 
            \FOR{$t=1,...,T$}
                \STATE Sample a batch of data and calculate block stochastic gradient $g_{\tau_n}^{n,t}$ for selected module $\tau_n$;
                \STATE Update the corresponding first-order and second-order momentum as follows: \vspace{1mm}
                \STATE $m_{\tau_n}^{n,t}\gets \beta _1m_{\tau_n}^{n,t-1}+\left( 1-\beta _1 \right) g_{\tau_n}^{n,t}, \quad $  $v_{\tau_n}^{n,t}\gets \beta _2v_{\tau_n}^{n,t-1}+\left( 1-\beta _2 \right) (g_{\tau_n}^{n,t})^2$ \vspace{1mm}
                \STATE Update the corresponding module as follows:
                \STATE $\theta _{\tau_n}^{n,t}\gets \theta _{\tau_n}^{n,t-1}-\alpha {{m}_{\tau_n}^{n,t}}/{(\sqrt{{v}_{\tau_n}^{n,t}+\varepsilon})}$ 
           \ENDFOR
            \STATE Update $G_{b}^{n}$ for each $b\in [B]$ according to \eqref{eq:solution:xue-ky};  \hspace{2.8cm} $\mbox{\color{blue}$\triangleright\,$ \footnotesize{Track block gradient norm};}$
            \STATE Update $p_{b}^{n+1}\gets \frac{\exp \left( \eta {G_{b}^{n}} \right)}{\sum_{j=1}^B{\exp \left( \eta {G_{j}^{n}} \right)}}$ for each $b\in [B]$;   \hspace{2cm} $\mbox{\color{blue}$\triangleright\,$ \footnotesize{Update sampling probability};}$
            \STATE $\theta_{\tau_{n}}^{n+1,0}\gets\theta_{\tau_{n}}^{n,T}-\alpha\frac{\beta_1}{1-\beta_1}\frac{{m}_{\tau_{n}}^{n,T}}{\sqrt{{v}_{\tau_{n}}^{n,T}+\varepsilon}}$; 
            \STATE $g_{\tau_{n}}, m_{\tau_{n}}, v_{\tau_{n}}\ \gets None $ \hspace{6.2cm} $\mbox{\color{blue}$\triangleright\,$ \footnotesize{Clear optimizer states};}$
        \ENDFOR   
        
\STATE \textbf{Return} $\theta ^N$,$P^N$

\end{algorithmic}
\end{algorithm}

\vspace{-0.5cm}
\begin{table}[H]
  \caption{\small Comparison of existing memory-efficient optimization methods of LLMs. 
  }
  \label{table:comparison}
  \centering
  \resizebox{\textwidth}{!}{
  \renewcommand{\arraystretch}{1.1}
  \begin{threeparttable}
  \begin{tabular}{lcccccc|>{\columncolor{lightblue}}c}
    \toprule
         & LoRA\citep{hu2021lora}& IST\citep{yao2024layerwiseimportancemattersmemory}  & GaLore\citep{zhao2024galore} & OwLore\citep{li2024owloreoutlierweighedlayerwisesampled} & BAdam\citep{luo2024badam} & LISA\citep{pan2024lisa} & \textbf{MISA}  \\
    \midrule
     Full-rank Update& \red{\ding{55}} & \red{\ding{55}} & \red{\ding{55}} & \red{\ding{55}} & \green{\ding{51}} & \green{\ding{51}} & \green{\ding{51}} \\
     Importance-aware &\red{\ding{55}} & \green{\ding{51}} & \red{\ding{55}} & \green{\ding{51}} & \red{\ding{55}} & \green{\ding{51}}\textsuperscript{1} & \green{\ding{51}} \\
     Fine-grained memory &\green{\ding{51}} & \green{\ding{51}} & \green{\ding{51}} & \green{\ding{51}} & \red{\ding{55}} & \red{\ding{55}} & \green{\ding{51}} \\
     Gradient acccumulation & \green{\ding{51}} & \green{\ding{51}} & \red{\ding{55}} & \red{\ding{55}} & \green{\ding{51}} & \green{\ding{51}} & \green{\ding{51}} \\
     w/o SVD & \green{\ding{51}} & \green{\ding{51}} & \red{\ding{55}} & \red{\ding{55}} & \green{\ding{51}} & \green{\ding{51}} & \green{\ding{51}} \\
     Fine-Tuning& \green{\ding{51}} & \green{\ding{51}} & \green{\ding{51}} & \green{\ding{51}} & \green{\ding{51}} & \green{\ding{51}} & \green{\ding{51}} \\
     Pre-Training& \red{\ding{55}} & \red{\ding{55}} & \green{\ding{51}} & \green{\ding{51}} & \green{\ding{51}} & \green{\ding{51}} & \green{\ding{51}} \\
     Convergence guarantee& \red{\ding{55}} & \red{\ding{55}} &\red{\large\frownie} & \red{\ding{55}} & \red{\large\frownie} & \red{\ding{55}} & \green{\large\smiley} \\
    \bottomrule
  \end{tabular}
  \begin{tablenotes}
    \setlength{\itemindent}{-0.45em} 
    \setlength{\labelsep}{0.3em}
    \item \textsuperscript{1}: LISA’s importance-aware strategy focuses only on the embedding layer and LM-head layer, while transformer layers which account for the majority of model parameters still use uniform sampling. 
    \setlength{\itemindent}{-0.8em} 
    \item \red{\frownie}: GaLore’s convergence proof\citep{zhao2024galore} holds only when the gradient exhibits a highly specialized structure, and BAdam’s analysis\citep{luo2024badam} applies exclusively to the full‑gradient regime. Their settings in the theoretical guarantees rely on restrictive assumptions. 
    \item \green{\smiley}: MISA's framework is based on the standard assumptions for analyzing non-convex optimization problems, making it more relevant to practical applications.
  \end{tablenotes}
  \end{threeparttable}
  \renewcommand{\arraystretch}{0.95} 
  }
\end{table}
\vspace{-0.45cm}
Table~\ref{table:comparison} provides a detailed comparison of existing optimization methods of LLMs. Notably, although LISA finds the embedding and LM-head layers to be very important, MISA does not train them in fine-tuning tasks because their parameters are too large (proportional to vocabulary size), and training them would lead to a significant increase in memory consumption.

\subsection{Memory Analysis}
\begin{wrapfigure}{r}{0.45\textwidth}  
  \centering
  \includegraphics[width=0.44\textwidth]{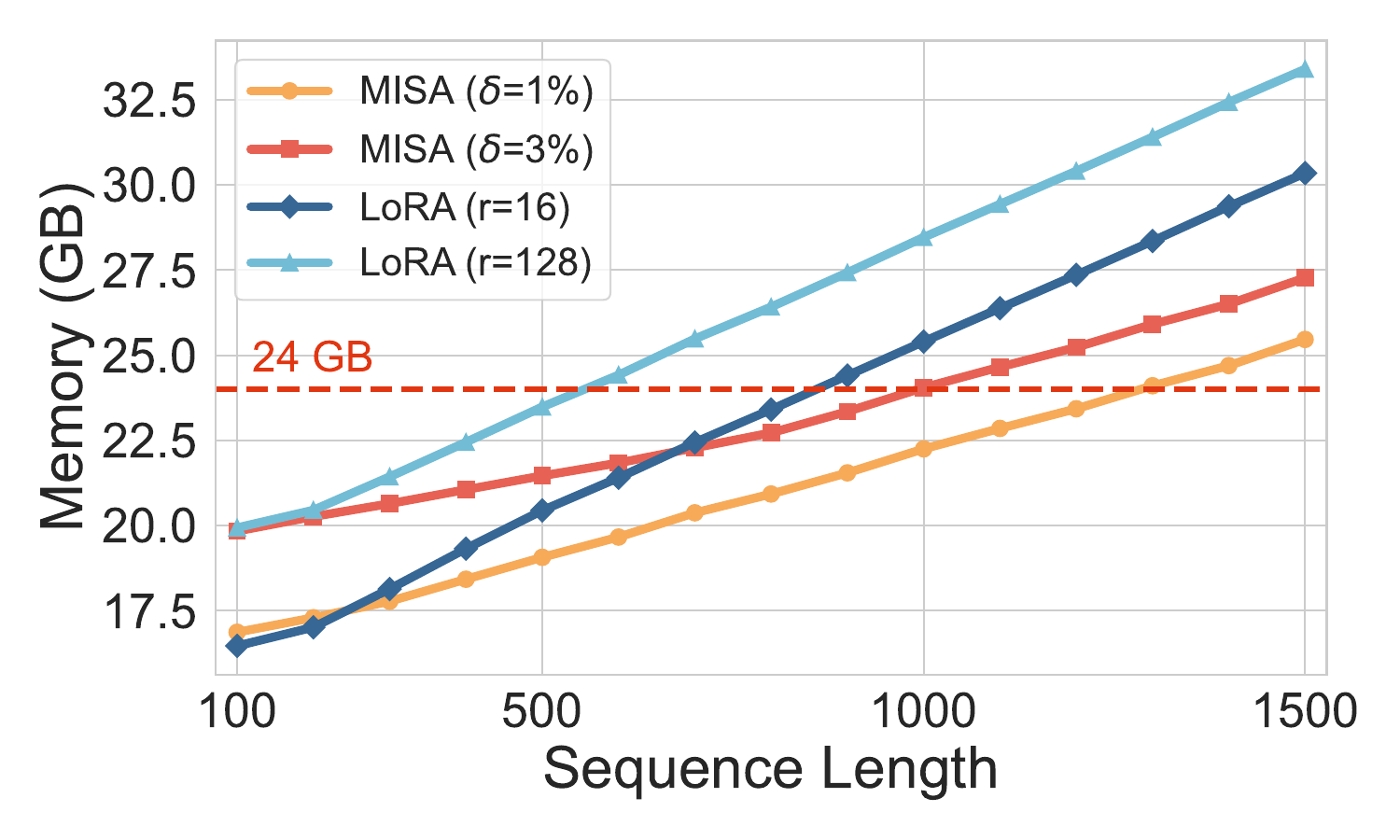}
  \vspace{-3mm}
  \caption{\small Comparison of peak memory consumption using MISA and LoRA fine-tuning on LLaMA3-8B across various sequence length.}
  \label{fig:mem_bs_1}
  \vspace{-1.5em}             
\end{wrapfigure}
We perform detailed memory analysis of MISA and present a comparative overview of memory consumption between block-wise optimization and subspace optimization methods, such as LoRA and Galore. Our findings demonstrate MISA’s significant potential for long-sequence fine-tuning tasks.
As illustrated in Fig.~\ref{fig:mem_bs_1}, for the LLaMA3-8B model, MISA significantly outperforms LoRA in terms of memory efficiency when the sequence length becomes sufficiently large. Notably, when \( r = 16 \), the trainable parameters in LoRA constitute only about 0.5\% of the total parameters. In {Appendix \ref{appendix:memory_analysis}}, we demonstrate that (1) Layer-wise method acheives lower peak memory than subspace method such as LoRA and GaLore in the long-sequence setting. (2) Layer-wise method can update more weights than LoRA in memory-constrained scenarios. (3) MISA achieves greater memory efficiency compared to the layer-wise method when the sampling ratio threshold $\delta$ is small. 

\section{Convergence Analysis}

This section presents the convergence analysis for the proposed MISA method.

\textbf{Limitations in existing analysis.} Conventional BCD methods focus on the convergence of block-wise GD or SGD \cite{wright2015coordinate,xu2013block,hong2017iteration}, but they do not generalize to the Adam optimizer. More recent studies \citep{luo2024badam,zhou2020randomized,pan2024lisa} establish convergence results for layer-wise Adam, but only under restrictive conditions—such as assuming noiseless gradients or allowing just a single update per sampled block. These assumptions, however, diverge significantly from practical LLM training, where Adam is used with stochastic gradients and multiple updates are performed for each sampled block.

\textbf{New challenges.} Conventional BCD methods rely on the assumption that block gradients are unbiased estimators of the full gradient, an assumption that holds only when a single update is performed per sampled block—due to the preservation of unbiasedness through random block selection. However, our repeated block updating strategy introduces two key challenges: (1) Biased Gradient Estimates. Performing multiple updates on the same block breaks the randomness of block selection assumed in conventional BCD, resulting in biased estimates of the full gradient. (2) Bias-Noise Interplay. The introduced gradient bias interacts with inherent gradient noise, potentially amplifying estimation errors that would otherwise be mitigated by unbiased estimators.

\textbf{Analysis highlights.} Our work tackles these challenges via three innovations: (1) Bias Propagation Analysis. We derive novel recursive relations (Corollary \ref{K inner iterations}) that characterize how gradient bias and stochastic noise accumulate over successive block updates. (2) Bridging Block and Full Gradients. We establish fundamental connections between block-level gradients and the full gradient (Corollary~\ref{gradient norm and total gradient}), despite the loss of unbiasedness. The additional Adam step in Line 15 of Algorithm~\ref{alg:MISA} plays a critical role in ensuring a smooth transition from local block updates to the optimization of global variables. (3) Convergence Framework. We develop new analytical tools to jointly control gradient bias and amplified stochastic noise, resulting in rigorous convergence guarantees (Theorem \ref{thm:main convergence}).

\textbf{Main results.} We theoretically justify the convergence of  MISA as follows.
\begin{theorem}[Informal]\label{thm:main convergence}
Assume the loss function is \(L\)-smooth, the stochastic gradient is unbiased and bounded, the gradient variance is upper bounded (see Appendix~\ref{app-theory} for detailed assumptions). With an appropriate learning rate $\alpha$, MISA achieves the following convergence rate (proof is in Appendix~\ref{app-theory}):
\[
\frac{1}{N}\sum_{n=1}^N{\mathbb{E}[\|\nabla f(\theta^{n,0})\|^2] } = \mathcal{O}\left(\frac{1}{\sqrt{NT}} + \frac{1}{N}\right)
\]
where $N$ and $T$ denote the numbers of outer and inner-loop iterations, respectively.
\end{theorem}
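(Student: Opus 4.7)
The plan is to follow a standard descent-lemma framework but adapted to the double-loop module-wise structure. I would start from $L$-smoothness applied to the transition $\theta^{n,0} \to \theta^{n+1,0}$. Since Lines 7--16 of Algorithm~\ref{alg:MISA} only modify the block indexed by $\tau_n$, the per-step descent inequality collapses to
\begin{equation*}
f(\theta^{n+1,0}) \le f(\theta^{n,0}) + \langle [\nabla f(\theta^{n,0})]_{\tau_n}, \Delta^n_{\tau_n} \rangle + \tfrac{L}{2}\|\Delta^n_{\tau_n}\|^2,
\end{equation*}
where $\Delta^n_{\tau_n} := \theta^{n+1,0}_{\tau_n} - \theta^{n,0}_{\tau_n}$. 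The heart of the analysis is to relate $\Delta^n_{\tau_n}$ to $[\nabla f(\theta^{n,0})]_{\tau_n}$, then take expectation over $\tau_n\sim p^n$, and finally telescope over $n$.

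\textbf{Inner-loop analysis (handling Adam with multiple updates).} The crucial algebraic step is the momentum telescoping identity. With $m^{n,0}=0$ and $m^{n,t}=\beta_1 m^{n,t-1}+(1-\beta_1)g^{n,t}_{\tau_n}$, a direct summation yields $\sum_{t=1}^T m^{n,t} + \tfrac{\beta_1}{1-\beta_1}m^{n,T}=\sum_{t=1}^T g^{n,t}_{\tau_n}$. This identity explains the role of Line~16: it converts the $T$ stochastic Adam updates plus the extra tail correction into the clean form
\begin{equation*}
\Delta^n_{\tau_n} = -\alpha \sum_{t=1}^T \frac{g^{n,t}_{\tau_n}}{\sqrt{v^{n,t}_{\tau_n}+\varepsilon}} + R^n,
\end{equation*}
where $R^n$ is a residual arising from the time-variation of the adaptive denominator. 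The first subtask would be to bound $R^n$ by showing $\sqrt{v^{n,t}_{\tau_n}+\varepsilon}$ stays within a constant factor of $\sqrt{v^{n,0}_{\tau_n}+\varepsilon}$ across the $T$ inner steps, which follows from the bounded-gradient assumption and the contraction of the EMA. Next, I would decompose each stochastic gradient as $g^{n,t}_{\tau_n}=[\nabla f(\theta^{n,0})]_{\tau_n}+b^{n,t}+\zeta^{n,t}$, where the bias $b^{n,t}=[\nabla f(\theta^{n,t})-\nabla f(\theta^{n,0})]_{\tau_n}$ is controlled via $L$-smoothness and the inner-iterate drift $\|\theta^{n,t}_{\tau_n}-\theta^{n,0}_{\tau_n}\|=\mathcal{O}(\alpha t)$, while $\zeta^{n,t}$ is the zero-mean noise with bounded variance. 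This decomposition produces the recursive bounds alluded to in Corollary~\ref{K inner iterations}; substituting into the descent inequality, the cross term contributes roughly $-\alpha T\|[\nabla f(\theta^{n,0})]_{\tau_n}\|^2/C$ plus controllable $\mathcal{O}(\alpha^2 T^2)$ bias terms and an $\mathcal{O}(\alpha^2 T\sigma^2)$ noise term.

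\textbf{Outer-loop expectation and telescoping.} Taking expectation over $\tau_n$ sampled from $p^n$, the leading negative term becomes $-(\alpha T/C)\sum_b p^n_b\|[\nabla f(\theta^{n,0})]_b\|^2$. To convert this into a bound on $\|\nabla f(\theta^{n,0})\|^2$, I would use a uniform lower bound $p^n_b \ge p_{\min}>0$ (guaranteed because $p^n_b \propto \exp(\eta G^n_b)$ with bounded gradients, or by mixing a small uniform component as in the exploration/exploitation discussion of Section~\ref{sec-block-sample}); this is the module-wise analogue of the bridge established in Corollary~\ref{gradient norm and total gradient}. Summing over $n=1,\dots,N$, telescoping the $f(\theta^{n,0})-f(\theta^{n+1,0})$ gaps, dividing by $\alpha T N$, and choosing $\alpha=\Theta(1/\sqrt{NT})$ balances the linear descent against the quadratic noise and bias terms, producing the claimed rate $\mathcal{O}(1/\sqrt{NT}+1/N)$, where the $1/N$ term absorbs the $\mathcal{O}(\alpha^2 T^2)/(\alpha T)=\mathcal{O}(\alpha T)$ bias contribution.

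\textbf{Main obstacle.} The hardest step will be step two: controlling the residual $R^n$ and the bias--noise interplay inside the adaptive ratio $g^{n,t}_{\tau_n}/\sqrt{v^{n,t}_{\tau_n}+\varepsilon}$. Because $v^{n,t}_{\tau_n}$ is itself a function of the same stochastic gradients, one cannot simply factor expectations; I expect to need a conditioning argument (freezing $v^{n,t}_{\tau_n}$ against its predictable EMA proxy), together with a careful Lipschitz bound on $x\mapsto x/\sqrt{v+\varepsilon}$, to separate the signal from the amplified noise. Once this inner-loop bias propagation is tamed, the outer-loop importance-sampling averaging and the telescoping argument are comparatively routine.
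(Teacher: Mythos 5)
Your high-level plan overlaps with the paper on several points — the momentum telescoping identity $\sum_{t=1}^T m^{n,t}+\tfrac{\beta_1}{1-\beta_1}m^{n,T}=\sum_{t=1}^T g_{\tau_n}^{n,t}$ explaining Line~16, the signal/bias/noise decomposition of $g^{n,t}_{\tau_n}$, the uniform lower bound on $p^n_b$ from the softmax with bounded $G^n_b$, and the $\alpha\sim 1/\sqrt{NT}$ balancing — but the technical route is genuinely different in two respects, and the second one is where your sketch has a gap.

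First, the paper does not apply $L$-smoothness once over the whole block epoch; it defines the shifted proxy iterate $x^{n,t}:=\theta^{n,t}_{\tau_n}-\tfrac{\alpha\beta_1}{1-\beta_1}\Gamma^{n,t}_{\tau_n}m^{n,t}_{\tau_n}$, derives the exact one-step recursion $x^{n,t}=x^{n,t-1}-\alpha\Gamma^{n,t}g^{n,t}_{\tau_n}+\alpha\,\Delta\Gamma^{n,t}\tfrac{\beta_1}{1-\beta_1}m^{n,t-1}$, and applies the descent lemma at every inner step $t$. This makes the curvature remainder accumulate linearly in $T$ (a total $\mathcal{O}(T\alpha^2 L\sigma^2/\varepsilon)$ per epoch) rather than the $\mathcal{O}(L\alpha^2T^2G^2)$ you get from one coarse application to $\theta^{n,0}\to\theta^{n+1,0}$, and is the reason the paper's Remark can claim convergence for any $T$ without a smallness condition of the form $\alpha T\lesssim 1/L$. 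Your version would need that extra condition (and your last sentence mislabels the residual: $\mathcal{O}(\alpha T)=\mathcal{O}(\sqrt{T/N})$ is not $\mathcal{O}(1/N)$; the paper instead gets an $\mathcal{O}(\alpha^2T^2)=\mathcal{O}(T/N)$ term that it also loosely writes as $\mathcal{O}(1/N)$).

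Second, and more fundamentally, the paper does not attack your ``main obstacle'' with a constant-factor bound on $\sqrt{v^{n,t}+\varepsilon}$ and a Lipschitz estimate of $x\mapsto x/\sqrt{v+\varepsilon}$. Instead it analyzes a slightly modified algorithm (Algorithm~\ref{alg:MISA analytical view}): an AMSGrad-type normalization $\tilde{v}^{n,t}=\max(v^{n,t},\|\tilde v^{n,t-1}\|_{\max})$ together with second-order-momentum inheritance $v^{n,0}_{\tau_n}=\|\tilde v^{n-1,T}_{\tau_{n-1}}\|_{\max}I$ at block switches. This forces the preconditioner $\Gamma^{n,t}$ to be monotonically nonincreasing across \emph{all} inner steps and \emph{all} outer epochs, so that $\Delta\Gamma^{n,t}\succeq 0$ and the double sum $\sum_{n}\sum_t\|\Delta\Gamma^{n,t}\|$ telescopes to $\mathcal{O}(1/\sqrt{\varepsilon})$ uniformly in $N,T$ (Lemma~\ref{delta gemma}). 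Your ``constant factor'' observation ($\varepsilon\le v+\varepsilon\le G^2+\varepsilon$) is true but gives per-step residuals that do not telescope, so the total error grows like $NT$ rather than staying $\mathcal{O}(1)$; you would need either the AMSGrad modification, or a genuinely different conditioning/variance-reduction argument to close this. In short: your outer-loop machinery (importance-sampling expectation, telescoping, rate balancing) matches the paper's, but your inner-loop plan attempts a harder problem (the unmodified Adam preconditioner, one coarse descent step per epoch) without the device that makes the paper's inner-loop argument go through.
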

\begin{remark}
According to Theorem~\ref{thm:main convergence}, MISA converges for any number $T$ of inner local block updates, which constitutes a key advantage of our analysis. In contrast, the analyses in \cite{wright2015coordinate,xu2013block,zhou2020randomized} are limited to the case $T = 1$, while the analysis in \cite{razaviyayn2013unified,chen2025memory} requires $T$ to be sufficiently large to ensure each block is adequately updated.
\end{remark}

\vspace{-0.15cm}
\section{Experiments}
\label{sec-exp}
This section evaluates the performance of MISA across various fine-tuning and pre-training tasks. All our experiments are conducted on RTX 4090 24GB. We also conducted detailed ablation experiments, as presented in Appendix~\ref{appendix:ablation}.

\subsection{Commonsense Reasoning}
{\textbf{Settings.} 
To validate the efficiency of \method, we fine-tuned the LLaMA3-8B \citep{dubey2024llama} and Qwen2.5-7B \citep{qwen2} models on the Commonsense Reasoning tasks. The dataset consists of eight subsets: BoolQ \citep{clark2019boolq}, PIQA \citep{bisk2020piqa}, SIQA \citep{sap2019socialiqa}, HellaSwag \citep{zellers2019hellaswag}, WinoGrande \citep{sakaguchi2021winogrande}, ARC \citep{clark2018think}, and OBQA \citep{mihaylov2018can}. Following the settings in \citep{hu2023llm}, we combined the training data from all eight tasks into a single training set for fine-tuning and then evaluated each model separately on the eight test sets. The baselines included LoRA \citep{hu2021lora}, DoRA \citep{liu2024dora}, and BCD-based methods, namely LISA \citep{pan2024lisa} and BAdam \citep{luo2024badam}. 
We also reported GPU memory consumption for each method to ensure a thorough comparison. No additional memory-saving techniques, such as checkpointing \citep{chen2016trainingdeepnetssublinear} or flash attention \citep{dao2023flashattention2}, were employed.
}

\begin{table}[H]
\centering
\caption{
\small This table compares fine-tuning methods on Qwen2.5-7B models across eight commonsense reasoning tasks. "Hella." refers to HellaSwag \citep{zellers2019hellaswag}, and "Wino." to Winogrande \citep{sakaguchi2021winogrande}. The first row shows ChatGPT results from Zero-shot CoT \citep{wei2022chain} using the GPT-3.5-turbo API. \textbf{Refer to Table~\ref{table:experiment_illustration} for results of LLaMA3-8B.}}
\label{table:commonsense_reasoning_all}
\resizebox{\textwidth}{!}{
\begin{tabular}{llcccccccccc}
\toprule
\textbf{Model} & \textbf{Method} & \textbf{Mem.(GB)} & \textbf{BoolQ} & \textbf{PIQA} & \textbf{SIQA} & \textbf{Hella.} & \textbf{Wino.} & \textbf{ARC-e} & \textbf{ARC-c} & \textbf{OBQA} & \textbf{Avg.$\uparrow$}\\
\midrule
\multirow{6}{*}{Qwen2.5-7B} 
    &\gray{FT}&\gray{140.5}&\gray{75.3}&\gray{91}&\gray{81.4}&\gray{96.2}&\gray{91.5}&\gray{96.1}&\gray{89.1}&\gray{93.4}&\gray{89.3} \\
&LoRA&34.2&75.3&90.1&80.8&\textbf{96.2}&86.7&\textbf{96.3}&89&91.4&88.2 \\
&DoRA&55.6&74.9&90.4&81.2&96.1&87.1&96.1&88.9&91.9&88.3 \\
&LISA&57.9&75&90.3&80.6&95.2&86.2&95.8&89.1&89.8&87.8\\
&BAdam&33.7&74.5&90.3&80.8&95.5&86.1&96.1&\textbf{89.6}&91.8&88.1 \\
&\cellcolor{lightblue}\textbf{\method($\bm{\delta = 1\%}$)}
&\cellcolor{lightblue}\textbf{30.3}&\cellcolor{lightblue}74&\cellcolor{lightblue}90.1&\cellcolor{lightblue}\textbf{82.4}&\cellcolor{lightblue}95.5&\cellcolor{lightblue}87.1&\cellcolor{lightblue}96.1&\cellcolor{lightblue}89.3&\cellcolor{lightblue}90.2&\cellcolor{lightblue}88.1 \\
&\cellcolor{lightblue}\textbf{\method($\bm{\delta = 3\%}$)}&\cellcolor{lightblue}34&\cellcolor{lightblue}\textbf{75.4}&\cellcolor{lightblue}\textbf{90.5}&\cellcolor{lightblue}81.6&\cellcolor{lightblue}95.7&\cellcolor{lightblue}\textbf{87.2}&\cellcolor{lightblue}\textbf{96.3}&\cellcolor{lightblue}89.5&\cellcolor{lightblue}\textbf{92}&\cellcolor{lightblue}\textbf{88.5} \\
\bottomrule
\end{tabular}}
\end{table}

{\textbf{Results.} 
Table~\ref{table:commonsense_reasoning_all} presents the results for the commonsense tasks. Notably, DoRA's additional memory consumption arises from activations, while LISA incurs extra memory usage from fine-tuning both the embedding layer and the language modeling head layer. \method~outperforms all baseline methods on both the LLaMA-3-8B and Qwen2.5-7B models. Furthermore, with the trainable parameter ratio set to 1\%, MISA achieves performance comparable to LoRA while saving approximately 10\% of memory. In particular, MISA demonstrates significant improvements over BAdam and LISA in the Qwen2.5-7B model.
}

\vspace{-0.1cm}
\subsection{Math Reasoning}
{\textbf{Settings.} 
To evaluate \method~on math reasoning tasks, we tested LLaMA3-8B and Qwen2.5-7B on four arithmetic benchmarks: GSM8K \citep{cobbe2021gsm8k}, SVAMP \citep{patel2021nlp}, AQUA \citep{ling2017program}, and MAWPS \citep{koncel2016mawps}, following \citep{hu2023llm} for dataset settings. The models were fine-tuned on MATH10K \citep{hu2023llm}, which combines training data from AQUA, GSM8K, and MAWPS, incorporating LM-generated chain-of-thought \citep{wei2022chain} steps.

{\textbf{Results.} 
Table~\ref{table:math_reasoning} presents the performance of various fine-tuning methods on LLaMA3-8B and Qwen2.5-7B. \method~outperforms all baselines on both models. Notably, \method~achieves a significant performance boost over LoRA and BAdam, which have similar GPU memory consumption. Compared to DoRA, MISA attains comparable performance with greater memory efficiency.
}

\begin{table}[H]
  \small
  \centering
  \caption{\small Comparison of fine-tuning methods on LLaMA3-8B and Qwen2.5-7B across four arithmetic reasoning tasks. The left table presents results for LLaMA3-8B, while the right table displays results for Qwen2.5-7B. The unit of memory consumption is GB.}
  \label{table:math_reasoning}
  \setlength{\tabcolsep}{3px}     

    \resizebox{\textwidth}{!}{
    \begin{tabular}{l|cccccc|cccccc}
      \toprule
      \textbf{Method} &\textbf{Mem.}  & \textbf{GSM8K} & \textbf{SVAMP} & \textbf{AQuA} & \textbf{MAWPS} & \textbf{Avg.$\uparrow$} &\textbf{Mem.}& \textbf{GSM8K} & \textbf{SVAMP} & \textbf{AQuA} & \textbf{MAWPS} & \textbf{Avg.$\uparrow$}\\
      \midrule
      &\multicolumn{6}{c|}{LLaMA3-8B}&\multicolumn{6}{c}{Qwen2.5-7B} \\
      \midrule
      LoRA    &35.6& 70.1 & 78.1   & 45.7 &\textbf{93.7} &71.9 &34.3 &80.6&86.6&65.7&92.4&81.3    \\
      DoRA    &54& 70.9 & 78.4 & 50.3 &93.4 & 73.3 &54.8  &80.1&86.7&68.1&92.5&81.9\\
      LISA    &54.2& 68   & 76.7 & 49.7 &88.3 & 70.7   &55.6&77.6&86.8&\textbf{70.1}&89.8&81.1\\
      BAdam   &33.5& 69.5 & 77.6 & 47.6 &93.3 & 72 &33.6  &80.2&85.9&65.1&92.4&80.9\\
      \cellcolor{lightblue}\textbf{\method($\bm{\delta = 1\%}$)} &\cellcolor{lightblue}\textbf{30.4}& \cellcolor{lightblue}68.5 & \cellcolor{lightblue}77.3 & \cellcolor{lightblue}49.3 &\cellcolor{lightblue}90.1 & \cellcolor{lightblue}71.3 & \cellcolor{lightblue}\textbf{29.8}&\cellcolor{lightblue}77.5&\cellcolor{lightblue}85&\cellcolor{lightblue}65&\cellcolor{lightblue}89.5&\cellcolor{lightblue}79.3\\
      \cellcolor{lightblue}\textbf{\method($\bm{\delta = 3\%}$)}&\cellcolor{lightblue}34.1 & \cellcolor{lightblue}\textbf{71.3} & \cellcolor{lightblue}\textbf{78.5} & \cellcolor{lightblue}\textbf{51.2}   &\cellcolor{lightblue}93.3 & \cellcolor{lightblue}\textbf{73.6} &\cellcolor{lightblue}33.6 &\cellcolor{lightblue}\textbf{81} & \cellcolor{lightblue}\textbf{88.1} & \cellcolor{lightblue}66.1 & \cellcolor{lightblue}\textbf{92.9} & \cellcolor{lightblue}\textbf{82} \\
      \bottomrule
    \end{tabular}}
\end{table}

\subsection{Instruction Tuning}
To demonstrate \method's efficiency in instruction-following fine-tuning, we fine-tuned TinyLLaMA\citep{zhang2024tinyllama}, LLaMA2-7B\citep{touvron2023llama2openfoundation} and Mistral-7B\citep{jiang2023mistral} on the Alpaca GPT-4 dataset, which consists of 52K samples generated by GPT-4 based on inputs from the Alpaca dataset \citep{alpaca}. Model performance was evaluated on MMLU\citep{hendrycks2020measuring}, MMLU-pro\citep{wang2024mmlu} and MT-Bench \citep{zheng2023judging}, As shown in Table~\ref{table:experiment_MMLUs}, MISA outperforms all baselines in most evaluation benchmarks, demonstrating its robustness and efficiency for instruction fine-tuning across different models. In addition, we compared the validation loss of MISA with other BCD methods, BAdam and LISA. As shown in Figure~\ref{fig:3 loss lines}, the x-axis represents training time in minutes. MISA and LISA have similar total training time, while BAdam is faster than both. Despite this, MISA exhibits much better convergence compared to LISA and BAdam.

\begin{table}[htb]
\centering
\caption{Comparison of different methods on MMLU, MMLU-pro and MT-Bench. Results of LISA, GaLore, and LoRA in MMLU and MT-Bench are cited from LISA\citep{pan2024lisa}, while others are reproduced by us. MT-Bench evaluation was conducted using GPT-4. Memory was tested with batch size 2.
}
\label{table:experiment_MMLUs}
\resizebox{0.75\textwidth}{!}{
\vspace{-0.1cm}
\begin{tabular}{llcccc}
\toprule
\textbf{Model} & \textbf{Method} & \textbf{Mem.(GB)} & \textbf{MMLU (5-shot)} & \textbf{MMLU-pro (5-shot)} & \textbf{MT-Bench}\\
\midrule
\multirow{5}{*}{TinyLLaMA} 
&LoRA&5.00&25.81&11.32&1.90\\
&GaLore&5.00&25.21&10.96&2.61\\
&LISA&5.92&\textbf{26.02}&11.59&2.57 \\
&BAdam&4.49&25.27&11.44&2.42 \\
&\cellcolor{lightblue}\textbf{MISA}&\cellcolor{lightblue}4.36&\cellcolor{lightblue}25.40&\cellcolor{lightblue}\textbf{11.65}&\cellcolor{lightblue}\textbf{2.73} \\
\midrule
\multirow{5}{*}{LLaMA2-7B} 
&LoRA&20.48&45.50&20.57&4.45 \\
&GaLore&19.25&45.56&20.19&4.63\\
&LISA&22.79&46.21&\textbf{20.85}&4.94 \\
&BAdam&19.83&45.61&20.68&4.81 \\
&\cellcolor{lightblue}\textbf{MISA}&\cellcolor{lightblue}19.72&\cellcolor{lightblue}\textbf{46.27}&\cellcolor{lightblue}20.69&\cellcolor{lightblue}\textbf{5.13} \\
\midrule
\multirow{5}{*}{Mistral-7B} 
&LoRA&21.96&61.78&32.96&4.41 \\
&GaLore&20.87&57.87&31.87&4.36\\
&LISA&26.90&62.09&32.83&4.85 \\
&BAdam&21.21&62.11&32.79&5.03 \\
&\cellcolor{lightblue}\textbf{MISA}&\cellcolor{lightblue}21.18&\cellcolor{lightblue}\textbf{62.90}&\cellcolor{lightblue}\textbf{33.44}&\cellcolor{lightblue}\textbf{5.19} \\
\bottomrule
\end{tabular}}
\end{table}

\begin{figure}[htb]
    \centering
    \subfigure{
        \includegraphics[width=0.31\textwidth]{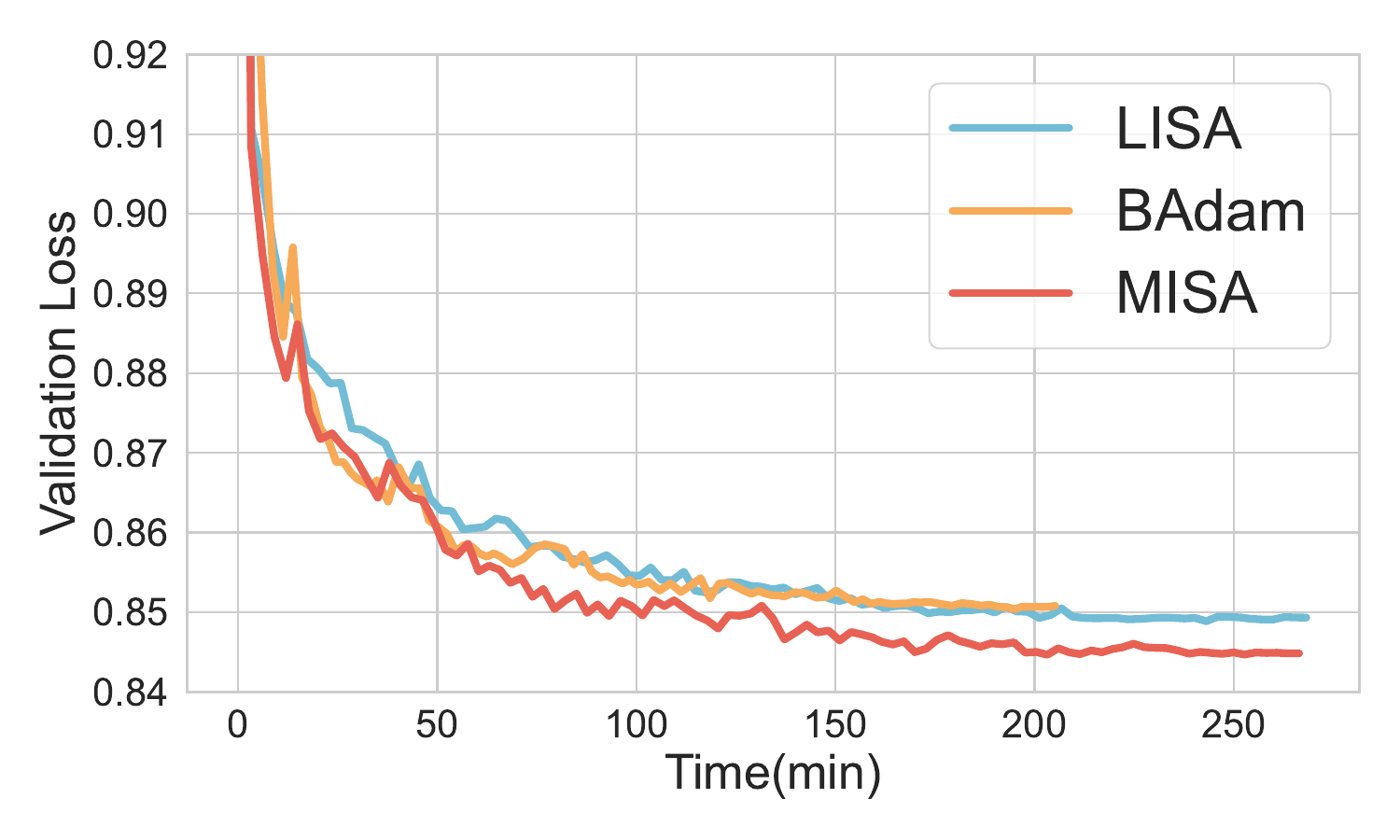}
    }
    \subfigure{
        \includegraphics[width=0.31\textwidth]{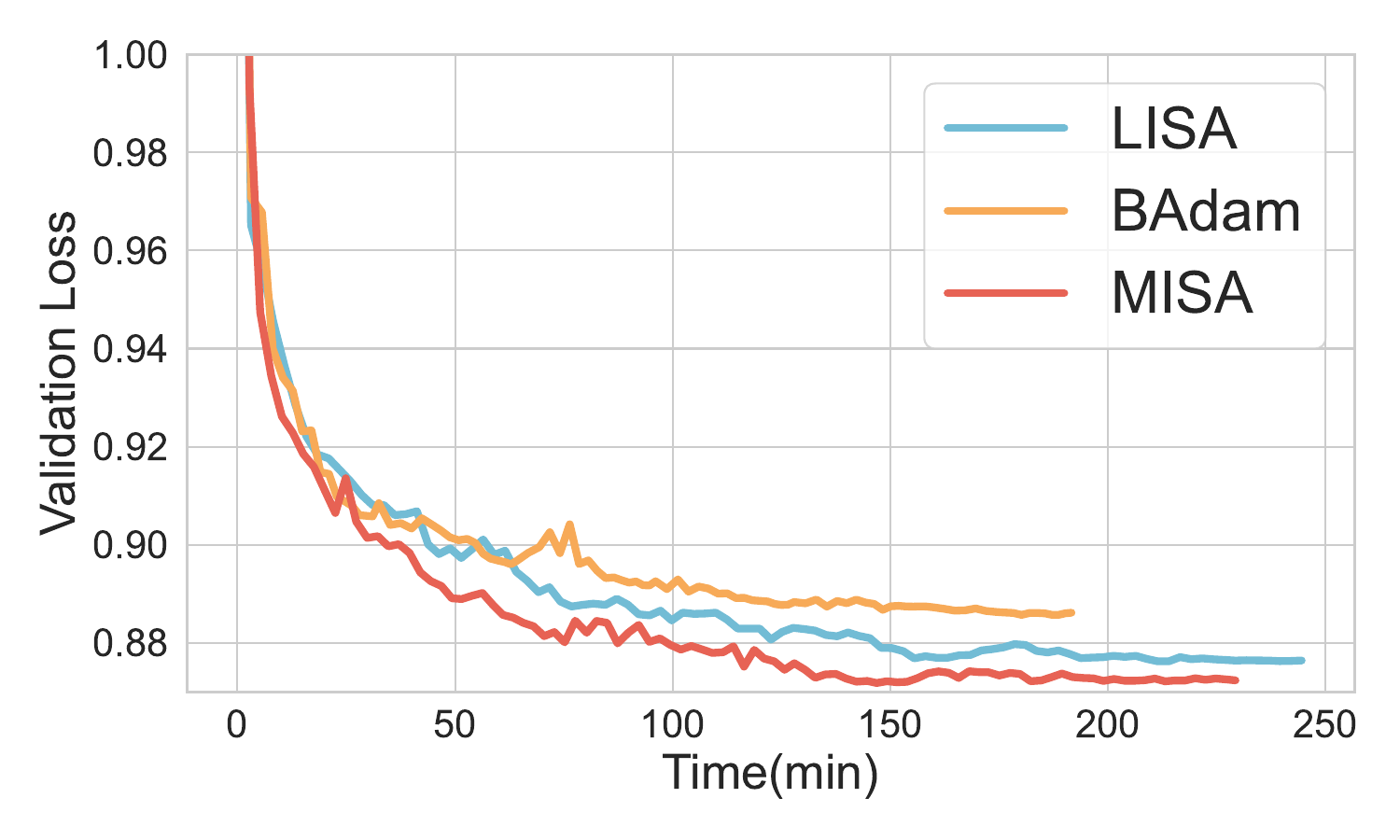}
    }
    \subfigure{
        \includegraphics[width=0.31\textwidth]{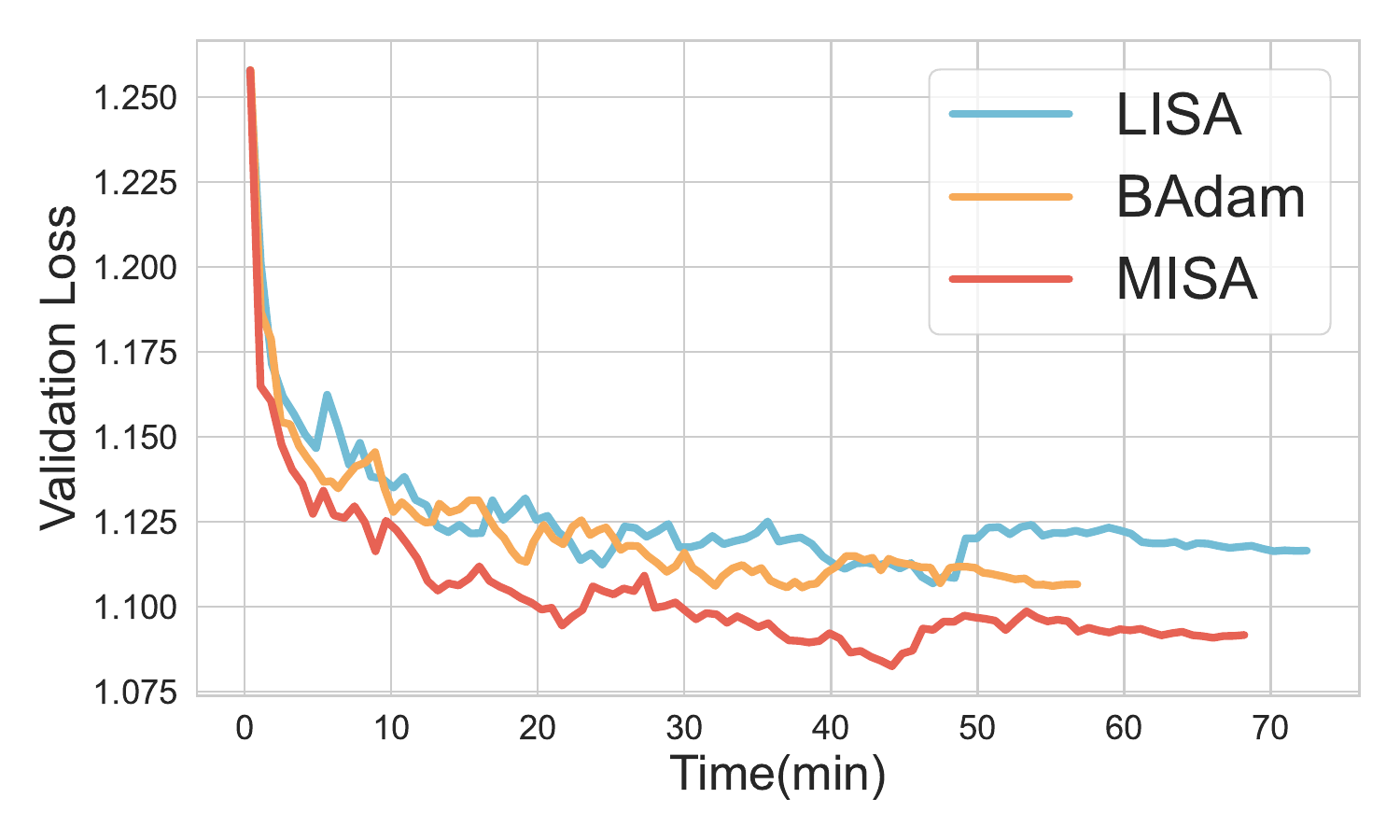}
    }
    \vspace{-9px}
    \caption{Validation loss of LISA, BAdam, and MISA across three epochs of fine-tuning Mistral-7B (left), LLaMA2-7B (middle) and TinyLLaMA (right) on the Alpaca-GPT4 dataset. The x-axis represents training time (minutes).}
    \label{fig:3 loss lines}
\end{figure}


\vspace{-0.2cm}
\subsection{Pre-training}
\vspace{-0.1cm}
We evaluate \method~for pre-training LLaMA models \citep{touvron2023llama2openfoundation} on the C4 dataset \citep{raffel2020exploring}. 
We pre-trained the 130M and 350M variant of LLaMA2 \citep{lialin2023relorahighranktraininglowrank}. Unlike fine-tuning, where the embedding and output layers remain frozen, we trained them using the standard Adam optimizer. The number of training tokens is 2.75B. We report results for Adam, GaLore, and MISA. 
{Table~\ref{table:pretrain} reports the validation perplexity after 52K training steps. With a high-rank subspace (i.e., \( r=256 \), \( \delta=25\% \)), MISA outperforms GaLore and Adam on the 130M model. While on the 350M model, MISA outperforms GaLore and is slightly behind Adam. 
As shown in Fig.~\ref{fig:pretrain}, MISA converges better than GaLore, and is similar with Adam's performance in the later training stage. 
MISA’s strong performance indicates that it can be viewed as a regularization of Adam, and many gradients are redundant during full-parameter training.}
}
\begin{table}[H]
\small
\centering
\vspace{-0.1cm}
\caption{Validation perplexity of pre-training LLaMA 350M model on C4 dataset. 
}
\label{table:pretrain}
\resizebox{0.7\textwidth}{!}{
\vspace{-0.2cm}
\begin{tabular}{llcp{1cm}p{1cm}p{1cm}p{1cm}}
\toprule
 &&\multirow{2}{*}{\textbf{Adam}} & \multicolumn{2}{c}{\textbf{GaLore}} & \multicolumn{2}{c}{\textbf{\method}}\\
\cmidrule(lr){4-5} \cmidrule(lr){6-7}& & & \centering r=32 & \centering r=256 &\centering $\delta$=3\%&\cellcolor{lightblue}\centering$\delta$=25\%\cr
\midrule 
\multirow{2}{*}{\textbf{LLaMA 130M}} &Perplexity&\centering24.63&\centering44.88&\centering28.27&\centering36.07& \cellcolor{lightblue}\centering23.81 \cr
&Memory(GB)&\centering6.03&\centering5.73&\centering5.85&\centering5.09& \cellcolor{lightblue}\centering5.87 \cr
\midrule 
\multirow{2}{*}{\textbf{LLaMA 350M}} &Perplexity&\centering21.30&\centering39.43&\centering24.34&\centering30.62& \cellcolor{lightblue}\centering22.11 \cr
&Memory(GB)&\centering13.58&\centering11.81&\centering12.14&\centering9.99& \cellcolor{lightblue}\centering11.63 \cr
\bottomrule
\end{tabular}}
\vspace{-0.6cm}
\end{table}

\begin{figure}[htb]
    \centering
    \subfigure{
        \includegraphics[width=0.4\textwidth]{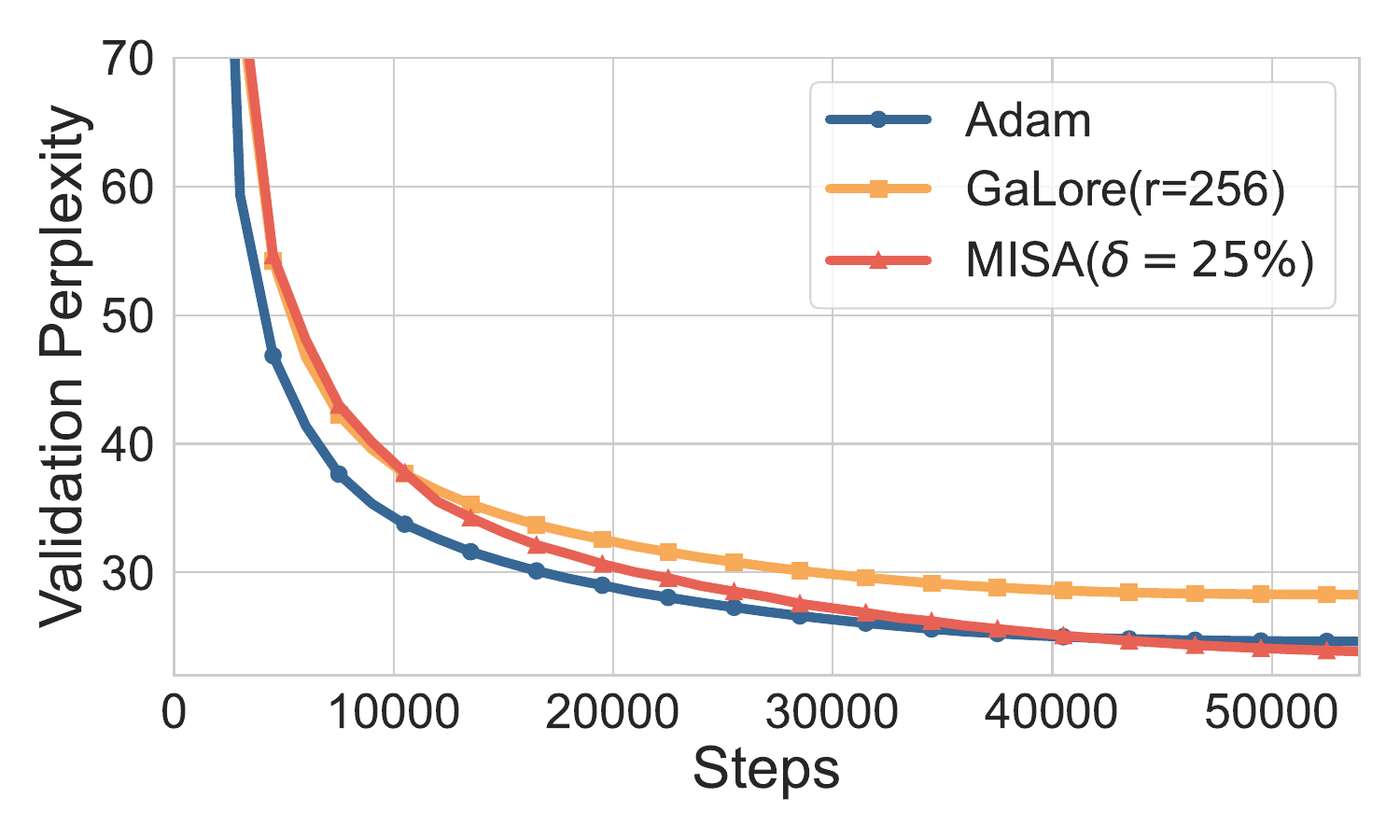}
    }
    \subfigure{
        \includegraphics[width=0.4\textwidth]{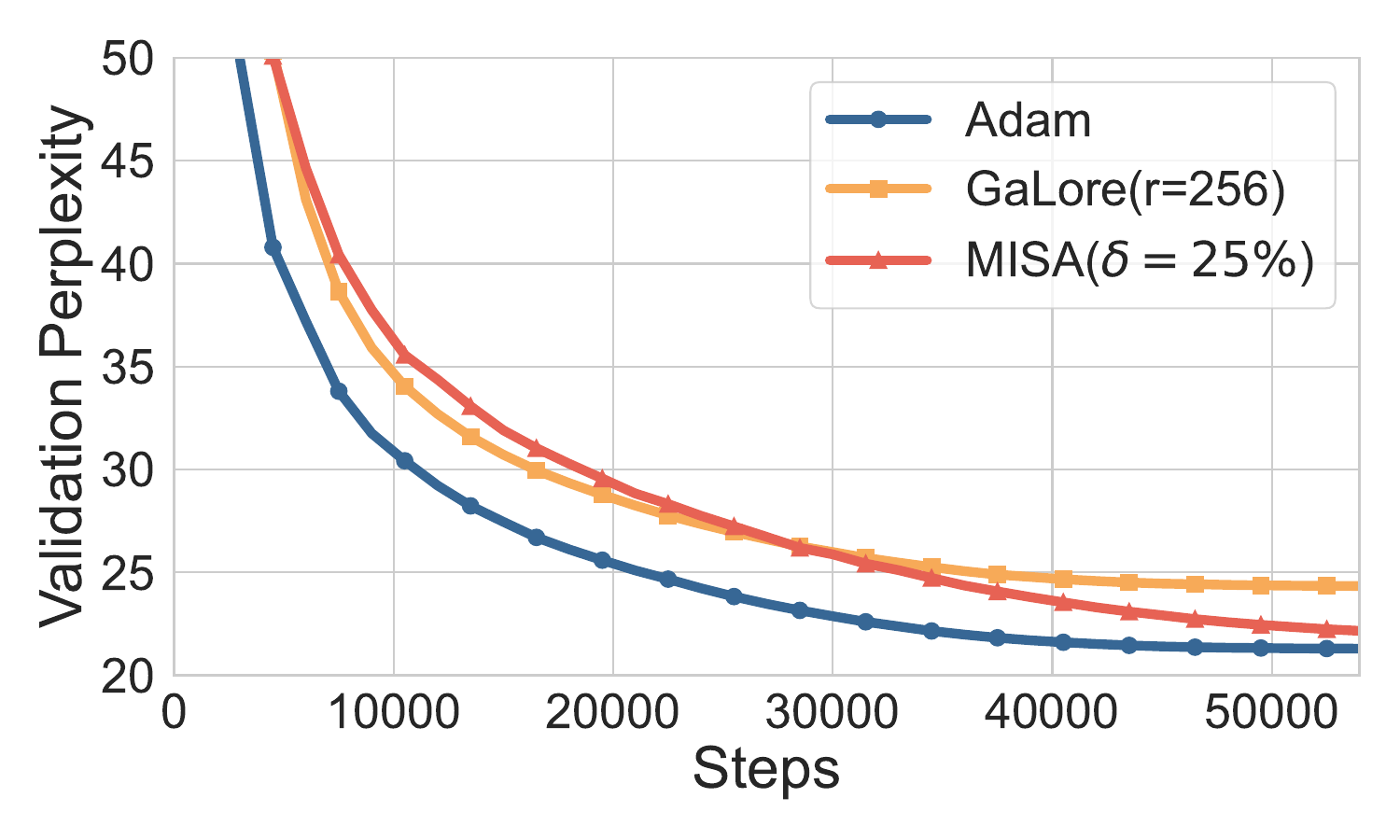}
    }
    \vspace{-0.45cm}
    \caption{\small{Pre-training dynamics for LLaMA 130M (left) and LLaMA 350M (right) on the C4 dataset. }}
    \label{fig:pretrain}
\end{figure}
\vspace{-0.1cm}
\section{Conclusion and Limitation}\label{sec: conclusion}
\vspace{-0.05cm}

\textbf{Conclusions.} In this paper, we propose MISA, a novel memory-efficient training method. MISA decomposes each layer into smaller modules and employs an importance sampling mechanism for module training. We provide a detailed memory analysis demonstrating that MISA achieves high memory efficiency in long-sequence fine-tuning. Additionally, we show that MISA attains a convergence rate of \(\mathcal{O}(1/\sqrt{K})\), where $K$ is the total number of updates.

\textbf{Limitations.} Due to resource constraints, we pre-trained MISA at a small scale, and whether MISA can scale to large-scale LLMs pre-training (e.g., 7B, 70B, or larger models) remains to be validated. Additionally, MISA has only been verified on text-modal Transformer-based LLMs (e.g., LLaMA series, Qwen2.5 series, Mistral series) and has not been extended to multi-modal models or non-Transformer architectures, so its adaptability across different modalities and model structures needs further exploration.

\section{Acknowledgements} 
Kun Yuan is supported by the National Natural Science Foundation of China (NSFC) under Grants W2441021, 92370121 and 12301392.
Tao Yao is supported by the National Natural Science Foundation of China (NSFC) under Grants W2441021, 72371172, 72342023, and 71929101.


{
\small

\bibliography{reference}

@article{hu2023llm,
  title={LLM-Adapters: An Adapter Family for Parameter-Efficient Fine-Tuning of Large Language Models},
  author={Hu, Zhiqiang and Lan, Yihuai and Wang, Lei and Xu, Wanyu and Lim, Ee-Peng and Lee, Roy Ka-Wei and Bing, Lidong and Poria, Soujanya},
  journal={arXiv preprint arXiv:2304.01933},
  year={2023}
}

@article{zhang2022opt,
  title={Opt: Open pre-trained transformer language models},
  author={Zhang, Susan and Roller, Stephen and Goyal, Naman and Artetxe, Mikel and Chen, Moya and Chen, Shuohui and Dewan, Christopher and Diab, Mona and Li, Xian and Lin, Xi Victoria and others},
  journal={arXiv preprint arXiv:2205.01068},
  year={2022}
}

@article{razaviyayn2013unified,
  title={A unified convergence analysis of block successive minimization methods for nonsmooth optimization},
  author={Razaviyayn, Meisam and Hong, Mingyi and Luo, Zhi-Quan},
  journal={SIAM Journal on Optimization},
  volume={23},
  number={2},
  pages={1126--1153},
  year={2013},
  publisher={SIAM}
}

@article{chen2025memory,
  title={A Memory Efficient Randomized Subspace Optimization Method for Training Large Language Models},
  author={Chen, Yiming and Zhang, Yuan and Liu, Yin and Yuan, Kun and Wen, Zaiwen},
  journal={arXiv preprint arXiv:2502.07222},
  year={2025}
}

@article{hong2017iteration,
  title={Iteration complexity analysis of block coordinate descent methods},
  author={Hong, Mingyi and Wang, Xiangfeng and Razaviyayn, Meisam and Luo, Zhi-Quan},
  journal={Mathematical Programming},
  volume={163},
  pages={85--114},
  year={2017},
  publisher={Springer}
}

@article{xu2013block,
  title={A block coordinate descent method for regularized multiconvex optimization with applications to nonnegative tensor factorization and completion},
  author={Xu, Yangyang and Yin, Wotao},
  journal={SIAM Journal on imaging sciences},
  volume={6},
  number={3},
  pages={1758--1789},
  year={2013},
  publisher={SIAM}
}

@article{zhou2020randomized,
  title={A randomized block-coordinate adam online learning optimization algorithm},
  author={Zhou, Yangfan and Zhang, Mingchuan and Zhu, Junlong and Zheng, Ruijuan and Wu, Qingtao},
  journal={Neural Computing and Applications},
  volume={32},
  number={16},
  pages={12671--12684},
  year={2020},
  publisher={Springer}
}

@article{wright2015coordinate,
  title={Coordinate descent algorithms},
  author={Wright, Stephen J},
  journal={Mathematical programming},
  volume={151},
  number={1},
  pages={3--34},
  year={2015},
  publisher={Springer}
}

@article{ramesh2024blockllm,
  title={BlockLLM: Memory-Efficient Adaptation of LLMs by Selecting and Optimizing the Right Coordinate Blocks},
  author={Ramesh, Amrutha Varshini and Ganapathiraman, Vignesh and Laradji, Issam H and Schmidt, Mark},
  journal={arXiv preprint arXiv:2406.17296},
  year={2024}
}

@article{tseng2001convergence,
  title={Convergence of a block coordinate descent method for nondifferentiable minimization},
  author={Tseng, Paul},
  journal={Journal of optimization theory and applications},
  volume={109},
  pages={475--494},
  year={2001},
  publisher={Springer}
}

@article{xu2015block,
  title={Block stochastic gradient iteration for convex and nonconvex optimization},
  author={Xu, Yangyang and Yin, Wotao},
  journal={SIAM Journal on Optimization},
  volume={25},
  number={3},
  pages={1686--1716},
  year={2015},
  publisher={SIAM}
}

@article{hu2021lora,
  title={Lora: Low-rank adaptation of large language models},
  author={Hu, Edward J and Shen, Yelong and Wallis, Phillip and Allen-Zhu, Zeyuan and Li, Yuanzhi and Wang, Shean and Wang, Lu and Chen, Weizhu},
  journal={arXiv preprint arXiv:2106.09685},
  year={2021}
}

@article{pan2024lisa,
  title={LISA: layerwise importance sampling for memory-efficient large language model fine-tuning},
  author={Pan, Rui and Liu, Xiang and Diao, Shizhe and Pi, Renjie and Zhang, Jipeng and Han, Chi and Zhang, Tong},
  journal={Advances in Neural Information Processing Systems},
  volume={37},
  pages={57018--57049},
  year={2024}
}

@article{luo2024badam,
  title={BAdam: A memory efficient full parameter optimization method for large language models},
  author={Luo, Qijun and Yu, Hengxu and Li, Xiao},
  journal={Advances in Neural Information Processing Systems},
  volume={37},
  pages={24926--24958},
  year={2024}
}

@inproceedings{liu2024dora,
  title={Dora: Weight-decomposed low-rank adaptation},
  author={Liu, Shih-Yang and Wang, Chien-Yi and Yin, Hongxu and Molchanov, Pavlo and Wang, Yu-Chiang Frank and Cheng, Kwang-Ting and Chen, Min-Hung},
  booktitle={Forty-first International Conference on Machine Learning},
  year={2024}
}

@article{dubey2024llama,
  title={The llama 3 herd of models},
  author={Dubey, Abhimanyu and Jauhri, Abhinav and Pandey, Abhinav and Kadian, Abhishek and Al-Dahle, Ahmad and Letman, Aiesha and Mathur, Akhil and Schelten, Alan and Yang, Amy and Fan, Angela and others},
  journal={arXiv preprint arXiv:2407.21783},
  year={2024}
}

@article{jiang2023mistral,
  title={Mistral 7B},
  author={Jiang, Albert Q and Sablayrolles, Alexandre and Mensch, Arthur and Bamford, Chris and Chaplot, Devendra Singh and Casas, Diego de las and Bressand, Florian and Lengyel, Gianna and Lample, Guillaume and Saulnier, Lucile and others},
  journal={arXiv preprint arXiv:2310.06825},
  year={2023}
}

@article{wei2022chain,
  title={Chain-of-thought prompting elicits reasoning in large language models},
  author={Wei, Jason and Wang, Xuezhi and Schuurmans, Dale and Bosma, Maarten and Xia, Fei and Chi, Ed and Le, Quoc V and Zhou, Denny and others},
  journal={Advances in neural information processing systems},
  volume={35},
  pages={24824--24837},
  year={2022}
}

@article{clark2019boolq,
  title={BoolQ: Exploring the surprising difficulty of natural yes/no questions},
  author={Clark, Christopher and Lee, Kenton and Chang, Ming-Wei and Kwiatkowski, Tom and Collins, Michael and Toutanova, Kristina},
  journal={arXiv preprint arXiv:1905.10044},
  year={2019}
}

@inproceedings{bisk2020piqa,
  title={Piqa: Reasoning about physical commonsense in natural language},
  author={Bisk, Yonatan and Zellers, Rowan and Gao, Jianfeng and Choi, Yejin and others},
  booktitle={Proceedings of the AAAI conference on artificial intelligence},
  volume={34},
  number={05},
  pages={7432--7439},
  year={2020}
}

@article{sap2019socialiqa,
  title={Socialiqa: Commonsense reasoning about social interactions},
  author={Sap, Maarten and Rashkin, Hannah and Chen, Derek and LeBras, Ronan and Choi, Yejin},
  journal={arXiv preprint arXiv:1904.09728},
  year={2019}
}

@article{zellers2019hellaswag,
  title={Hellaswag: Can a machine really finish your sentence?},
  author={Zellers, Rowan and Holtzman, Ari and Bisk, Yonatan and Farhadi, Ali and Choi, Yejin},
  journal={arXiv preprint arXiv:1905.07830},
  year={2019}
}

@article{sakaguchi2021winogrande,
  title={Winogrande: An adversarial winograd schema challenge at scale},
  author={Sakaguchi, Keisuke and Bras, Ronan Le and Bhagavatula, Chandra and Choi, Yejin},
  journal={Communications of the ACM},
  volume={64},
  number={9},
  pages={99--106},
  year={2021},
  publisher={ACM New York, NY, USA}
}

@article{clark2018think,
  title={Think you have solved question answering? try arc, the ai2 reasoning challenge},
  author={Clark, Peter and Cowhey, Isaac and Etzioni, Oren and Khot, Tushar and Sabharwal, Ashish and Schoenick, Carissa and Tafjord, Oyvind},
  journal={arXiv preprint arXiv:1803.05457},
  year={2018}
}

@article{mihaylov2018can,
  title={Can a suit of armor conduct electricity? a new dataset for open book question answering},
  author={Mihaylov, Todor and Clark, Peter and Khot, Tushar and Sabharwal, Ashish},
  journal={arXiv preprint arXiv:1809.02789},
  year={2018}
}

@article{qwen2,
      title={Qwen2 Technical Report}, 
      author={An Yang and Baosong Yang and Binyuan Hui and Bo Zheng and Bowen Yu and Chang Zhou and Chengpeng Li and Chengyuan Li and Dayiheng Liu and Fei Huang and Guanting Dong and Haoran Wei and Huan Lin and Jialong Tang and Jialin Wang and Jian Yang and Jianhong Tu and Jianwei Zhang and Jianxin Ma and Jin Xu and Jingren Zhou and Jinze Bai and Jinzheng He and Junyang Lin and Kai Dang and Keming Lu and Keqin Chen and Kexin Yang and Mei Li and Mingfeng Xue and Na Ni and Pei Zhang and Peng Wang and Ru Peng and Rui Men and Ruize Gao and Runji Lin and Shijie Wang and Shuai Bai and Sinan Tan and Tianhang Zhu and Tianhao Li and Tianyu Liu and Wenbin Ge and Xiaodong Deng and Xiaohuan Zhou and Xingzhang Ren and Xinyu Zhang and Xipin Wei and Xuancheng Ren and Yang Fan and Yang Yao and Yichang Zhang and Yu Wan and Yunfei Chu and Yuqiong Liu and Zeyu Cui and Zhenru Zhang and Zhihao Fan},
      journal={arXiv preprint arXiv:2407.10671},
      year={2024}
}

@article{cobbe2021gsm8k,
  title={Training Verifiers to Solve Math Word Problems},
  author={Cobbe, Karl and Kosaraju, Vineet and Bavarian, Mohammad and Chen, Mark and Jun, Heewoo and Kaiser, Lukasz and Plappert, Matthias and Tworek, Jerry and Hilton, Jacob and Nakano, Reiichiro and Hesse, Christopher and Schulman, John},
  journal={arXiv preprint arXiv:2110.14168},
  year={2021}
}

@article{patel2021nlp,
  title={Are NLP models really able to solve simple math word problems?},
  author={Patel, Arkil and Bhattamishra, Satwik and Goyal, Navin},
  journal={arXiv preprint arXiv:2103.07191},
  year={2021}
}

@article{ling2017program,
  title={Program induction by rationale generation: Learning to solve and explain algebraic word problems},
  author={Ling, Wang and Yogatama, Dani and Dyer, Chris and Blunsom, Phil},
  journal={arXiv preprint arXiv:1705.04146},
  year={2017}
}

@inproceedings{koncel2016mawps,
  title={MAWPS: A math word problem repository},
  author={Koncel-Kedziorski, Rik and Roy, Subhro and Amini, Aida and Kushman, Nate and Hajishirzi, Hannaneh},
  booktitle={Proceedings of the 2016 conference of the north american chapter of the association for computational linguistics: human language technologies},
  pages={1152--1157},
  year={2016}
}

@misc{touvron2023llama2openfoundation,
      title={Llama 2: Open Foundation and Fine-Tuned Chat Models}, 
      author={Hugo Touvron and Louis Martin and Kevin Stone and Peter Albert and Amjad Almahairi and Yasmine Babaei and Nikolay Bashlykov and Soumya Batra and Prajjwal Bhargava and Shruti Bhosale and Dan Bikel and Lukas Blecher and Cristian Canton Ferrer and Moya Chen and Guillem Cucurull and David Esiobu and Jude Fernandes and Jeremy Fu and Wenyin Fu and Brian Fuller and Cynthia Gao and Vedanuj Goswami and Naman Goyal and Anthony Hartshorn and Saghar Hosseini and Rui Hou and Hakan Inan and Marcin Kardas and Viktor Kerkez and Madian Khabsa and Isabel Kloumann and Artem Korenev and Punit Singh Koura and Marie-Anne Lachaux and Thibaut Lavril and Jenya Lee and Diana Liskovich and Yinghai Lu and Yuning Mao and Xavier Martinet and Todor Mihaylov and Pushkar Mishra and Igor Molybog and Yixin Nie and Andrew Poulton and Jeremy Reizenstein and Rashi Rungta and Kalyan Saladi and Alan Schelten and Ruan Silva and Eric Michael Smith and Ranjan Subramanian and Xiaoqing Ellen Tan and Binh Tang and Ross Taylor and Adina Williams and Jian Xiang Kuan and Puxin Xu and Zheng Yan and Iliyan Zarov and Yuchen Zhang and Angela Fan and Melanie Kambadur and Sharan Narang and Aurelien Rodriguez and Robert Stojnic and Sergey Edunov and Thomas Scialom},
      year={2023},
      eprint={2307.09288},
      archivePrefix={arXiv},
      primaryClass={cs.CL},
      url={https://arxiv.org/abs/2307.09288}, 
}

@misc{alpaca,
  author = {Rohan Taori and Ishaan Gulrajani and Tianyi Zhang and Yann Dubois and Xuechen Li and Carlos Guestrin and Percy Liang and Tatsunori B. Hashimoto },
  title = {Stanford Alpaca: An Instruction-following LLaMA model},
  year = {2023},
  publisher = {GitHub},
  journal = {GitHub repository},
  howpublished = {\url{https://github.com/tatsu-lab/stanford_alpaca}},
}

@misc{chen2016trainingdeepnetssublinear,
      title={Training Deep Nets with Sublinear Memory Cost}, 
      author={Tianqi Chen and Bing Xu and Chiyuan Zhang and Carlos Guestrin},
      year={2016},
      eprint={1604.06174},
      archivePrefix={arXiv},
      primaryClass={cs.LG},
      url={https://arxiv.org/abs/1604.06174}, 
}

@inproceedings{dao2023flashattention2,
  title={Flash{A}ttention-2: Faster Attention with Better Parallelism and Work Partitioning},
  author={Dao, Tri},
  booktitle={International Conference on Learning Representations (ICLR)},
  year={2024}
}

@misc{dettmers2023qloraefficientfinetuningquantized,
      title={QLoRA: Efficient Finetuning of Quantized LLMs}, 
      author={Tim Dettmers and Artidoro Pagnoni and Ari Holtzman and Luke Zettlemoyer},
      year={2023},
      eprint={2305.14314},
      archivePrefix={arXiv},
      primaryClass={cs.LG},
      url={https://arxiv.org/abs/2305.14314}, 
}

@article{liu2024hifthierarchicalparameterfinetuning,
  title={Hift: A hierarchical full parameter fine-tuning strategy},
  author={Liu, Yongkang and Zhang, Yiqun and Li, Qian and Liu, Tong and Feng, Shi and Wang, Daling and Zhang, Yifei and Sch{\"u}tze, Hinrich},
  journal={arXiv preprint arXiv:2401.15207},
  year={2024}
}

@misc{zhu2024lift,
title={{LIFT}: Efficient Layer-wise Fine-tuning for Large Model Models},
author={Ligeng Zhu and Lanxiang Hu and Ji Lin and Song Han},
year={2024},
url={https://openreview.net/forum?id=u0INlprg3U}
}

@misc{yao2024layerwiseimportancemattersmemory,
      title={Layer-wise Importance Matters: Less Memory for Better Performance in Parameter-efficient Fine-tuning of Large Language Models}, 
      author={Kai Yao and Penglei Gao and Lichun Li and Yuan Zhao and Xiaofeng Wang and Wei Wang and Jianke Zhu},
      year={2024},
      eprint={2410.11772},
      archivePrefix={arXiv},
      primaryClass={cs.CL},
      url={https://arxiv.org/abs/2410.11772}, 
}

@misc{shi2024understandinglayersignificancellm,
      title={Understanding Layer Significance in LLM Alignment}, 
      author={Guangyuan Shi and Zexin Lu and Xiaoyu Dong and Wenlong Zhang and Xuanyu Zhang and Yujie Feng and Xiao-Ming Wu},
      year={2024},
      eprint={2410.17875},
      archivePrefix={arXiv},
      primaryClass={cs.CL},
      url={https://arxiv.org/abs/2410.17875}, 
}

@article{biderman2024lora,
  title={Lora learns less and forgets less},
  author={Biderman, Dan and Portes, Jacob and Ortiz, Jose Javier Gonzalez and Paul, Mansheej and Greengard, Philip and Jennings, Connor and King, Daniel and Havens, Sam and Chiley, Vitaliy and Frankle, Jonathan and others},
  journal={arXiv preprint arXiv:2405.09673},
  year={2024}
}

@misc{zhang2023adaloraadaptivebudgetallocation,
      title={AdaLoRA: Adaptive Budget Allocation for Parameter-Efficient Fine-Tuning}, 
      author={Qingru Zhang and Minshuo Chen and Alexander Bukharin and Nikos Karampatziakis and Pengcheng He and Yu Cheng and Weizhu Chen and Tuo Zhao},
      year={2023},
      eprint={2303.10512},
      archivePrefix={arXiv},
      primaryClass={cs.CL},
      url={https://arxiv.org/abs/2303.10512}, 
}

@article{qiu2020pre,
  title={Pre-trained models for natural language processing: A survey},
  author={Qiu, Xipeng and Sun, Tianxiang and Xu, Yige and Shao, Yunfan and Dai, Ning and Huang, Xuanjing},
  journal={Science China technological sciences},
  volume={63},
  number={10},
  pages={1872--1897},
  year={2020},
  publisher={Springer}
}

@article{raffel2020exploring,
  title={Exploring the limits of transfer learning with a unified text-to-text transformer},
  author={Raffel, Colin and Shazeer, Noam and Roberts, Adam and Lee, Katherine and Narang, Sharan and Matena, Michael and Zhou, Yanqi and Li, Wei and Liu, Peter J},
  journal={Journal of machine learning research},
  volume={21},
  number={140},
  pages={1--67},
  year={2020}
}

@misc{rozière2024codellamaopenfoundation,
      title={Code Llama: Open Foundation Models for Code}, 
      author={Baptiste Rozière and Jonas Gehring and Fabian Gloeckle and Sten Sootla and Itai Gat and Xiaoqing Ellen Tan and Yossi Adi and Jingyu Liu and Romain Sauvestre and Tal Remez and Jérémy Rapin and Artyom Kozhevnikov and Ivan Evtimov and Joanna Bitton and Manish Bhatt and Cristian Canton Ferrer and Aaron Grattafiori and Wenhan Xiong and Alexandre Défossez and Jade Copet and Faisal Azhar and Hugo Touvron and Louis Martin and Nicolas Usunier and Thomas Scialom and Gabriel Synnaeve},
      year={2024},
      eprint={2308.12950},
      archivePrefix={arXiv},
      primaryClass={cs.CL},
      url={https://arxiv.org/abs/2308.12950}, 
}

@article{xu2023parameter,
  title={Parameter-efficient fine-tuning methods for pretrained language models: A critical review and assessment},
  author={Xu, Lingling and Xie, Haoran and Qin, Si-Zhao Joe and Tao, Xiaohui and Wang, Fu Lee},
  journal={arXiv preprint arXiv:2312.12148},
  year={2023}
}

@misc{lialin2023relorahighranktraininglowrank,
      title={ReLoRA: High-Rank Training Through Low-Rank Updates}, 
      author={Vladislav Lialin and Namrata Shivagunde and Sherin Muckatira and Anna Rumshisky},
      year={2023},
      eprint={2307.05695},
      archivePrefix={arXiv},
      primaryClass={cs.CL},
      url={https://arxiv.org/abs/2307.05695}, 
}

@article{zhang2024scaling,
  title={When scaling meets llm finetuning: The effect of data, model and finetuning method},
  author={Zhang, Biao and Liu, Zhongtao and Cherry, Colin and Firat, Orhan},
  journal={arXiv preprint arXiv:2402.17193},
  year={2024}
}

@article{pfeiffer2020mad,
  title={Mad-x: An adapter-based framework for multi-task cross-lingual transfer},
  author={Pfeiffer, Jonas and Vuli{\'c}, Ivan and Gurevych, Iryna and Ruder, Sebastian},
  journal={arXiv preprint arXiv:2005.00052},
  year={2020}
}

@inproceedings{houlsby2019parameter,
  title={Parameter-efficient transfer learning for NLP},
  author={Houlsby, Neil and Giurgiu, Andrei and Jastrzebski, Stanislaw and Morrone, Bruna and De Laroussilhe, Quentin and Gesmundo, Andrea and Attariyan, Mona and Gelly, Sylvain},
  booktitle={International conference on machine learning},
  pages={2790--2799},
  year={2019},
  organization={PMLR}
}

@article{wang2022adamix,
  title={AdaMix: Mixture-of-adaptations for parameter-efficient model tuning},
  author={Wang, Yaqing and Agarwal, Sahaj and Mukherjee, Subhabrata and Liu, Xiaodong and Gao, Jing and Awadallah, Ahmed Hassan and Gao, Jianfeng},
  journal={arXiv preprint arXiv:2205.12410},
  year={2022}
}

@article{he2022sparseadapter,
  title={Sparseadapter: An easy approach for improving the parameter-efficiency of adapters},
  author={He, Shwai and Ding, Liang and Dong, Daize and Zhang, Miao and Tao, Dacheng},
  journal={arXiv preprint arXiv:2210.04284},
  year={2022}
}

@article{he2021towards,
  title={Towards a unified view of parameter-efficient transfer learning},
  author={He, Junxian and Zhou, Chunting and Ma, Xuezhe and Berg-Kirkpatrick, Taylor and Neubig, Graham},
  journal={arXiv preprint arXiv:2110.04366},
  year={2021}
}

@inproceedings{lialin2023relora,
  title={Relora: High-rank training through low-rank updates},
  author={Lialin, Vladislav and Muckatira, Sherin and Shivagunde, Namrata and Rumshisky, Anna},
  booktitle={The Twelfth International Conference on Learning Representations},
  year={2023}
}

@article{dettmers2024qlora,
  title={Qlora: Efficient finetuning of quantized llms},
  author={Dettmers, Tim and Pagnoni, Artidoro and Holtzman, Ari and Zettlemoyer, Luke},
  journal={Advances in Neural Information Processing Systems},
  volume={36},
  year={2024}
}

@article{jiang2024mora,
  title={Mora: High-rank updating for parameter-efficient fine-tuning},
  author={Jiang, Ting and Huang, Shaohan and Luo, Shengyue and Zhang, Zihan and Huang, Haizhen and Wei, Furu and Deng, Weiwei and Sun, Feng and Zhang, Qi and Wang, Deqing and others},
  journal={arXiv preprint arXiv:2405.12130},
  year={2024}
}

@article{li2021prefix,
  title={Prefix-tuning: Optimizing continuous prompts for generation},
  author={Li, Xiang Lisa and Liang, Percy},
  journal={arXiv preprint arXiv:2101.00190},
  year={2021}
}

@article{han2022ptr,
  title={Ptr: Prompt tuning with rules for text classification},
  author={Han, Xu and Zhao, Weilin and Ding, Ning and Liu, Zhiyuan and Sun, Maosong},
  journal={AI Open},
  volume={3},
  pages={182--192},
  year={2022},
  publisher={Elsevier}
}

@article{zhong2021factual,
  title={Factual probing is [mask]: Learning vs. learning to recall},
  author={Zhong, Zexuan and Friedman, Dan and Chen, Danqi},
  journal={arXiv preprint arXiv:2104.05240},
  year={2021}
}

@article{qin2021learning,
  title={Learning how to ask: Querying LMs with mixtures of soft prompts},
  author={Qin, Guanghui and Eisner, Jason},
  journal={arXiv preprint arXiv:2104.06599},
  year={2021}
}

@article{zhao2024galore,
  title={Galore: Memory-efficient llm training by gradient low-rank projection},
  author={Zhao, Jiawei and Zhang, Zhenyu and Chen, Beidi and Wang, Zhangyang and Anandkumar, Anima and Tian, Yuandong},
  journal={arXiv preprint arXiv:2403.03507},
  year={2024}
}

@article{lv2023full,
  title={Full parameter fine-tuning for large language models with limited resources},
  author={Lv, Kai and Yang, Yuqing and Liu, Tengxiao and Gao, Qinghui and Guo, Qipeng and Qiu, Xipeng},
  journal={arXiv preprint arXiv:2306.09782},
  year={2023}
}

@article{lv2023adalomo,
  title={Adalomo: Low-memory optimization with adaptive learning rate},
  author={Lv, Kai and Yan, Hang and Guo, Qipeng and Lv, Haijun and Qiu, Xipeng},
  journal={arXiv preprint arXiv:2310.10195},
  year={2023}
}

@article{dettmers20218,
  title={8-bit optimizers via block-wise quantization},
  author={Dettmers, Tim and Lewis, Mike and Shleifer, Sam and Zettlemoyer, Luke},
  journal={arXiv preprint arXiv:2110.02861},
  year={2021}
}

@article{li2309memory,
  title={Memory efficient optimizers with 4-bit states, 2023},
  author={Li, Bingrui and Chen, Jianfei and Zhu, Jun},
  journal={URL https://arxiv. org/abs/2309.01507}
}

@misc{deepseekai2024deepseekv3technicalreport,
      title={DeepSeek-V3 Technical Report}, 
      author={DeepSeek-AI},
      year={2024},
      eprint={2412.19437},
      archivePrefix={arXiv},
      primaryClass={cs.CL},
      url={https://arxiv.org/abs/2412.19437}, 
}

@misc{zhang2024adamminiusefewerlearning,
      title={Adam-mini: Use Fewer Learning Rates To Gain More}, 
      author={Yushun Zhang and Congliang Chen and Ziniu Li and Tian Ding and Chenwei Wu and Diederik P. Kingma and Yinyu Ye and Zhi-Quan Luo and Ruoyu Sun},
      year={2024},
      eprint={2406.16793},
      archivePrefix={arXiv},
      primaryClass={cs.LG},
      url={https://arxiv.org/abs/2406.16793}, 
}

@inproceedings{
anonymous2025hira,
title={Hi{RA}: Parameter-Efficient Hadamard High-Rank Adaptation for Large Language Models},
author={Anonymous},
booktitle={The Thirteenth International Conference on Learning Representations},
year={2025},
url={https://openreview.net/forum?id=TwJrTz9cRS}
}

@inproceedings{Elhoushi_2024,
   title={LayerSkip: Enabling Early Exit Inference and Self-Speculative Decoding},
   url={http://dx.doi.org/10.18653/v1/2024.acl-long.681},
   DOI={10.18653/v1/2024.acl-long.681},
   booktitle={Proceedings of the 62nd Annual Meeting of the Association for Computational Linguistics (Volume 1: Long Papers)},
   publisher={Association for Computational Linguistics},
   author={Elhoushi, Mostafa and Shrivastava, Akshat and Liskovich, Diana and Hosmer, Basil and Wasti, Bram and Lai, Liangzhen and Mahmoud, Anas and Acun, Bilge and Agarwal, Saurabh and Roman, Ahmed and Aly, Ahmed and Chen, Beidi and Wu, Carole-Jean},
   year={2024},
   pages={12622–12642} }

@misc{fan2021layerwisemodelpruningbased,
      title={Layer-wise Model Pruning based on Mutual Information}, 
      author={Chun Fan and Jiwei Li and Xiang Ao and Fei Wu and Yuxian Meng and Xiaofei Sun},
      year={2021},
      eprint={2108.12594},
      archivePrefix={arXiv},
      primaryClass={cs.CL},
      url={https://arxiv.org/abs/2108.12594}, 
}

@misc{fan2019reducingtransformerdepthdemand,
      title={Reducing Transformer Depth on Demand with Structured Dropout}, 
      author={Angela Fan and Edouard Grave and Armand Joulin},
      year={2019},
      eprint={1909.11556},
      archivePrefix={arXiv},
      primaryClass={cs.LG},
      url={https://arxiv.org/abs/1909.11556}, 
}

@article{zheng2023judging,
  title={Judging llm-as-a-judge with mt-bench and chatbot arena},
  author={Zheng, Lianmin and Chiang, Wei-Lin and Sheng, Ying and Zhuang, Siyuan and Wu, Zhanghao and Zhuang, Yonghao and Lin, Zi and Li, Zhuohan and Li, Dacheng and Xing, Eric and others},
  journal={Advances in Neural Information Processing Systems},
  volume={36},
  pages={46595--46623},
  year={2023}
}

@misc{li2024owloreoutlierweighedlayerwisesampled,
      title={OwLore: Outlier-weighed Layerwise Sampled Low-Rank Projection for Memory-Efficient LLM Fine-tuning}, 
      author={Pengxiang Li and Lu Yin and Xiaowei Gao and Shiwei Liu},
      year={2024},
      eprint={2405.18380},
      archivePrefix={arXiv},
      primaryClass={cs.LG},
      url={https://arxiv.org/abs/2405.18380}, 
}

@article{hendrycks2020measuring,
  title={Measuring massive multitask language understanding},
  author={Hendrycks, Dan and Burns, Collin and Basart, Steven and Zou, Andy and Mazeika, Mantas and Song, Dawn and Steinhardt, Jacob},
  journal={arXiv preprint arXiv:2009.03300},
  year={2020}
}

@inproceedings{wang2024mmlu,
  title={Mmlu-pro: A more robust and challenging multi-task language understanding benchmark},
  author={Wang, Yubo and Ma, Xueguang and Zhang, Ge and Ni, Yuansheng and Chandra, Abhranil and Guo, Shiguang and Ren, Weiming and Arulraj, Aaran and He, Xuan and Jiang, Ziyan and others},
  booktitle={The Thirty-eight Conference on Neural Information Processing Systems Datasets and Benchmarks Track},
  year={2024}
}

@article{zhang2024tinyllama,
  title={Tinyllama: An open-source small language model},
  author={Zhang, Peiyuan and Zeng, Guangtao and Wang, Tianduo and Lu, Wei},
  journal={arXiv preprint arXiv:2401.02385},
  year={2024}
}
\bibliographystyle{abbrv}
}

\newpage
\appendix
\resettocdepth
\begin{center}
\LARGE
    \textbf{Appendix for ``\textit{MISA}: Memory-Efficient LLMs Optimization with
Module-wise Importance Sampling''}
\end{center}
\vspace{5mm}
\tableofcontents

\vspace{5mm}
\newpage

\section{Algorithm Details}
\subsection{Module selection strategy}
\begin{algorithm}[H]
\caption{\textbf{Module selection}}
\label{alg:sampling details}
\begin{algorithmic}[1]
\color{black}
\REQUIRE  Active set ${\tau}_n$, Non-active set $B$, Sampling threshold $\delta$, Probability distribution $P^n$, Total model parameters $n_{model}$, Counter $S$.
\STATE \textbf{Initialize:}${\tau}_n=\emptyset$, $B=\{M_1,M_2,...,M_B\}$, $S=0$
        \WHILE{$B\ne \emptyset$} 
            \STATE Sample a module $m$ from $B$ based on the probability distribution $P^n$.
            \STATE $B=B-m$.
            \IF{$S+\left| m \right|<\delta n_{model}$}
                \STATE ${\tau}_n={\tau}_n+t$
            \ENDIF
        \ENDWHILE
        \STATE \textbf{Return: ${\tau}_n$} 
\end{algorithmic}
\end{algorithm}

\subsection{Definition of the scaled gradient norm}\label{appendix:scaled gradient norm}
For a given module $m_i$, we define its scaled gradient norm be the ratio of the Frobenius norm of the gradient to the square root of the number of parameters $\frac{\lVert g_i \rVert _F}{\sqrt{\left| m_i \right|}}$ in practice, which can be considered a definition of normalization. 

\section{Additional Experiments}\label{appendix:additional experiments}
\subsection{Memory Efficiency When Scaling To 70B Model}
Fig.~\ref{fig:multiple memory} demonstrates that when training a 70B model, MISA is still more memory-efficient than LoRA under long sequence length settings, and the use of flash-attention\citep{dao2023flashattention2} has minimal impact on overall memory consumption. 
\begin{figure}[htb]
    \centering
    \subfigure[LLaMA3-8B]{
        \includegraphics[width=0.31\textwidth]{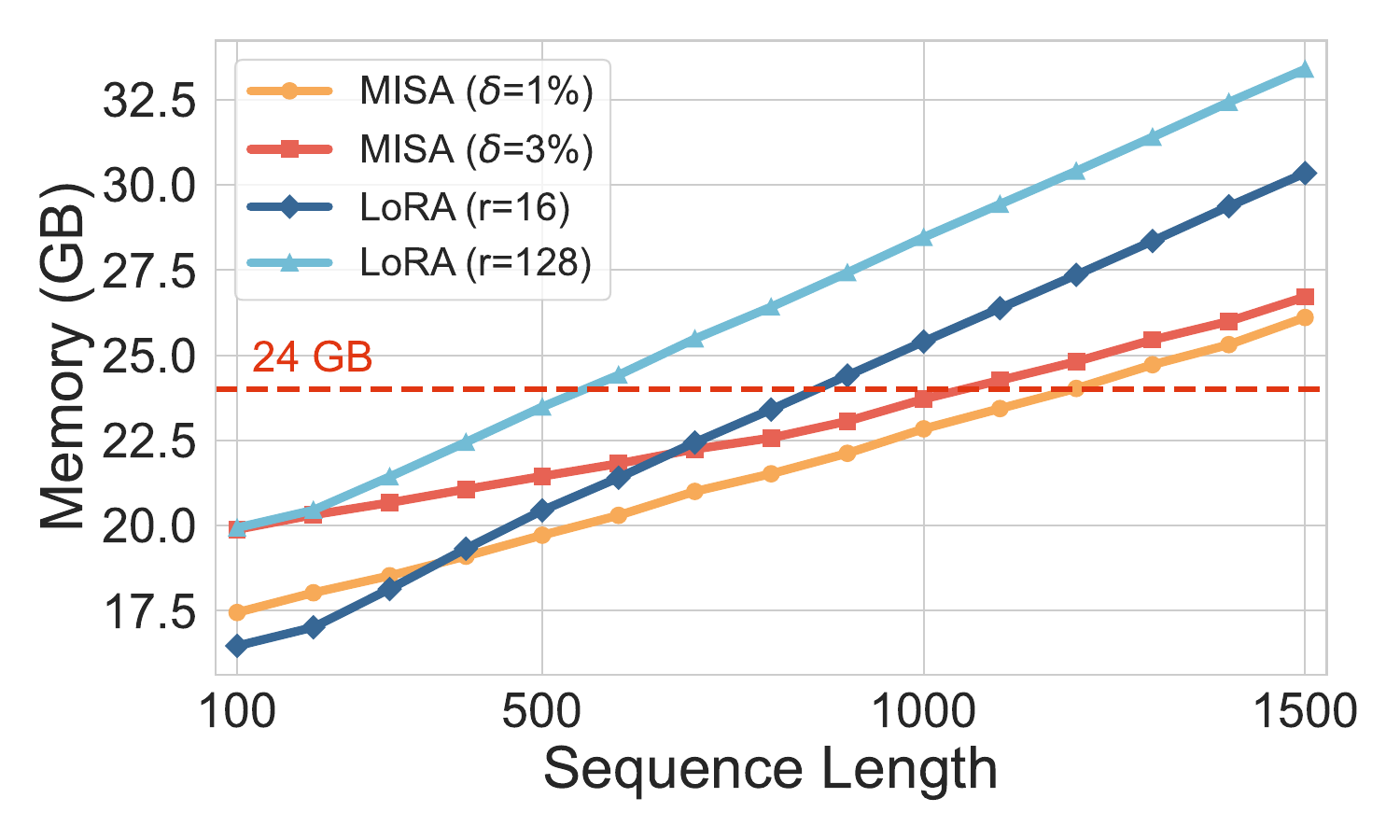}
    }
    \subfigure[LLaMA3-70B]{
        \includegraphics[width=0.31\textwidth]{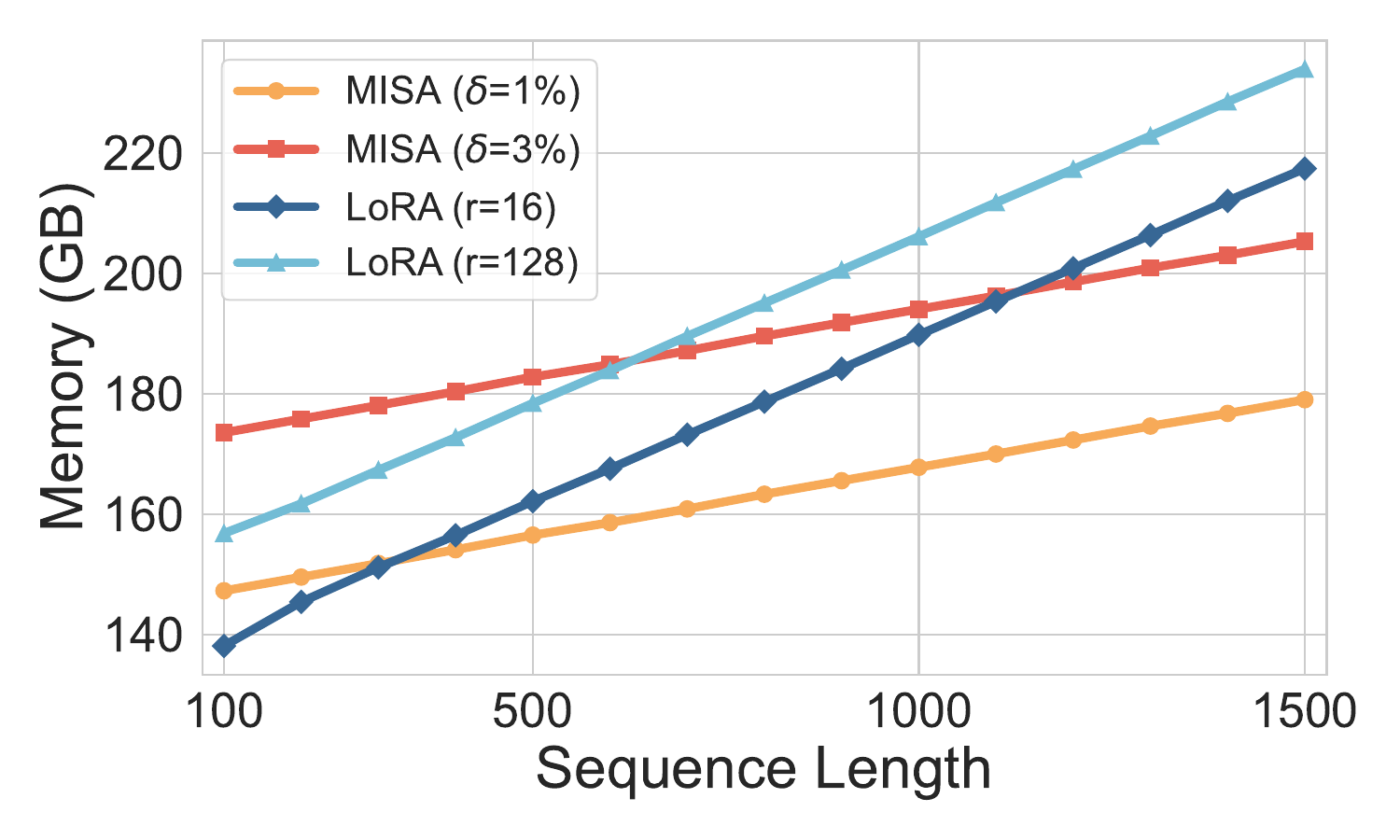}
    }
    \subfigure[LLaMA3-70B w/ flash-attention]{
        \includegraphics[width=0.31\textwidth]{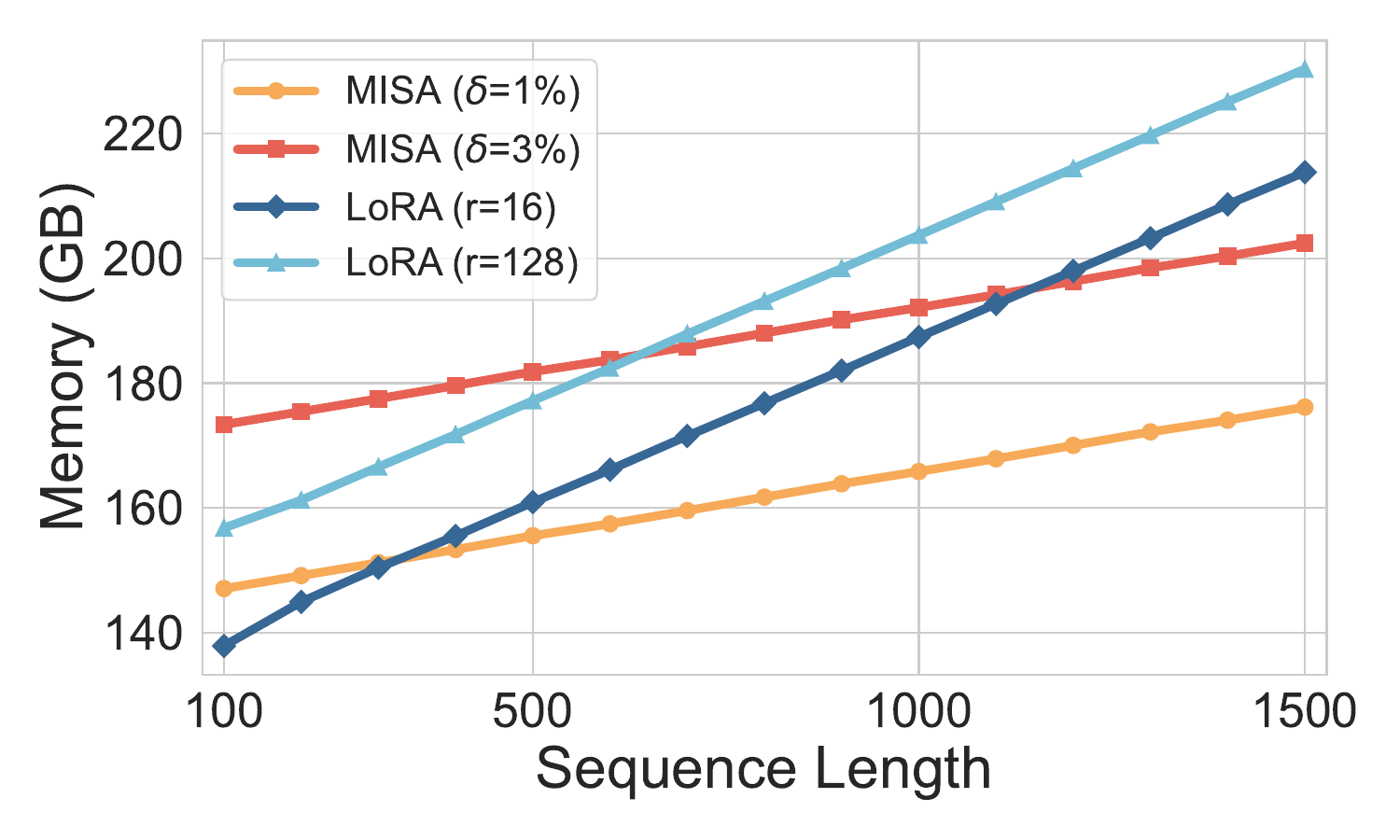}
    }
    \caption{Comparison of peak memory consumption on LLaMA3-8B and LLaMA3-70B. (c) used flash-attention technique.}
    \label{fig:multiple memory}
\end{figure}

\subsection{Enhancing LoRA with \method~}
A key advantage of LoRA over \method~is its ability to train separate adapters for different downstream tasks. Instead of storing full model weights, LoRA only requires storage for task-specific adapters, significantly reducing storage costs. Since \method~is orthogonal to LoRA and can be used alongside it, we explored a hybrid approach combining both methods.  Given a model weight \( W \) with LoRA applied, it is represented as \( W' = W + AB \), where \( A \) and \( B \) are treated as separate modules. We freeze the original model parameters and consider the LoRA adapters as the total set of trainable parameters \( n_{\text{LoRA}} \). \method~dynamically activates LoRA adapters while ensuring that the number of active parameters at each step remains below \( n_{\text{LoRA}} \times \delta \).  Furthermore, we found that optimizer states are not the primary source of memory consumption for LoRA + MISA. Thus, we retain optimizer states for all LoRA adapters without clearing them. Our results show that \method~enhances full LoRA's performance while further reducing memory usage. As shown in Fig.~\ref{fig:mix_lora}, activating no more than 30\% of LoRA parameters achieves or even surpasses full LoRA's performance while saving approximately 7.9\% of memory. We also compared IST \citep{yao2024layerwiseimportancemattersmemory} with MISA (see Table~\ref{table:ist and misa}) and MISA outperformed IST. 
\begin{figure}[H]
\centering
\includegraphics[width=0.45\textwidth]{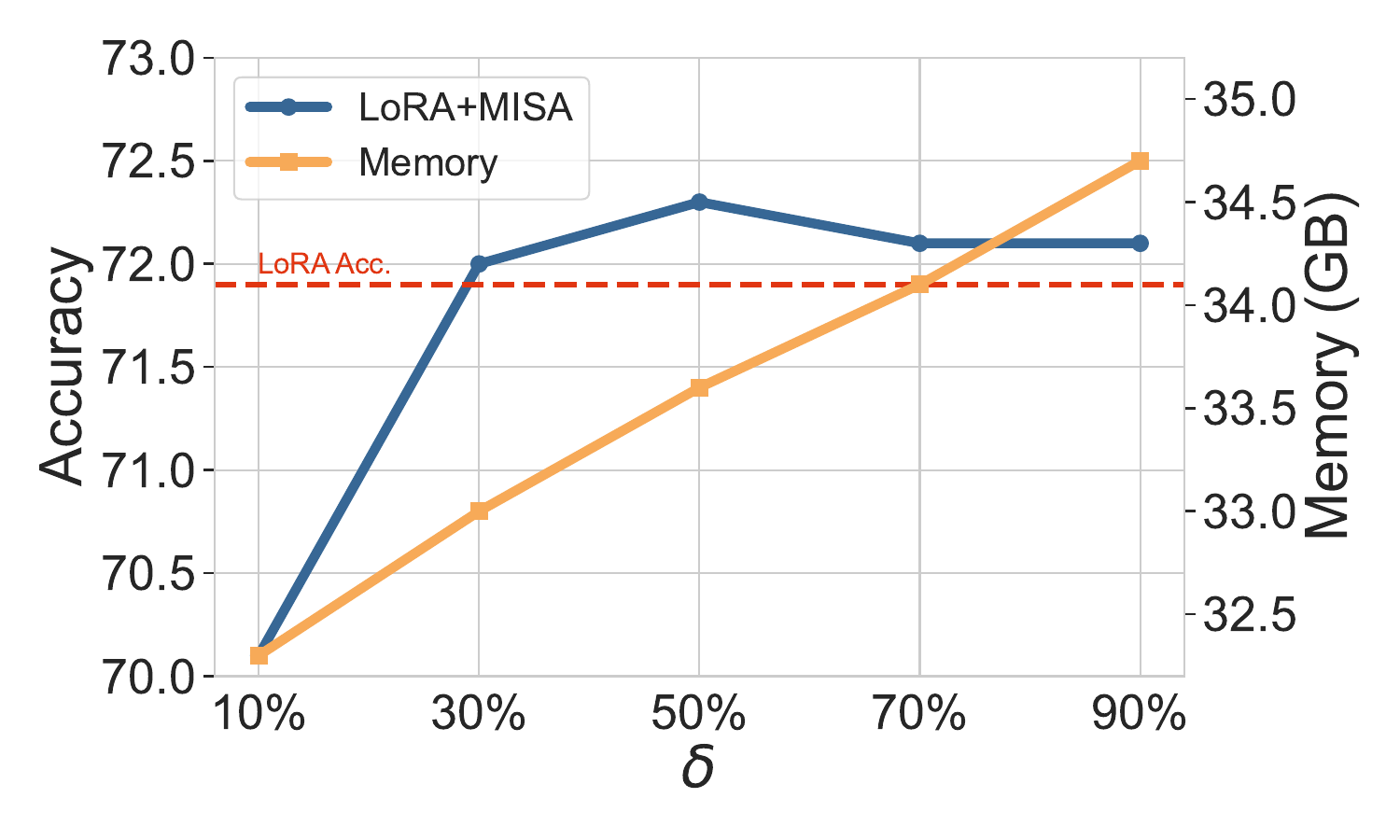}
\caption{\small Accuracy and memory consumption of LoRA + MISA fine-tuning with varying $\delta$
 on math reasoning tasks.}\label{fig:mix_lora}
 \end{figure}
 
\begin{table}[h]
\centering
\caption{\small Comparison of MISA and IST on LLaMA-7B and LLaMA3-8B across eight commonsense reasoning tasks. Results of LoRA+IST are cited from IST paper.
}
\label{table:ist and misa}
\vskip 0.1in
\renewcommand{\arraystretch}{1.1} 
\resizebox{\textwidth}{!}{
\begin{tabular}{llcccccccccc}
\toprule
\textbf{Model} & \textbf{Method}  & \textbf{BoolQ} & \textbf{PIQA} & \textbf{SIQA} & \textbf{Hella.} & \textbf{Wino.} & \textbf{ARC-e} & \textbf{ARC-c} & \textbf{OBQA} & \textbf{Avg.$\uparrow$}\\
\midrule
\multirow{4}{*}{LLaMA-7B} 
&LoRA&68.9&80.7&77.4&78.1&78.8&77.8&61.3&74.8&74.7 \\
&LoRA+IST&68.7&81.7&77.3&82.7&78.7&80.6&62.4&80.0&76.5 \\
&\cellcolor{lightblue}\textbf{LoRA+MISA}&\cellcolor{lightblue}70.4&\cellcolor{lightblue}81.5&\cellcolor{lightblue}78.9&\cellcolor{lightblue}83.4&\cellcolor{lightblue}78.9&\cellcolor{lightblue}81.2&\cellcolor{lightblue}63.1&\cellcolor{lightblue}79.7&\cellcolor{lightblue}\textbf{77.1} \\
\midrule
\multirow{4}{*}{LLaMA3-8B} 
&LoRA&70.8&85.2&79.7&92.5&84.9&88.9&78.7&84.4&82.5 \\
&LoRA+IST&72.7&88.3&80.5&94.7&84.4&89.8&79.9&86.6&84.6 \\
&\cellcolor{lightblue}\textbf{LoRA+MISA}&\cellcolor{lightblue}74.4&\cellcolor{lightblue}88.9&\cellcolor{lightblue}81.8&\cellcolor{lightblue}95.3&\cellcolor{lightblue}86.1&\cellcolor{lightblue}91.3&\cellcolor{lightblue}81.4&\cellcolor{lightblue}86.5&\cellcolor{lightblue}\textbf{85.7} \\
\bottomrule
\end{tabular}}
\renewcommand{\arraystretch}{1.0}
\end{table}

\subsection{Computation Efficiency}
A training step includes forward propagation, backward propagation, and optimizer computations (including parameter updates and any algorithm‐specific operations). Table~\ref{table:time} reports the average time per step for each method. GaLore incurs the highest optimizer computation overhead because it performs a full-model SVD (with very large time complexity) every 2000 steps. For MISA, we further conduct \textbf{detailed operation-level profiling} (under the same hardware and training configurations as Table~\ref{table:time}) to quantify the overhead of its unique components (e.g., module sampling, gradient norm tracking). Results show that these components introduce negligible additional time (accounting for less than 0.05\% of the total per-step time), as their key operations (e.g., gradient norm aggregation, sampling probability calculation) are either lightweight or amortized over multiple inner-loop steps. Thus, MISA’s computation efficiency is only behind BAdam’s, while it also achieves superior convergence with respect to training time (see Fig.~\ref{fig:3 loss lines}).

\begin{table}[!h]
\centering
\caption{Comparison of training time for fine-tuning LLaMA2-7B on the Alpaca-GPT4 dataset. The Forward, Backward, and Optimizer time represent the average time per step. The last column shows the overall training time of 3 epochs.
}
\label{table:time}
\vskip 0.1in
\renewcommand{\arraystretch}{1.1} 
\setlength{\tabcolsep}{3.8px}{
\begin{tabular}{lcccc}
\toprule
\textbf{Method} & \textbf{Forward(ms)} & \textbf{Backward(ms)} & \textbf{Optimizer(ms)} & \textbf{Averged cost per step(ms)}\\
\midrule

LoRA&70.3&104.5&4.5 &179.3\\
GaLore&35.7&140.5&267.4 &443.6\\
BAdam&34.3&42.9&1.5&78.7\\
LISA&36.4&84.1&2.9&123.4 \\
\cellcolor{lightblue}\textbf{MISA}&\cellcolor{lightblue}35.2&\cellcolor{lightblue}74.6&\cellcolor{lightblue}1.8& \cellcolor{lightblue}111.6\\

\bottomrule
\end{tabular}}
\renewcommand{\arraystretch}{1.0}
\end{table}



\section{Ablation Study}\label{appendix:ablation}
\subsection{Ablation Study: Impact of Clearing Optimizer States}
In line 17 of algorithm\ref{alg:MISA}, we clear the optimizer state to save memory consumption. We explored a method to maintain the optimizer states of active layers during training. When a layer is frozen, its complete optimizer state is preserved and offloaded to CPU memory. These saved states are then precisely restored to GPU when the layer is reactivated in subsequent training cycles.

\begin{figure}[htb]
    \centering
    \subfigure{
        \includegraphics[width=0.4\textwidth]{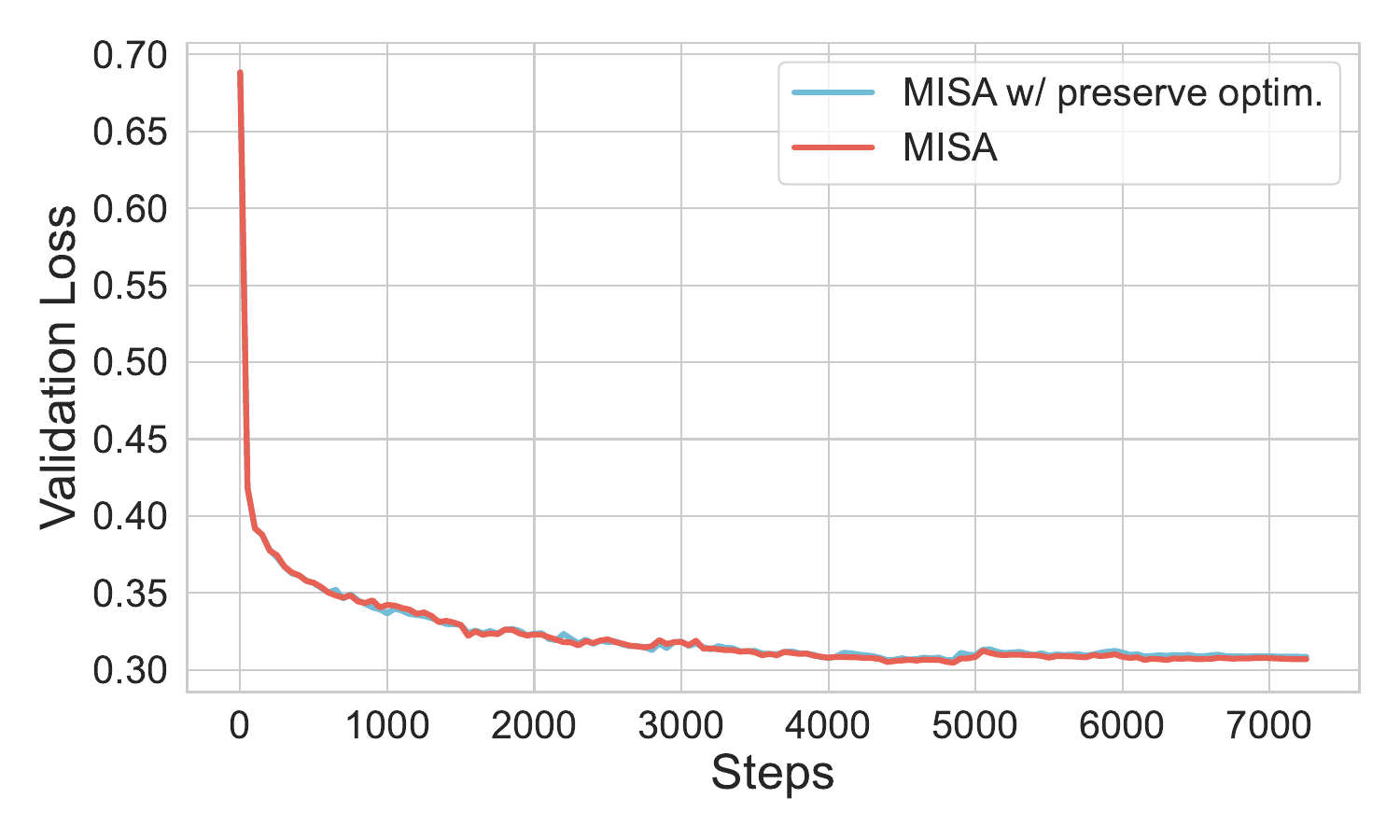}
    }
    \subfigure{
        \includegraphics[width=0.4\textwidth]{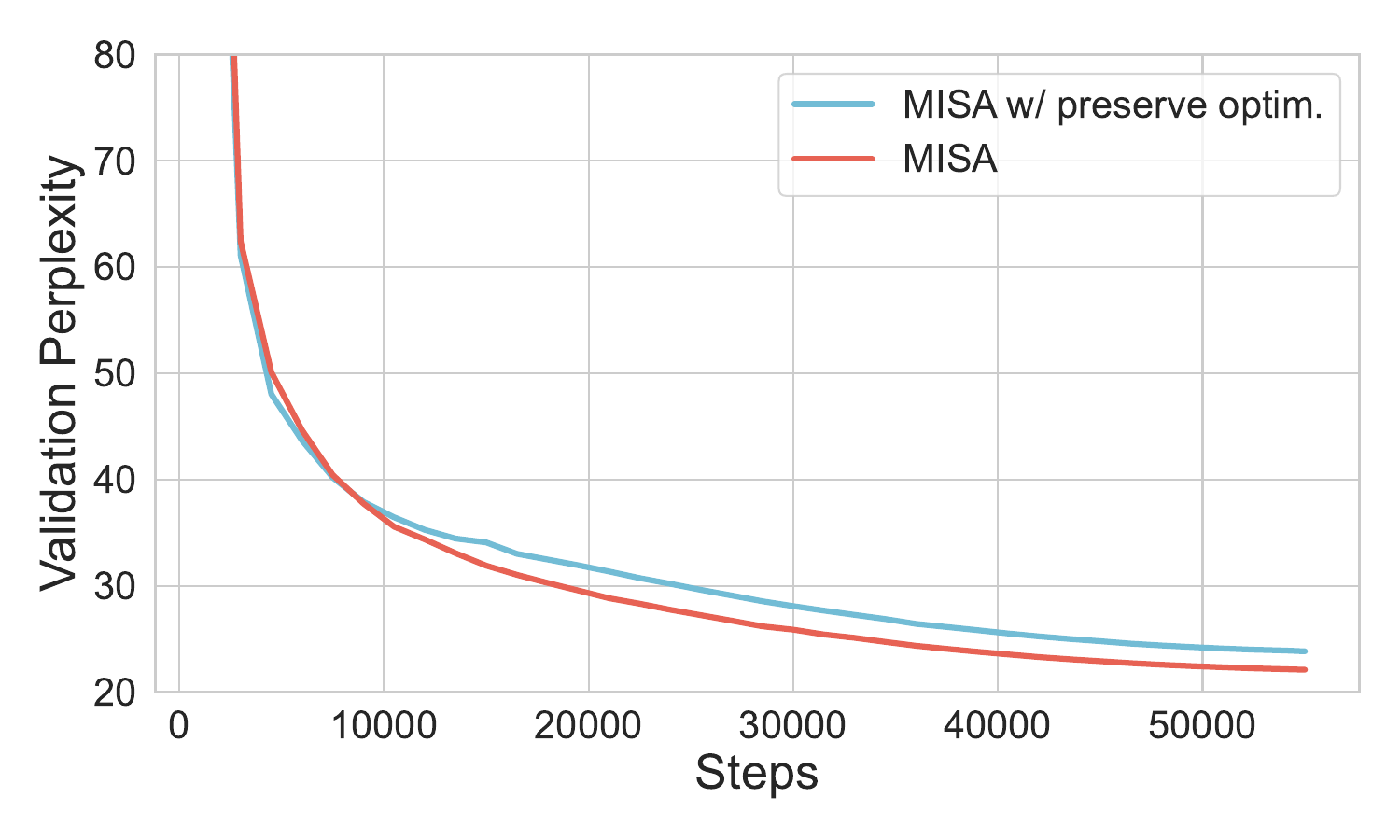}
    }
    \vspace{-8px}
    \caption{\small Fine-tuning loss (left) and pre-training perplexity (right) of LLaMA3-8B. Fine-tuning used MATH10K dataset, and pre-training used C4 dataset. MISA discards the optimizer states when switching activated layers, while "MISA w/ preserve optim." indicates retaining the optimizer states. }
    \label{fig: offload optimizer}
\end{figure}

As shown in Fig.~\ref{fig: offload optimizer}, preserving optimizer states has no significant impact on fine-tuning loss but brings significant extra pretraining perplexity. However, we don't have a particularly good explanation for this phenomenon. Our guess is that in the early stage of pre-training, when the model is still in the coarse-tuning stage, the dependence on the history of momentum is not that strong, so the clearing optimizer may be able to avoid local minima and thus improve optimization.

\subsection{Ablation Study: Impact of Hyperparameters}
We primarily investigated the effects of three hyperparameters: the learning rate, MISA’s $\delta$, and MISA’s $\eta$. $\eta$ controls the trade‑off between importance sampling and uniform sampling. When $\eta=0$, MISA reduces to uniform sampling. $\delta$ determines the number of parameters updated at each step and scales proportionally with memory consumption.

Fig.~\ref{fig: hyperparameters} presents the impact of $\eta$ and $\delta$. In practice, variations in $\eta$ have a minor effect on model accuracy, whereas MISA is highly sensitive to the learning rate. An excessively large learning rate significantly degrades performance of the model.

\begin{figure}[htb]
    \centering
    \subfigure{
        \includegraphics[width=0.31\textwidth]{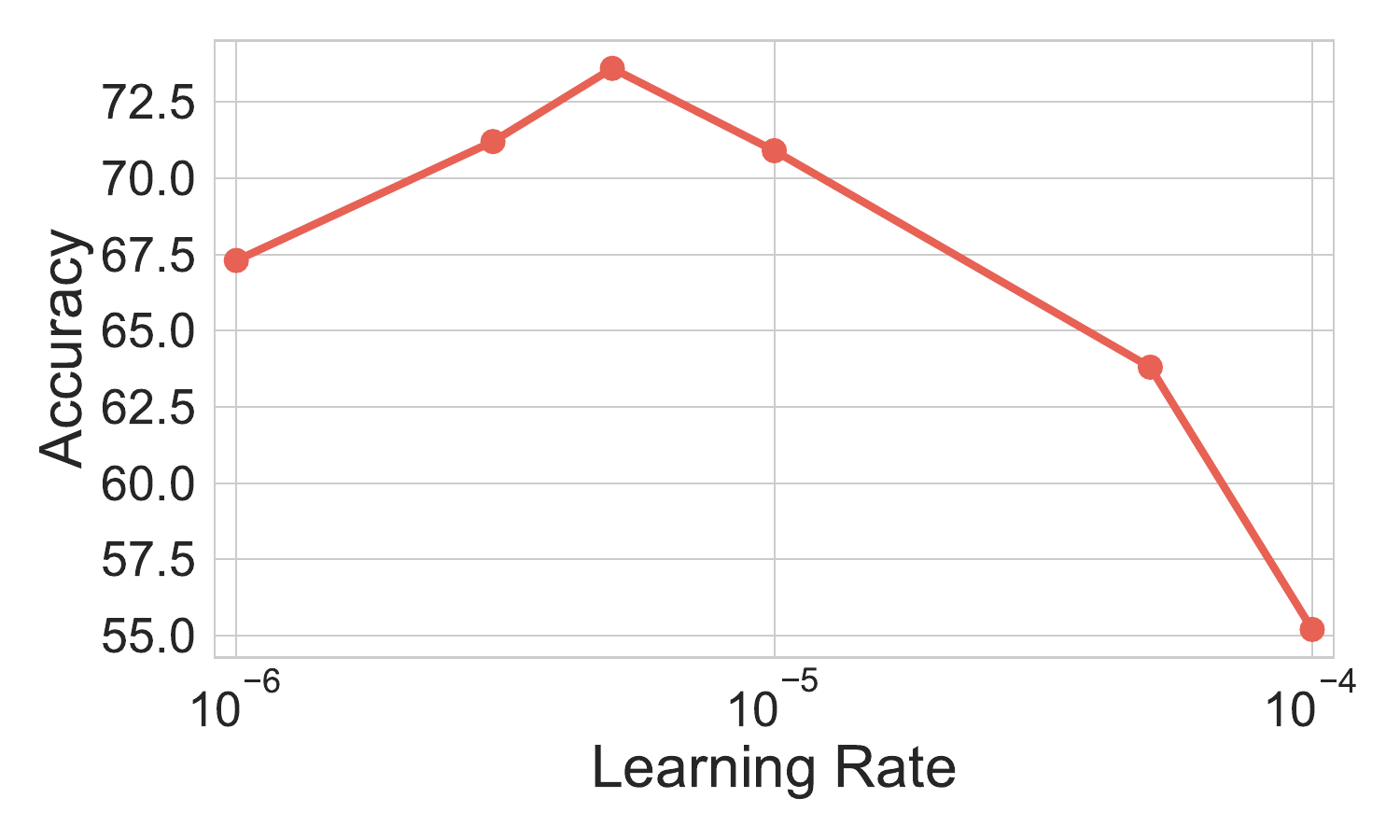}
    }
    \subfigure{
        \includegraphics[width=0.31\textwidth]{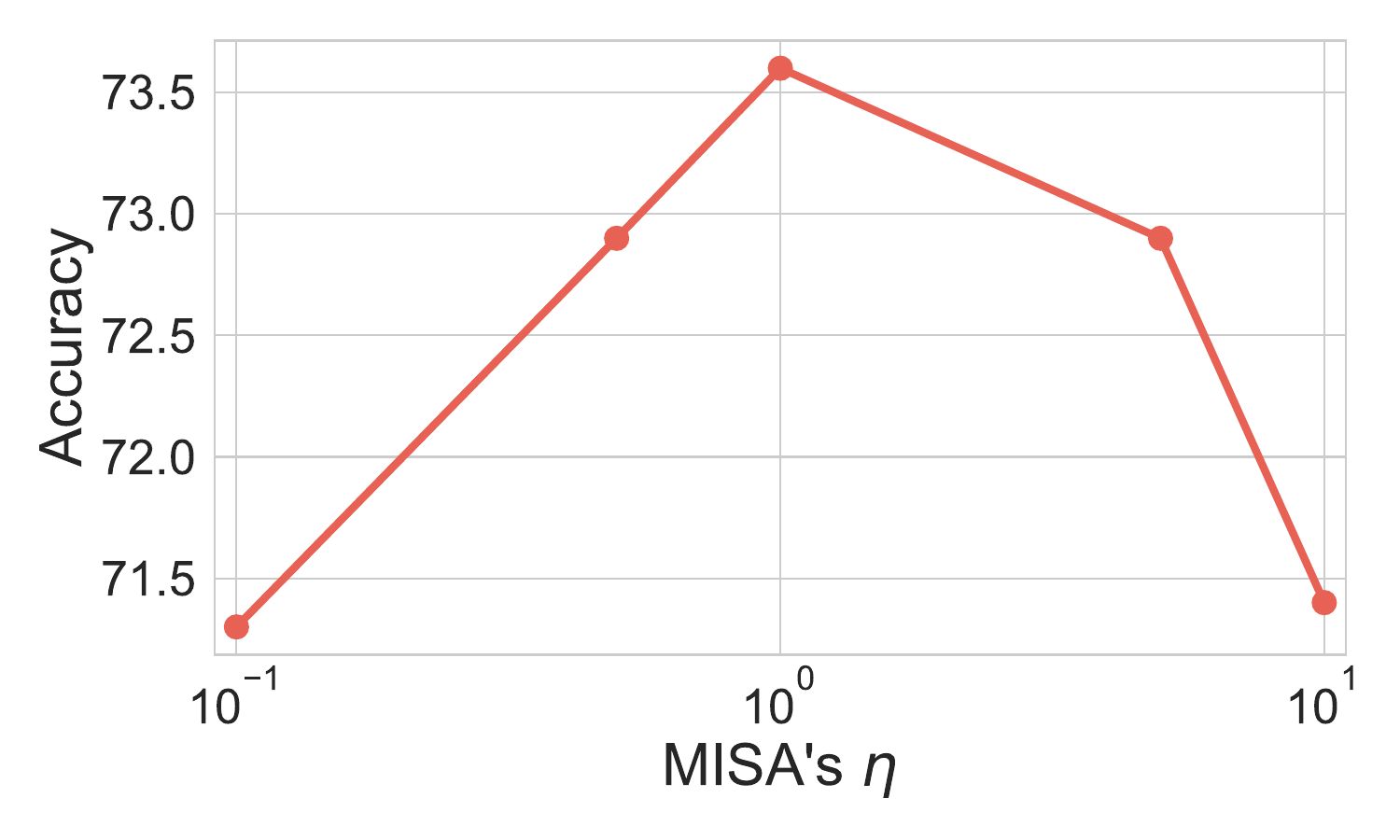}
    }
    \vspace{-8px}
    \caption{\small The accuracy of LLaMA3-8B on math reasoning tasks with different learning rate and $\eta$.}
    \label{fig: hyperparameters}
\end{figure}

Fig.~\ref{fig:delta} illustrates the validation loss over three epochs of fine‑tuning LLaMA2‑7B on the Alpaca‑GPT4 dataset for different $\delta$. We observe that larger $\delta$ lead to more severe overfitting in the third epoch, further supporting the interpretation of MISA as a regularization method of full parameter fine-tuning.  

\begin{figure}[H]
    \centering
            \includegraphics[width=0.3\linewidth]{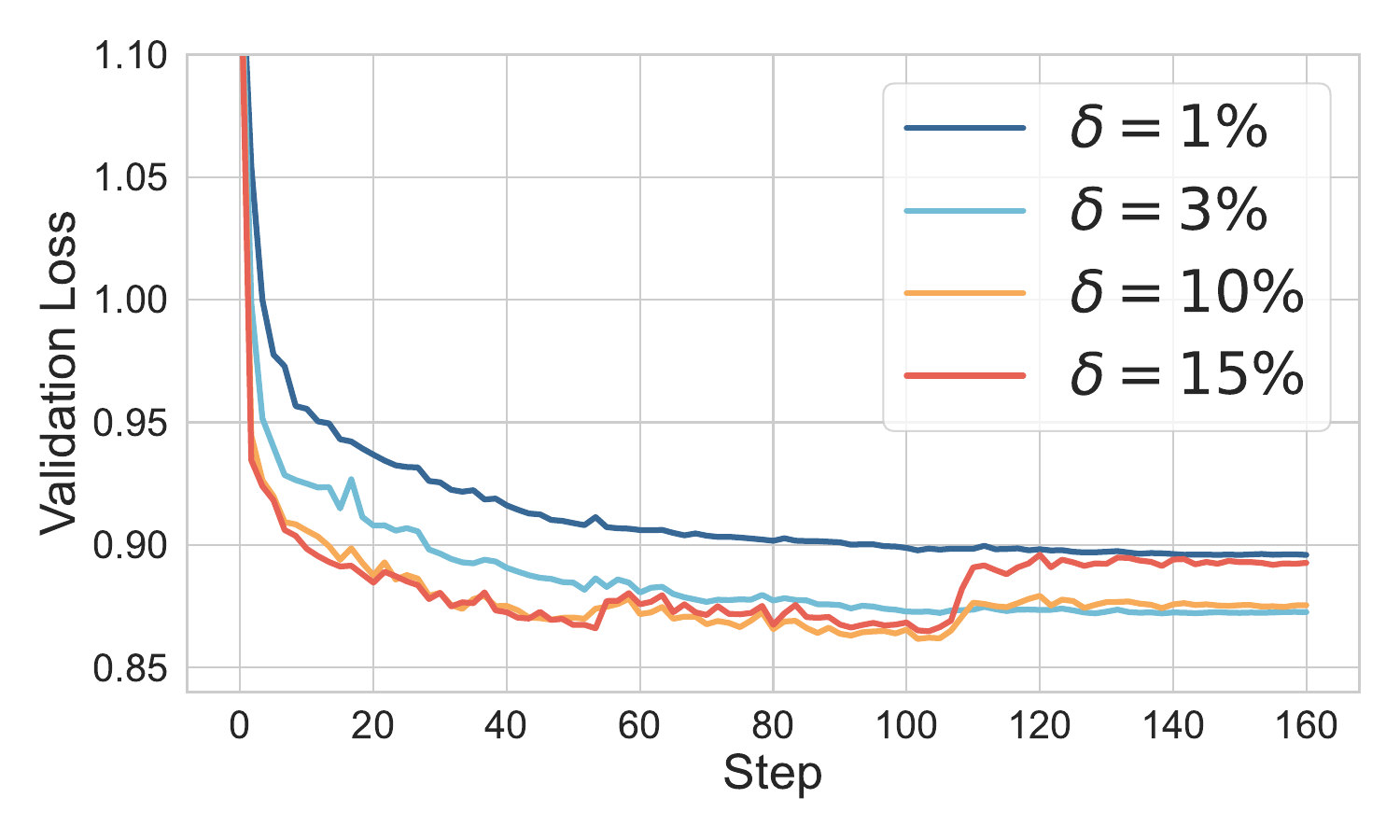}
        \vspace{-0.1in}
    \caption{\small Validation loss of LLaMA2-7B on Alpaca-GPT4 dataset.}
    \label{fig:delta}
\end{figure}

\subsection{Impact of Inner-Loop Iteration $T$}
We conduct an ablation study to evaluate the sensitivity of MISA to the inner-loop iteration count \(T\) (defined in Algorithm \ref{alg:MISA}, representing the number of Adam updates for a sampled module before switching to another module). Experiments are conducted on the Mistral-7B model fine-tuned on the Alpaca-GPT4 dataset, with other hyperparameters fixed as: learning rate = 1e-5, \(\delta = 3\%\), \(\eta = 0.5\), and batch size = 16. Table~\ref{table:ablation_T} reports the validation loss and MMLU score across different \(T\) values.

\begin{table}[h]
\centering
\caption{Ablation Study of the Inner-Loop Iteration $T$ for the Mistral-7B Model on Alpaca-GPT4}
\label{table:ablation_T}
\begin{tabular}{c|cc}
\toprule
$T$ & Validation Loss ($\downarrow$) & MMLU (5-shot, $\uparrow$) \\
\midrule
5   & 0.877                          & 46.22                      \\
15  & 0.874                          & 46.23                      \\
30  & 0.871                          & 46.17                      \\
50  & 0.873                          & 46.27                      \\
100 & 0.877                          & 46.19                      \\
200 & 0.881                          & 46.01                      \\
500 & 0.879                          & 45.89                      \\
\bottomrule
\end{tabular}
\end{table}

As shown in Table~\ref{table:ablation_T}, varying \(T\) from 5 to 500 does not significantly affect the convergence of validation loss or the MMLU performance. This observation aligns with the findings in BAdam \citep{luo2024badam} and LISA  \citep{pan2024lisa}, confirming that MISA’s convergence is robust to the choice of \(T\)—a result supported by our theoretical analysis in Theorem \ref{thm:main convergence}, where MISA is proven to converge for any number of inner-loop updates \(T\). Notably, when \(T \geq 200\), we observe a slight degradation in MMLU score (e.g., 46.01 at \(T=200\) and. 46.27 at \(T=50\)), which may stem from over-fitting to the sampled module during extended inner-loop updates. Thus, we recommend setting \(T=30\)–\(50\) in practice to balance computational efficiency and performance.

\subsection{Ablation Study: Sampling Strategy}
To validate the necessity of MISA's sampling strategy, we explored alternative approaches. The Uniform strategy samples modules randomly without computing importance scores, effectively replacing BAdam’s layer-wise optimization with module-wise optimization. The Top-K strategy selects the most important modules while ensuring that the total sampled parameters remain below \( \delta \) of all parameters. In contrast, the Bottom-K strategy selects the least important modules under the same \( \delta \) constraint. As shown in Table~\ref{table:SamplingStrategy}, the Uniform and Top-K strategies achieve similar performance but remain inferior to MISA, while the Bottom-K strategy significantly reduces accuracy. We observed that with the Top-K strategy, many modules are never sampled, which may explain its suboptimal performance. The effectiveness of \method~can be attributed to its balanced approach, combining elements of both the Uniform and Top-K strategies. It prioritizes modules with higher importance scores while ensuring that all modules have a chance to be sampled.

\definecolor{deepgreen}{RGB}{0,100,0}
\definecolor{deepblue}{RGB}{52, 78, 237} 
\begin{table}[H]
\small
\centering
\caption{\small Results of different sampling strategies for fine-tuning LLaMA3-8B on math and commonsense reasoning tasks. The reported scores indicate the average accuracy across all tasks. }
\label{table:SamplingStrategy}
\vskip 0.1in
\renewcommand{\arraystretch}{1.1} 
\setlength{\tabcolsep}{4px}{
\begin{tabular}{lcc}
\toprule
 \textbf{Strategy} & \textbf{Math} & \textbf{Commonsense}\\
\midrule 
\textbf{\method~}&\textbf{73.6}&\textbf{86.6} \\
Uniform&71.1\tiny{\textcolor{deepgreen}{(-2.5)}}&85.9\tiny{\textcolor{deepgreen}{(-0.7)}} \\
Top-K&71.2\tiny{\textcolor{deepgreen}{(-2.4)}}&86\tiny{\textcolor{deepgreen}{(-0.6)}} \\
Bottom-K&69.6\tiny{\textcolor{deepblue}{(-4.0)}}&82.1\tiny{\textcolor{deepblue}{(-4.5)}} \\
\bottomrule
\end{tabular}}
\renewcommand{\arraystretch}{1.0} 
\end{table}

\subsection{Ablation Study: Importance Scoring Methods}
In \eqref{eq:solution:xue-ky}, we identify each module’s importance score via its gradient norm. To validate its effectiveness, we also evaluated two alternative scoring functions: the module’s weight norm and its total parameter count. Table~\ref{table:experiment_scoring_func} compares those importance scoring methods and demonstrates that MISA’s sampling strategy significantly outperforms the alternatives, thereby validating the efficacy of MISA.
\begin{table}[!htb]
\centering
\caption{\small Comparison of different scoring function in MISA. The accuracy of commonsense resoning and math reasoning benchmarks are averaged across all tasks. }
\label{table:experiment_scoring_func}
\setlength{\tabcolsep}{6px}{
\begin{tabular}{llccccc}
\toprule
\textbf{Model} & \textbf{Method}  & \textbf{MMLU} & \textbf{MMLU-pro} & \textbf{Commonsense}& \textbf{Math} \\
\midrule
\multirow{3}{*}{LLaMA3-8B} 
&Weight Norm&64.5&35.9&85.7&71.9\\
&Number of Parameters&63.7&36.2&85.9&72.7\\
&\textbf{MISA(Gradient Norm)}&65&36.5&86.6&73.6 \\
\bottomrule
\end{tabular}}
\renewcommand{\arraystretch}{1.0}
\end{table}

\subsection{Ablation Study: Impact of Each Module}\label{exp:impact_of_single_module}
To assess the impact of each module, we applied \method~to individual modules while keeping all others frozen. Fig.~\ref{fig:single_module} shows that the effectiveness of fine-tuning follows the order: \( W_q, W_k < W_v < W_o, W_{gate}, W_{up}, W_{down} \).  Interestingly, when applying MISA to all modules, the sampling frequency does not strictly follow this order, as shown in Fig.~\ref{fig:sampled_times}. We think there may be three reasons: (1) Sampling frequency does not directly reflect importance—more critical layers may require less fine-tuning for optimal performance. (2) When training multiple modules simultaneously, the combined importance of those modules may not be simply the sum of each module’s individual importance. (3) Module sizes vary significantly; for instance, \( W_{gate}, W_{up}, W_{down} \) contain over ten times as many parameters as \( W_k \) and \( W_v \). Small-size modules are less likely to exceed the sampling threshold \( \delta \), leading to more frequent sampling.

\begin{figure}[H]
    \centering
            \includegraphics[width=0.5\linewidth]{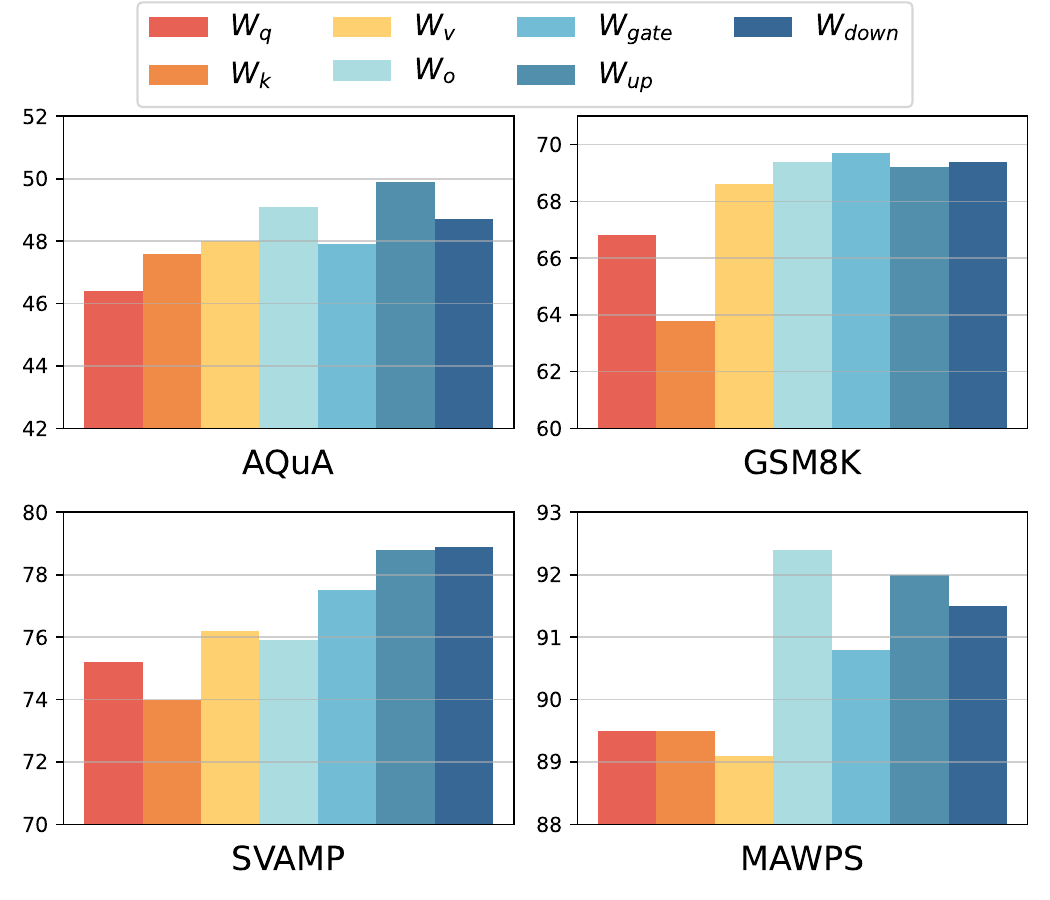}
        \vspace{-0.2in}
    \caption{\small Accuracy comparison  on math reasoning tasks when sample each individual module separately.}
    \label{fig:single_module}
\end{figure}

\begin{figure}[H]
    \centering
            \includegraphics[width=0.6\linewidth]{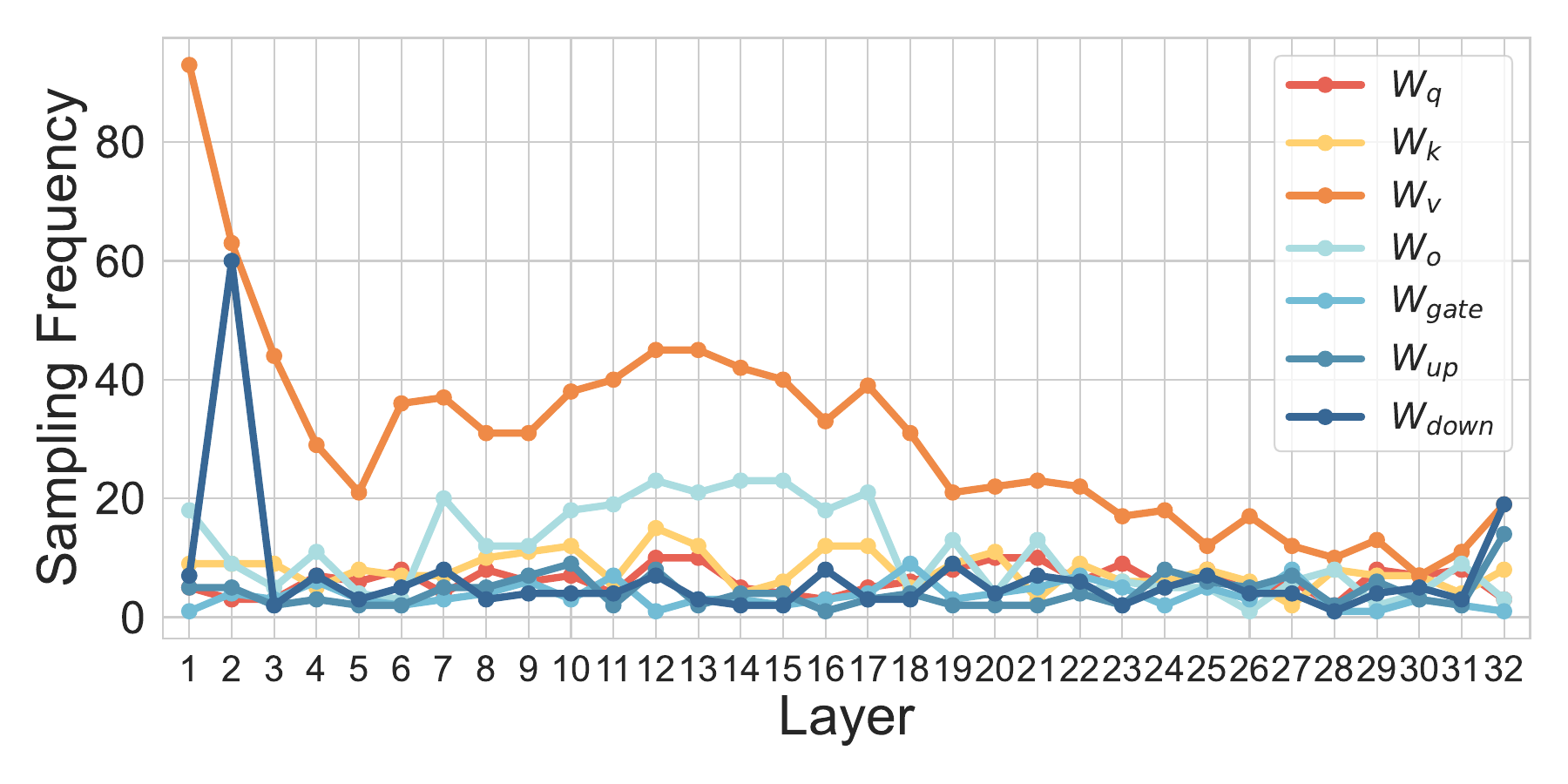}
        \vspace{-0.1in}
    \caption{{The frequency of module sampling across all layers when applying \method~to LLaMA3-8B. }}
    \label{fig:sampled_times}
\end{figure}

Therefore, we further applied both uniform and MISA sampling strategies to individual modules in isolation, in order to assess which modules benefit most from MISA. As shown in Table\ref{table: single module}, when fine-tuning $W_q, W_{gate},W_{down}$ and $W_{down}$, MISA achieves substantially larger accuracy gains than uniform sampling.

\definecolor{deepred}{RGB}{219,27,27} 
\begin{table}[h]
\label{}
\centering
\caption{
Average accuracy of LLaMA3-8B on math reasoning benchmark when fine-tuning only 1 module type with uniform and MISA sampling strategies. \textcolor{deepred}{Red} indicates an improvement of more than 1\%.}
\label{table: single module}
\renewcommand{\arraystretch}{1.1} 
\setlength{\tabcolsep}{3.8px}{
\begin{tabular}{llcccccccc}
\toprule
\textbf{Model} & \textbf{Method}  & $\mathbf{W_q}$ & $\mathbf{W_k}$ & $\mathbf{W_v}$ & $\mathbf{W_o}$ & $\mathbf{W_{gate}}$ & $\mathbf{W_{up}}$ & $\mathbf{W_{down}}$  \\
\midrule
\multirow{2}{*}{LLaMA3-8B} 
&Uniform&70&67.1&70.5&71.2&69.2&71.4&70.5 \\
&\textbf{MISA}&69.5&\textcolor{deepred}{68.8}& 70.5&71.7&\textcolor{deepred}{71.5}&\textcolor{deepred}{72.5}& \textcolor{deepred}{72.1}  \\
\bottomrule
\end{tabular}}
\renewcommand{\arraystretch}{1.0}
\end{table}
\vspace{3.5cm}
\section{Theoretical Analysis.}
\label{app-theory}
\subsection{Notation}
\begin{itemize}
\setlength{\itemsep}{8pt}
    \item Let $f\ :\ \mathbb{R}^d\rightarrow \mathbb{R}$ be the objective function to minimize.
    \item Let $f^*\ :=inf_{\theta \in \mathbb{R}^d}f\left( \theta \right) 
$ be the optimal value of $f$.
    \item  Let $\nabla f$ denote the estimator for the gradient of $f$ at the queried points.
 \item Let $\alpha$ denote the learning rate.
 \item Let $n$ denote the block epoch, $n=1,2,...,N$.
  \item Let $t$ denote the inner iterations for a certain block, $t=0,1,...,T$.
 \item Let $\xi^{n,t}$ denote the sampled data at $n$-th block epoch and $t$-th inner iteration. 
 \item Let $\tau_n$ denote the sampled block at $n$-th block epoch.
 \item Let $b$ denote the block's ID, $b=1,2,...,B$.
 \item Let $\theta_{\tau_n}^{n,t}$ denote the parameter when selected module $\tau_n$ at $n$-th block epoch and t-th inner adam step.
 \item Let $g_{\tau_n}^{n,t}$ denote the partial derivative conditioned on data $\xi^{n,t}$ at $n$-th block epoch and $t$-th inner adam step.
 \item Let $\tilde{G}_{\tau_n}^{n,T}=\frac{1}{T}\sum_{t=1}^T\lVert g_{\tau_n}^{n,t}\rVert^2$ be the cumulative gains during a sampling period.
 \item Let $\varGamma _{\tau_n}^{n,t} =  \text{Diag}^{-1/2} (\tilde{v}_{\tau_n}^{n,t} + \varepsilon, \lVert v_{\tau_n}^{n,t-1} \rVert _{\min} + \varepsilon), \ \ \tilde{v}_{\tau_n}^{n,t}=\max \left( v_{\tau_n}^{n,t},\lVert \tilde{v}_{\tau_n}^{n,t-1} \rVert _{\max} \right) $.
 \item  Let $\theta^{n,t}$ denote the full parameter at n-th block epoch and t-th inner adam step. 
 
 $\theta^{n,t}: =\left(\theta_1^{n-1,T},  \theta_2^{n-1,T},...,\theta_{\tau_n-1}^{n-1,T},\theta_{\tau_n}^{n,t},\theta_{\tau_n+1}^{n-1,T},..., \theta_B^{n-1,T} \right)$.
 \item Let $\mathcal{F}_t$ denote the filtration containing all historical
variables at and before computing $\theta_{\tau_n}^{n,t}$. 

$\mathcal{F}_t\ :=\left\{ \theta_{\tau_n}^{n,t},\xi^{n,t} g_{\tau_n}^{n,t},m_{\tau_n}^{n,t-1},v_{\tau_n}^{n,t-1},\theta_{\tau_n}^{n,t-2},...,\xi^{n,1},g_{\tau_n}^{n,1},m_{\tau_n}^{n,0},v_{\tau_n}^{n,0},\theta_{\tau_n}^{n,0} \right\} 
$
\item Let $\mathbb{E}_t$ denote the expectation conditioned on $\mathcal{F}_t$.
\end{itemize}
To enhance the smoothness of the proof procedure, we utilized three techniques.\par
\subsection*{1. Second-order Momentum Inheritance}
To establish Lemma~\ref{delta gemma}, we utilize a second-order momentum inheritance technique. Upon completing the $(n-1)$-th block epoch, we initialize the second-order momentum for the $n$-th block epoch as:
\[
v_{\tau_{n}}^{n,0} = \|\tilde{v}_{\tau_{n-1}}^{n-1,T}\|_{\max}I \quad \text{for} \quad n \geq 2
\]
This initialization strategy maintains memory efficiency, with its additional overhead being negligible compared to gradient storage requirements.

\subsection*{2. Additional Momentum Step}
The additional momentum step arises naturally from the variable transformation in \eqref{variable substitution}. This carefully designed operation prevents momentum wastage in $\tau_{n}$ during the $n$-th block epoch, significantly simplifying the analysis at block transition points. The step is defined as:
\begin{equation}
\theta_{\tau_{n}}^{n+1,0}\gets\theta_{\tau_{n}}^{n,T}-\alpha\frac{\beta_1}{1-\beta_1}\frac{{m}_{\tau_{n}}^{n,T}}{\sqrt{{v}_{\tau_{n}}^{n,T}+\varepsilon}}
    \label{eq:momentum_correction}
\end{equation}

\subsection*{3. Single-module Gradient Propagation}
While MISA typically activates multiple modules simultaneously, our theoretical analysis focuses on the single-module case for clarity. The multi-module scenario can be represented by introducing a diagonal projection matrix $Q_n$, transforming the stochastic gradient in Algorithm~\ref{alg:MISA analytical view} to:
\[
g_{L_n}^{n,t} = Q_n\left.\frac{\partial}{\partial \theta^{n,t-1}}f\left(\theta^{n,t-1}\right)\right|_{\xi^{n,t}}
\]
where $L_n$ denotes the set of activated modules at the $n$-th step. Our analysis naturally extends to this generalized case.
\begin{algorithm}[H]
\caption{\textbf{MISA:Analytical View}}
\label{alg:MISA analytical view}
\begin{algorithmic}[1]
{\color{black}
\REQUIRE $\theta^0$,$N,T,B$, $\eta,\alpha,\delta >0
$, $\beta_1,\beta_2\in(0,1)$ and the sampling block $\tau_n$ at outer loop $n$ \vspace{0.5mm} 
        \STATE Partition the model into $B$ modules (not layers);  \hspace{2.2cm}$\mbox{\color{blue}$\triangleright\,$ \footnotesize{Partition weights into modules};}$ \vspace{0.5mm}
        \STATE Initialize probability weights $P^1=( \frac{1}{B},...,\frac{1}{B})$;
        \STATE Initialize the module gradient estimate $G_b^0 = 0$} for $b\in [B]$ and let $G^0=(G_1^0,\cdots,G_B^0) $;
        \FOR{$n=1,...,N$}
            \STATE Sample a module $\tau_n$ according to $P^n$;  \hspace{4.5cm}$\mbox{\color{blue}$\triangleright\,$ \footnotesize{Importance sampling};}$
            \STATE Initialize $m_{\tau_n}^{n,0} = 0, v_{\tau_n}^{1,0}=0, v_{\tau_{n}} ^{n,0}=\rVert\tilde{v}_{\tau_{n-1}} ^{n-1,T}\rVert_{max}I$ if $n\ge2$;
            \FOR{$t=1,...,T$}
                \STATE Sample a batch of data $\xi^{n,t} \sim D$
                \STATE Calculate block stochastic gradient $\left. g_{\tau_n}^{n,t}=\frac{\partial}{\partial \theta _{\tau_n}^{n,t-1}}f\left( \theta _{\tau_n}^{n,t-1} \right) \right|_{\xi^{n,t}}$ for selected module $\tau_n$;
                \STATE Update the corresponding first-order and second-order momentum as follows: \vspace{1mm}
                \STATE $m_{\tau_n}^{n,t}\gets \beta _1m_{\tau_n}^{n,t-1}+\left( 1-\beta _1 \right) g_{\tau_n}^{n,t}, \quad $  $v_{\tau_n}^{n,t}\gets \beta _2v_{\tau_n}^{n,t-1}+\left( 1-\beta _2 \right) (g_{\tau_n}^{n,t})^2$ \vspace{1mm}
                \STATE Update $\tilde{v}_{\tau_n}^{n,t}$ as follows:
                \STATE 
            $\tilde{v}_{\tau_n}^{n,t}=\max \left( v_{\tau_n}^{n,t},\lVert \tilde v_{\tau_n}^{n,t-1} \rVert _{\max} \right) 
$ \hspace{3.8cm} $\mbox{\color{blue}$\triangleright\,$ \footnotesize{AMSGrad-type normalization};}$
                \STATE Update the corresponding module as follows:
                \STATE $\theta _{\tau_n}^{n,t}\gets \theta _{\tau_n}^{n,t-1}-\alpha {{m}_{\tau_n}^{n,t}}/{(\sqrt{\tilde{v}_{\tau_n}^{n,t}+\varepsilon})}$ 
           \ENDFOR
            \STATE Update $G_{b}^{n}$ for each $b\in [B]$ according to \eqref{eq:solution:xue-ky};  \hspace{2.8cm} $\mbox{\color{blue}$\triangleright\,$ \footnotesize{Track block gradient norm};}$
            \STATE Update $p_{b}^{n+1}\gets \frac{\exp \left( \eta {G_{b}^{n}} \right)}{\sum_{j=1}^B{\exp \left( \eta {G_{j}^{n}} \right)}}$ for each $b\in [B]$;   \hspace{2cm} $\mbox{\color{blue}$\triangleright\,$ \footnotesize{Update sampling probability};}$
            \STATE $\theta_{\tau_{n}}^{n+1,0}\gets\theta_{\tau_{n}}^{n,T}-\alpha\frac{\beta_1}{1-\beta_1}\frac{{m}_{\tau_{n}}^{n,T}}{\sqrt{{v}_{\tau_{n}}^{n,T}+\varepsilon}}$; 
             \hspace{4.3cm} $\mbox{\color{blue}$\triangleright\,$ \footnotesize{Additional momentum step};}$
            \STATE $g_{\tau_{n}}, m_{\tau_{n}}, v_{\tau_{n}}\ \gets None $ \hspace{6.2cm} $\mbox{\color{blue}$\triangleright\,$ \footnotesize{Clear optimizer states};}$
        \ENDFOR   
        
\STATE \textbf{Return} $\theta ^N$,$P^N$

\end{algorithmic}
\end{algorithm}

\begin{assumption}
    The loss function is $\overline{L}$-smooth. And when restricted on $i$-th block, it is $L_{i}$-smooth.
\begin{equation*}
    \lVert \nabla L\left( \theta ^1 \right) -\nabla L\left( \theta ^2 \right) \rVert \le \overline{L}\lVert \theta ^1-\theta ^2 \rVert ,\ \ \lVert \left. \frac{\partial L}{\partial \theta _i} \right|_{\theta _{i}^{1}}-\left. \frac{\partial L}{\partial \theta _i} \right|_{\theta _{i}^{2}} \rVert \le L_i\lVert \theta _{i}^{1}-\theta _{i}^{2} \rVert ,\ i=1,...,B.
\end{equation*}
Let $L=\max \left\{ \overline{L},L_i \right\} _{i=1,...,B}
$.
\label{assumption 1}
\end{assumption}

\begin{assumption}
    (Unbiased and bounded stochastic gradient). For all inner iterates $t \ge 
1$ at $n$-th block epoch, the stochastic
gradient $g_{\tau_n}^{n,t}$ is unbiased and uniformly bounded by some constant $G \ge 
0$.
\[
\mathbb{E}_t\left[ g_{\tau_n}^{n,t}  
\right] =\nabla f\left( \theta _{\tau_n}^{n,t-1} \right) ,\ \ \lVert g_{\tau_n}^{n,t} \rVert \le G.
\]
\label{assumption 2}
\end{assumption}

\begin{assumption}
    (Bounded variance). For all inner iterates $t$ at $n$-th block epoch, the variance of the stochastic gradient
$g_{\tau_n}^{n,t}$ is uniformly bounded by some constant $\sigma^2 \ge 0$.
 \[
\mathbb{E}_t\left[ \lVert g_{\tau_n}^{n,t}-\nabla f\left( \theta _{\tau_n}^{n,t-1} \right)  \rVert ^2 \right] \le \sigma ^2
 \]
 \label{aassumption 3}
\end{assumption}

    
\begin{proposition}
   (Proof of proposition \ref{theorem distribution 1}) Consider the optimization problem
\label{optimization problem}
    \begin{align}
\underset{p}{\max}\ \sum_{i=1}^BG_i^np_i - \eta\mathrm{KL}(p_i,q_B)\label{eq:optimization of G}\\
s.t.\ \ p_i\ge 0,\ \ \sum_{i=1}^B{p_{i}}=1\notag,
\end{align}
the solution is 
\begin{align*}
    p_i^{n+1}=\frac{\exp \left( \eta _0G_i^n\right)}{\sum_{b=1}^B{\exp \left( \eta _0G_i^n \right)}}
\end{align*}
where $\eta_0=\frac{1}{\eta} $.
\end{proposition}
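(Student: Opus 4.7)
The plan is to solve this convex optimization problem by the method of Lagrange multipliers. First I would observe that the objective is strictly concave in $p$: the linear term $\sum_i G_i^n p_i$ is trivially concave, and since $\mathrm{KL}(\cdot,q_B)$ is strictly convex in its first argument, $-\eta\,\mathrm{KL}(p,q_B)$ is strictly concave for $\eta>0$. The feasible set (the probability simplex) is convex, so the KKT conditions are both necessary and sufficient, and the maximizer is unique.

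Next I would form the Lagrangian
\[
\mathcal{L}(p,\lambda,\mu) = \sum_{i=1}^B G_i^n p_i - \eta\sum_{i=1}^B p_i\log\!\frac{p_i}{q_{B,i}} - \lambda\!\left(\sum_{i=1}^B p_i - 1\right) + \sum_{i=1}^B \mu_i p_i,
\]
with $\mu_i\ge 0$ for the nonnegativity constraints. Before writing out first-order conditions, I would rule out boundary solutions by noting that $\partial_{p_i}[-\eta p_i\log(p_i/q_{B,i})] = -\eta(\log(p_i/q_{B,i})+1) \to +\infty$ as $p_i\to 0^+$, so the objective's gradient in $p_i$ blows up toward the interior. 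Hence every optimal $p_i^\star$ is strictly positive, and by complementary slackness $\mu_i=0$.

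Setting $\partial\mathcal{L}/\partial p_i = 0$ then gives $G_i^n - \eta\log(p_i/q_{B,i}) - \eta - \lambda = 0$, which rearranges to
\[
p_i = q_{B,i}\exp\!\left(\frac{G_i^n-\eta-\lambda}{\eta}\right) = q_{B,i}\,\exp(\eta_0 G_i^n)\cdot C,
\]
where $\eta_0 = 1/\eta$ and $C$ is the $i$-independent factor $\exp(-1-\lambda/\eta)$. Imposing the normalization $\sum_i p_i = 1$ determines $C$, and because $q_{B,i}=1/B$ is uniform it cancels from numerator and denominator, yielding
\[
p_i^{n+1} = \frac{\exp(\eta_0 G_i^n)}{\sum_{b=1}^B \exp(\eta_0 G_b^n)},
\]
which is exactly the claimed softmax form.

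This proof contains essentially no obstacle: it is the classical entropy-regularized linear programming / Gibbs-variational computation that underlies softmax in maximum-entropy inference. The only places requiring care are (i) verifying that the interior KKT point is the global maximizer, which follows from strict concavity of the objective, and (ii) handling the boundary $p_i=0$ to justify that Lagrange multipliers for the nonnegativity constraints vanish, which I would handle via the $\log$ blow-up argument above. Everything else is routine algebra.
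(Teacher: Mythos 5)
Your proposal is correct and follows essentially the same route as the paper: form the Lagrangian for the entropy-regularized linear objective, set the stationarity condition, and normalize to obtain the softmax. The only difference is that you are somewhat more careful — you explicitly invoke strict concavity for uniqueness and rule out the boundary $p_i=0$ via the $\log$ blow-up so the nonnegativity multipliers vanish — whereas the paper drops the inequality constraints from the Lagrangian without comment; this extra rigor is welcome but the computation is identical.
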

\begin{proof}
We consider the equivalence problem
\[
\underset{p^n}{\max}\ \eta_0\sum_{i=1}^BG_i^np_i - \mathrm{KL}(p_i,q_B)
\]
Where $\eta_0=\frac{1}{\eta}$.\par
    We construct the Lagrangian function $\mathcal{L}(P,\lambda)$ of the optimization problem
\[
\mathcal{L}(P,\lambda)=\eta_0\sum_{i=1}^B  p_iG_i^n-\sum_{i=1}^B{p_ilog\frac{p_i}{q_B}}+\lambda(\sum_{i=1}^B{p_i}-1)
\]
We take the partial derivative of $p_i$ and obtain
\[
\frac{\partial \mathcal{L}}{\partial p_i}=\log q_B-\log p_{i}-1+ \eta_0 G_i^n+\lambda =0, \quad  i=1,2,...,B
\]
Which is equivalent to
\[
p_i=q_B\exp \left( \eta_0 G_i^n+\lambda-1 \right), \quad i=1,2,...,B.
\]
According to the normalization condition, we have
\[
1=\sum_{i=1}^B{p_i}=\sum_{i=1}^B{q_B\exp \left( \eta_0 G_i^n+\lambda-1  \right)}
\]
Let $Z=\exp(1-\lambda)$, we have
\[
Z=\sum_{i=1}^B{q_B \exp(\eta_0 G_i^n)}
\]
Which implies 
\[
p_i^{n+1}=\frac{q_B\exp \left( \eta _0G_i^n\right)}{\sum_{i=1}^B{q_B\exp \left( \eta _0G_i^n \right)}}=\frac{\exp \left( \eta _0G_i^n\right)}{\sum_{i=1}^B{\exp \left( \eta _0G_i^n \right)}}
\]

\end{proof}

\begin{proposition}
(Proof of proposition \ref{thm:variance MISA and LISA} )
     Suppose that in each layer we have $K$ modules such that $\theta_i=(\theta_{i_1},\theta_{i_2},...,\theta_{i_K})$, the sampling probability of each layer be $p_i$, each module be $p_{i,j}$, the gain of each module be $G_{i,j}$ at $n$-th step and certain iteration, we have
\[
\frac{1}{K}\sum_{i=1}^B G_i^np_i \le \sum_{i=1}^B\sum_{j=1}^KG_{i,j}^np_{i,j}
\]
\end{proposition}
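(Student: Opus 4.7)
The plan is to prove the inequality by exhibiting a specific feasible module-wise distribution that already matches the left-hand side, and then invoking the optimality of $\{p_{i,j}\}$ for the module-wise problem. First, I would recall that the full-layer gradient $g_i$ is the concatenation of its module-level gradients $(g_{i,1},\dots,g_{i,K})$, so the gains satisfy the additive decomposition $G_i^n = \sum_{j=1}^{K} G_{i,j}^n$. This identity is the algebraic bridge between the two sides of the claim.

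Second, given the layer-wise distribution $\{p_i\}$ with $\sum_i p_i = 1$, I would introduce the embedding $\tilde{p}_{i,j} := p_i/K$, which is feasible for the module-wise problem since $\sum_{i,j}\tilde{p}_{i,j} = \sum_i p_i = 1$. Plugging this candidate into the module-wise expected gain yields
\[
\sum_{i=1}^{B}\sum_{j=1}^{K} G_{i,j}^n \tilde{p}_{i,j} \;=\; \frac{1}{K}\sum_{i=1}^{B} p_i \sum_{j=1}^{K} G_{i,j}^n \;=\; \frac{1}{K}\sum_{i=1}^{B} p_i G_i^n,
\]
which is exactly the left-hand side of the inequality.

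Finally, since $\{p_{i,j}\}$ is taken to be the maximizer of the module-wise objective over all probability distributions on $[B]\times[K]$ (as guaranteed by Proposition~\ref{thm:variance MISA and LISA}), its value dominates that of any feasible candidate, and in particular $\sum_{i,j} G_{i,j}^n p_{i,j} \ge \sum_{i,j} G_{i,j}^n \tilde{p}_{i,j} = \frac{1}{K}\sum_i p_i G_i^n$. The main subtlety I expect is getting the normalization right---the embedding must use $p_i/K$, not $p_i$, since the latter would violate the constraint $\sum_{i,j} p_{i,j}=1$---and also noting that $\{p_{i,j}\}$ must be interpreted as the optimal module-wise distribution rather than an arbitrary one, since otherwise the inequality could fail by concentrating mass on modules with small gain. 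Once this bookkeeping is in place, the proof is a one-line application of optimality and presents no further obstacle.
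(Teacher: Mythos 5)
Your proof has a genuine gap in the final optimality step. You invoke the fact that $\{p_{i,j}\}$ maximizes ``the module-wise objective'' to conclude $\sum_{i,j} G_{i,j}^n p_{i,j} \ge \sum_{i,j} G_{i,j}^n \tilde p_{i,j}$, but what $\{p_{i,j}\}$ actually maximizes is the KL-regularized objective $\sum_{i,j} G_{i,j}^n p_{i,j} - (1/\eta)\,\mathrm{KL}(p,q_{BK})$ from problem \eqref{eq:prob:114:xue}. Optimality therefore only yields
\[
\sum_{i,j} G_{i,j}^n p_{i,j} - \tfrac{1}{\eta}\mathrm{KL}(p^*,q_{BK}) \;\ge\; \sum_{i,j} G_{i,j}^n \tilde p_{i,j} - \tfrac{1}{\eta}\mathrm{KL}(\tilde p,q_{BK}),
\]
and dropping the two KL terms is only valid when $\mathrm{KL}(p^*,q_{BK})\ge\mathrm{KL}(\tilde p,q_{BK})$, which need not hold. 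This is more than a bookkeeping issue: under your reading of $p_i$ as an independently given layer-level distribution (e.g.\ the layer-wise softmax solution), the target inequality is in fact \emph{false}. Take $B=K=2$, $\eta=1$, $G_{1,1}=G_{1,2}=1$, $G_{2,1}=G_{2,2}=0$. Then the layer-wise softmax gives $p_1\approx 0.881$, so the left-hand side is $\tfrac12(2\cdot 0.881)\approx 0.881$, while the module-wise softmax gives $p^*_{1,j}\approx 0.366$ and a right-hand side of about $0.731$. The inequality reverses.

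The paper's proof avoids this by taking $p_i \DefinedAs \sum_{j} p_{i,j}$, so that $p_i$ is the marginal of the \emph{module-wise} sampling distribution, not a separately specified layer-wise optimum. With this identification, the claim reduces per layer to comparing the arithmetic mean $\tfrac1K\sum_j G_{i,j}^n$ with the weighted average $\sum_j G_{i,j}^n \hat p_{i,j}$ (where $\hat p_{i,j} = p_{i,j}/p_i$), and the crucial ingredient is not optimality but the \emph{structure} of the softmax solution: because $p_{i,j}\propto\exp(\eta G_{i,j}^n)$, the weights $\hat p_{i,j}$ are similarly ordered with $G_{i,j}^n$, and Chebyshev's sum inequality then gives the per-layer bound, which is summed over $i$. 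Your argument as written proves the statement of the main-text Proposition~\ref{thm:variance MISA and LISA} (comparison of the full regularized objectives), but not the appendix inequality on the gain terms alone; to repair it you would need to either re-interpret $p_i$ as the marginal and switch to a Chebyshev-type argument, or explicitly control the KL gap.
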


\begin{proof}
    We consider $p_i=\sum_{j=1}^Kp_{i,j}$, which means the sum of each module's sampling probability is equal to the sampling probability of the corresponding layer. \par
Noting that in layer-wise setting, the sampling probability of each module is $\frac{p_i}{K}$. \par
    Next, we will prove
\[
{G_{i}^n} \frac{p_i}{K} \le\sum_{j=1}^K{G_{i,j}^n p_{i,j}}
\]
Noting that gain $G_i^n$ is equal to the sum of each individual module gains $G_{i,j}^n$, we have
\begin{equation}
        \frac{\sum_{j=1}^KG_{i,j}^n}{K}\le \sum_{j=1}^K{G_{i,j}^n\hat{p}_{i,j}}
        \label{eq:two averages}
\end{equation}
   
Where $\hat{p}_{i,j}=\frac{p_{i,j}}{p_{i}}, \sum_{j=1}^K{\hat{p}_{i,j}=1}$.\par
By symmetry, we can assume without loss of generality that $G_{i,j}^n$ is ordered as follows. 
\begin{equation}
    G_{i,1}^n \le G_{i,2}^n \le...\le G_{i,K}^n
    \label{xu_1}
\end{equation}

From Proposition \ref{optimization problem}, it's obvious that 
\begin{equation}
    \hat{p}_{i,1} \le \hat{p}_{i,2}\le...\le \hat{p}_{i,K}
    \label{xu_2}
\end{equation}
The left side of inequality \eqref{eq:two averages} is arithmetic average, the ride side is weighted average of $G_{i,j}^n$.\par
From Chebyshev's inequality, inequality \eqref{eq:two averages} holds.
Summing over $i$, we obtain
\[
\frac{1}{K}\sum_{i=1}^B G_i^np_i \le \sum_{i=1}^B\sum_{j=1}^KG_{i,j}^np_{i,j}
\]

\end{proof}

\begin{corollary}
\label{lower bound of sampling}
    The sampling probability $\left\{ p_{i}^{n} \right\} _{i=1,2,...,B}^{n\in \mathbb{N}}$ in algorithm \ref{alg:MISA analytical view} has a uniform lower bound $\pi 
$.
\begin{proof}

From Assumption \ref{assumption 2}, for $\forall \ i\in[1,B], n\in \mathbb{N}^*$, we have
\begin{equation*}
    \tilde{G}_{i}^{n,T}=\frac{\sum_{t=1}^{T}\lVert g_i^{n,t}\rVert^2}{T}\le \frac{TG^2}{T}=G^2
\end{equation*}
We obtain
\begin{equation}
    G_{i}^{N}={\left( 1-\beta \right) G_{i}^{N-1}+\beta \tilde{G}_{i}^{N,T}} \le {\left( 1-\beta \right) G_{i}^{N-1} +\beta G^2}
    \label{g^n inequlity}
\end{equation}
Let $p=1-\beta$, $q=\beta G$. \par
Updating equation \eqref{g^n inequlity} for $K$ times, we obtain
\begin{equation*}
     G_{i}^{N+K}\le\frac{q}{1-p}+p^{N+k-1}\left(G_{i}^{N}-\frac{q}{1-p}\right)
\end{equation*}
Noting that $0<p<1$, for $\forall k \in \mathbb{N}^*$, we obtain
\begin{equation*}
    G_{i}^{N+K}\le G^2+(1-\beta)^{N}\left(G_{i}^{N}-G^2\right)
\end{equation*}
Let $\pi_i^*=\underset{1\le n\le N}{\max}\left\{ G_{i}^{n},G^2+(1-\beta)^{N}\left(G_{i}^{N}-G^2\right) \right\} 
$\par
We have
\begin{equation*}
    \frac{1}{B· e^{\eta·{\pi_i^*}}}\le \frac{\exp \left( \eta·{G_{i}^{n}} \right)}{\sum_{j=1}^B{\exp \left(\eta· {G_{j}^{n}} \right)}}=p_{i}^{n}
\end{equation*}

We get the lower bound $\pi_i =\frac{1}{Be^{\eta \pi_i^*}}
$\par

Let $\pi=min{\{\pi_i\}}$, which is the uniform lower bound of $\left\{ p_{i}^{n} \right\} _{i=1,2,...,B}^{n\in \mathbb{N}}$.
\label{probability bound}
\end{proof}
\end{corollary}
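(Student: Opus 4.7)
The plan is to leverage the bounded-stochastic-gradient assumption (Assumption \ref{assumption 2}) to obtain a uniform upper bound on the gain sequence $\{G_i^n\}$, and then translate that bound into a positive lower bound on $p_i^n$ via the softmax formula for the sampling probability.

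First I would observe that, by Assumption \ref{assumption 2}, $\tilde{G}_i^{n,T} = \frac{1}{T}\sum_{t=1}^T\|g_i^{n,t}\|^2 \le G^2$ almost surely, uniformly in $i$ and $n$. Recalling the update rule in \eqref{eq:solution:xue-ky}, when block $i$ is sampled $G_i^n$ is a convex combination of $G_i^{n-1}$ and $\tilde G_i^{n,T}$, and otherwise $G_i^n = G_i^{n-1}$. In either case $G_i^n \le \max\{G_i^{n-1},\, G^2\}$, so a one-line induction starting from $G_i^0 = 0$ yields $0 \le G_i^n \le G^2$ for every $n \ge 0$ and every $i \in [B]$.

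Given this uniform upper bound, I would then lower-bound the softmax by pushing the numerator down using $G_i^{n-1}\ge 0$ and pushing each term in the denominator up using $G_j^{n-1}\le G^2$, giving $p_i^n = \exp(\eta G_i^{n-1}) / \sum_{j=1}^B \exp(\eta G_j^{n-1}) \ge 1/(B\exp(\eta G^2)) =: \pi$. The resulting $\pi > 0$ depends only on $B$, $\eta$, and the gradient bound $G$, and is independent of both $i$ and $n$, which is exactly the uniform lower bound claimed. Taking the minimum over $i$ is then trivial since the bound already holds for every $i$.

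I do not anticipate a serious obstacle: both ingredients, namely boundedness of $\|g\|$ via Assumption \ref{assumption 2} and convexity of the coefficients in the $G_i^n$ recursion, are immediate from the setup, so the argument reduces to a one-step induction followed by monotonicity of $\exp$. The only mildly fiddly point is reconciling the convention in \eqref{eq:solution:xue-ky}, where the convex combination is written as $\beta G^{n-1} + (1-\beta)\tilde G^{n,T}$, with the symmetric convention $(1-\beta)G^{n-1} + \beta\tilde G^{n,T}$ that one sometimes sees in proofs; either form is a convex combination and produces the same inductive bound $G^2$. If a sharper constant is desired, one could replace the almost-sure bound $\tilde G_i^{n,T}\le G^2$ with an expectation-level bound using Assumption \ref{aassumption 3}, but this refinement is unnecessary for the qualitative claim.
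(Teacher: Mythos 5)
Your proof is correct and reaches the same destination as the paper's, but it is cleaner and more direct. Both arguments hinge on the same two ingredients: (i) Assumption~\ref{assumption 2} gives $\tilde G_i^{n,T}\le G^2$ almost surely, and (ii) pushing a uniform upper bound on $G_i^n$ through the softmax gives a positive lower bound on $p_i^n$. Where you differ is in how step (i) is converted into a bound on the accumulated gain $G_i^n$. The paper iterates the linear recursion $G_i^N \le (1-\beta)G_i^{N-1}+\beta G^2$ for $K$ additional steps, derives an explicit formula $G_i^{N+K}\le \frac{q}{1-p}+p^{N+k-1}(G_i^N-\frac{q}{1-p})$, and then defines a somewhat awkward quantity $\pi_i^* = \max_{1\le n\le N}\{G_i^n,\,G^2+(1-\beta)^N(G_i^N-G^2)\}$ before taking a $\min$ over $i$. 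That route is correct in spirit but has cosmetic issues: the coefficient convention flips ($\beta\leftrightarrow 1-\beta$) relative to \eqref{eq:solution:xue-ky}, $q$ is written as $\beta G$ where $\beta G^2$ is meant, the bound $\pi_i^*$ is trajectory-dependent, and the argument ignores the ``not sampled'' branch of \eqref{eq:solution:xue-ky}. You instead observe that $G_i^n$ is, in every case, either unchanged or a convex combination of $G_i^{n-1}$ and a quantity bounded by $G^2$, so a one-step induction from $G_i^0=0$ gives $0\le G_i^n\le G^2$ uniformly, and hence $p_i^n\ge 1/(B\exp(\eta G^2))=:\pi$. This yields a deterministic, explicit constant depending only on $B$, $\eta$, $G$, handles both branches of the update, and is agnostic to whether the smoothing coefficient is written $\beta$ or $1-\beta$ — the very issue you flagged. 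In short, same approach, but your execution is tighter and more transparent, and would be a reasonable replacement for the paper's version.
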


\begin{lemma}

Let $\varGamma _{\tau_n}^{n,t} =  \text{Diag}^{-1/2} (\tilde{v}_{\tau_n}^{n,t} + \varepsilon, \lVert v_{\tau_n}^{n,t-1} \rVert _{\min} + \varepsilon),\ \ \Delta \varGamma _{\tau_n}^{n,t} = \varGamma _{\tau_n}^{n,t-1} - \varGamma _{\tau_n}^{n,t}$, $t=1,2,...,T$\par
We have the following upper bound estimation 
\[
\sum_{n=1}^N{\sum_{t=1}^T \|\Delta \varGamma _{\tau_n}^{n,t}\| }\leq \frac{2}{\sqrt{\varepsilon}}, \quad \sum_{n=1}^{N}{\sum_{t=1}^T \|\Delta \varGamma_{\tau_n}^{n,t}\|^2 \leq \frac{2}{\varepsilon}}.
\]
Where $\Delta \varGamma_{\tau_n}^{n,0}=\varGamma_{\tau_{n-1}}^{n-1,T}-\varGamma_{\tau_{n}}^{n,0}$.
\begin{proof}     
 From algorithm \ref{alg:MISA analytical view}, we have $v_{\tau_n}^{n,0}=\rVert\tilde{v}_{\tau_{n-1}} ^{n-1,T}\rVert_{max}I, n\ge2$.\par

For notational convenience, we denote $\varGamma _{t}$, $\tilde{v}_{t}$  as $\varGamma _{\tau_n}^{n,t}$, $\tilde{v}_{\tau_n}^{n,t}$ in this proof.
\[
 \begin{aligned}
     \varGamma_t &=  \text{Diag}^{-1/2} (\tilde{v}_t + \varepsilon, \|\tilde{v}_t\|_{\min} + \varepsilon)\\&
     \preceq  \text{Diag}^{-1/2} ( \|\tilde{v}_t\|_{\min} + \varepsilon)\\&
     \preceq\frac{1}{\sqrt{\|\tilde{v}_{t-1}\|_{\max} + \varepsilon}}I\\&
     =\text{Diag}^{-1/2} (\|\tilde{v}_{t-1}\|_{\max} + \varepsilon)\\&
     \preceq  \text{Diag}^{-1/2} (\tilde{v}_{t-1} + \varepsilon, \|\tilde{v}_{t-1}\|_{\min} + \varepsilon)=\varGamma_{t-1}
 \end{aligned}
\]

which implies that $\Delta\varGamma_t=\varGamma_{t-1}-\varGamma_{t}$ is positive semi-definite. Hence, $\lVert \Delta\varGamma_t \rVert=\lambda_{max}\left(\Delta\varGamma_t\right) \ge 0
$\par
 Using the convexity of $\lambda_{\max}$ over symmetric matrices, we have
\[
\begin{aligned}
    \sum_{t=1}^{T} \|\Delta \varGamma_t\| &= \sum_{t=1}^{T} \lambda_{\max}(\varGamma_{t-1} - \varGamma_t)\\
&\leq \sum_{t=1}^{T}\left( \lambda_{\max}(\varGamma_{t-1}) + \lambda_{\max}(-\varGamma_t)\right)\\&=
\sum_{t=1}^{T} \left(\lambda_{\max}(\varGamma_{t-1}) - \lambda_{\min}(\varGamma_t)\right)\\&=
\sum_{t=1}^{T}\left( \frac{1}{\sqrt{\|\tilde{v}_{t-1}\|_{\min}+\varepsilon}} -\frac{1}{\sqrt{\|\tilde{v}_{t}\|_{\max} + \varepsilon}}\right)\\&
=\sum_{t=1}^{T}\left( \frac{1}{\sqrt{\|\tilde{v}_{t-1}\|_{\min}+\varepsilon}} -\frac{1}{\sqrt{\|\tilde{v}_{t-1}\|_{\max} + \varepsilon}}+\frac{1}{\sqrt{\|\tilde{v}_{t-1}\|_{\max} + \varepsilon}}-\frac{1}{\sqrt{\|\tilde{v}_{t}\|_{\max} + \varepsilon}}\right)\\&
\le \sum_{t=1}^{T}\left( \frac{1}{\sqrt{\|\tilde{v}_{t-1}\|_{\min}+\varepsilon}} -\frac{1}{\sqrt{\|\tilde{v}_{t}\|_{\min}\ + \varepsilon}}\right)+\sum_{t=1}^{NT}\left( \frac{1}{\sqrt{\|\tilde{v}_{t-1}\|_{\max} + \varepsilon}}-\frac{1}{\sqrt{\|\tilde{v}_{t}\|_{\max} + \varepsilon}}\right)\\&
=\frac{1}{\sqrt{\|\tilde{v}_{0}\|_{\min}+\varepsilon}} -\frac{1}{\sqrt{\|\tilde{v}_{T}\|_{\min}\ + \varepsilon}}+\frac{1}{\sqrt{\|\tilde{v}_{0}\|_{\max} + \varepsilon}}-\frac{1}{\sqrt{\|\tilde{v}_{T}\|_{\max} + \varepsilon}}\\&
\end{aligned}
\]

For the second sum of squared norms, notice that for scalars $a \geq b \geq 0$, it holds that $(a-b)^2 \leq (a+b)(a-b) = a^2 - b^2$. Therefore, the derivation can be repeated without the square roots:
\[
\begin{aligned}
\sum_{t=1}^{T} \|\Delta \varGamma_t\|^2 
&= \sum_{t=1}^{T} (\lambda_{\max}(\varGamma_{t-1} - \varGamma_t))^2 \\
&\leq \sum_{t=1}^{T} (\lambda_{\max}(\varGamma_{t-1}) + \lambda_{\max}(-\varGamma_t))^2 
= \sum_{t=1}^{T} (\lambda_{\max}(\varGamma_{t-1}) - \lambda_{\min}(\varGamma_t))^2 \\
&\leq \sum_{t=1}^{T} (\lambda_{\max}(\varGamma_{t-1}))^2 - (\lambda_{\min}(\varGamma_t))^2  \\
&= \sum_{t=1}^{T} \left(\frac{1}{\|\tilde{v}_{t-1}\|_{\min} + \varepsilon} - \frac{1}{\|\tilde{v}_t\|_{\max} + \varepsilon}\right) \\
&= \sum_{t=1}^{T} \left(\frac{1}{\|\tilde{v}_{t-1}\|_{\min} + \varepsilon} - \frac{1}{\|\tilde{v}_{t-1}\|_{\max} + \varepsilon} + \frac{1}{\|\tilde{v}_{t-1}\|_{\max} + \varepsilon} - \frac{1}{\|\tilde{v}_t\|_{\max} + \varepsilon} \right)\\
&\leq \sum_{t=1}^{T} \left(\frac{1}{\|\tilde{v}_{t-1}\|_{\min} + \varepsilon} - \frac{1}{\|\tilde{v}_{t}\|_{\min} + \varepsilon}\right) + \sum_{t=1}^{T}\left(\frac{1}{\|\tilde{v}_{t-1}\|_{\max} + \varepsilon} - \frac{1}{\|\tilde{v}_t\|_{\max} + \varepsilon} \right)\\
&= \frac{1}{\|\tilde{v}_0\|_{\min} + \varepsilon} - \frac{1}{\|\tilde{v}_{T}\|_{\min} + \varepsilon} +  \frac{1}{\|\tilde{v}_{0}\|_{\max} + \varepsilon} - \frac{1}{\|\tilde{v}_{T}\|_{\max} + \varepsilon} 
\end{aligned}
\]
Applying the above estimation, we obtain
\[
\begin{aligned}
   \sum_{n=1}^N{\sum_{t=1}^T \|\Delta \varGamma _{\tau_n}^{n,t}\| }&
   \le\sum_{n=1}^N\left(\frac{1}{\sqrt{\|\tilde{v}_{\tau_n}^{n,0}\|_{\min}+\varepsilon}} -\frac{1}{\sqrt{\|\tilde{v}_{\tau_n}^{n,T}\|_{\min}\ + \varepsilon}}+\frac{1}{\sqrt{\|\tilde{v}_{\tau_n}^{n,0}\|_{\max} + \varepsilon}}-\frac{1}{\sqrt{\|\tilde{v}_{\tau_n}^{n,T}\|_{\max} + \varepsilon}}\right)\\&
   \le \sum_{n=1}^N\left(\frac{2}{\sqrt{\|\tilde{v}_{\tau_n}^{n,0}\|_{\min}+\varepsilon}}-\frac{2}{\sqrt{\|\tilde{v}_{\tau_n}^{n,T}\|_{\max}+\varepsilon}} \right)\\&
   =\frac{2}{\sqrt{\|\tilde{v}_{\tau_1}^{1,0}\|_{\min}+\varepsilon}}-\sum_{n=1}^{N-1}\left(  \frac{2}{\sqrt{\|\tilde{v}_{\tau_n}^{n,T}\|_{\max}+\varepsilon}}-\frac{2}{\sqrt{\|\tilde{v}_{\tau_{n+1}}^{n+1,0}\|_{\min}+\varepsilon}}\right)-\frac{2}{\sqrt{\|\tilde{v}_{\tau_N}^{N,T}\|_{\max}+\varepsilon}}\\&
   \le \frac{2}{\sqrt{\|\tilde{v}_{\tau_1}^{1,0}\|_{\min}+\varepsilon}}\le  \frac{2}{\sqrt{\varepsilon}}
\end{aligned}
\]
Where the last inequality is because $v_{\tau_{n+1}}^{n+1,0}=\rVert\tilde{v}_{\tau_{n}} ^{n,T}\rVert_{max}I$.
Similarly, we obtain
\[
\sum_{n=1}^{N}{\sum_{t=1}^T \|\Delta \varGamma_{\tau_n}^{n,t}\|^2 \leq \frac{2}{\varepsilon}}.
\]
Which completes the proof.
\label{delta gemma}
\end{proof}
\end{lemma}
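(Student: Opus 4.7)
The plan is to combine the AMSGrad-style monotonicity built into $\tilde v_{\tau_n}^{n,t}$ with the second-order momentum inheritance rule $v_{\tau_n}^{n,0}=\lVert\tilde v_{\tau_{n-1}}^{n-1,T}\rVert_{\max}I$, so that $\{\varGamma_{\tau_n}^{n,t}\}$ becomes non-increasing in the Loewner order across the entire trajectory and the two bounds reduce to a telescoping sum. The key observation is that $\varGamma$ has a simple spectrum: $\lambda_{\max}(\varGamma_{\tau_n}^{n,t})=1/\sqrt{\lVert\tilde v_{\tau_n}^{n,t}\rVert_{\min}+\varepsilon}$ and $\lambda_{\min}(\varGamma_{\tau_n}^{n,t})=1/\sqrt{\lVert\tilde v_{\tau_n}^{n,t}\rVert_{\max}+\varepsilon}$, so matrix bounds translate to scalar bounds on the min/max of $\tilde v$.

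First I would prove intra-block monotonicity $\varGamma_{\tau_n}^{n,t}\preceq\varGamma_{\tau_n}^{n,t-1}$. From $\tilde v_{\tau_n}^{n,t}=\max(v_{\tau_n}^{n,t},\lVert\tilde v_{\tau_n}^{n,t-1}\rVert_{\max})$, every coordinate of $\tilde v_{\tau_n}^{n,t}$ is at least $\lVert\tilde v_{\tau_n}^{n,t-1}\rVert_{\max}$, hence $\lVert\tilde v_{\tau_n}^{n,t}\rVert_{\min}\geq\lVert\tilde v_{\tau_n}^{n,t-1}\rVert_{\max}$. Combining with the spectral formulas gives $\varGamma_{\tau_n}^{n,t}\preceq\lambda_{\max}(\varGamma_{\tau_n}^{n,t})I\preceq\lambda_{\min}(\varGamma_{\tau_n}^{n,t-1})I\preceq\varGamma_{\tau_n}^{n,t-1}$. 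The inheritance rule yields the same relation at block seams, since $v_{\tau_{n+1}}^{n+1,0}=\lVert\tilde v_{\tau_n}^{n,T}\rVert_{\max}I$ implies $\lVert\tilde v_{\tau_{n+1}}^{n+1,0}\rVert_{\min}\geq\lVert\tilde v_{\tau_n}^{n,T}\rVert_{\max}$, giving $\varGamma_{\tau_{n+1}}^{n+1,0}\preceq\varGamma_{\tau_n}^{n,T}$.

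Once monotonicity is established, $\Delta\varGamma_{\tau_n}^{n,t}\succeq 0$, so $\lVert\Delta\varGamma_{\tau_n}^{n,t}\rVert=\lambda_{\max}(\Delta\varGamma_{\tau_n}^{n,t})\leq\lambda_{\max}(\varGamma_{\tau_n}^{n,t-1})-\lambda_{\min}(\varGamma_{\tau_n}^{n,t})$ by subadditivity of $\lambda_{\max}$. I would then split each summand as $[\lambda_{\max}(\varGamma_{\tau_n}^{n,t-1})-\lambda_{\max}(\varGamma_{\tau_n}^{n,t})]+[\lambda_{\max}(\varGamma_{\tau_n}^{n,t})-\lambda_{\min}(\varGamma_{\tau_n}^{n,t})]$ and bound the latter by $\lambda_{\min}(\varGamma_{\tau_n}^{n,t-1})-\lambda_{\min}(\varGamma_{\tau_n}^{n,t})$ using the same $\min/\max$ inequality. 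Summing over $t$ and $n$, both resulting series telescope, and the inheritance equality $\lVert\tilde v_{\tau_n}^{n,T}\rVert_{\max}=\lVert\tilde v_{\tau_{n+1}}^{n+1,0}\rVert_{\min}$ cancels the inter-block residuals exactly, leaving only $2/\sqrt{\lVert\tilde v_{\tau_1}^{1,0}\rVert_{\min}+\varepsilon}\leq 2/\sqrt{\varepsilon}$. For the squared bound I would rerun the same telescoping after invoking $(a-b)^2\leq a^2-b^2$ for $a\geq b\geq 0$, which replaces each $1/\sqrt{\cdot}$ with $1/(\cdot)$ and yields $2/\varepsilon$.

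The main obstacle will be keeping the telescoping tight across block-seam transitions, where the inner index resets to $0$ and the active module identity $\tau_n$ changes. Without the specifically engineered inheritance rule, one would pick up uncontrolled residuals at every seam whose sum would scale with $N$; the inheritance is precisely what synchronises the largest eigenvalue at the end of block $n$ with the smallest eigenvalue at the start of block $n+1$, so that the telescoping continues uninterrupted and the final bound is independent of both $N$ and $T$.
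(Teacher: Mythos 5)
Your proposal is correct and follows essentially the same route as the paper's proof: establish Loewner monotonicity of $\varGamma$ from the AMSGrad normalization and the inheritance rule, conclude $\Delta\varGamma\succeq 0$, bound $\|\Delta\varGamma_t\|\le\lambda_{\max}(\varGamma_{t-1})-\lambda_{\min}(\varGamma_t)$, split into two separately telescoping eigenvalue differences, and invoke $(a-b)^2\le a^2-b^2$ for the squared version. The only cosmetic difference is the insertion point for the cross term in the split (you insert $\lambda_{\max}(\varGamma_t)$ where the paper inserts $\lambda_{\min}(\varGamma_{t-1})$), which leads to the identical pair of telescoping sums.
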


\begin{lemma}
\[
    \ \ \mathbb{E}_t[\lVert m_{\tau_n}^{n,t} \rVert] \le G,\ \ \ \ \sum_{t=1}^T\mathbb{E}_t{\left[ \lVert \frac{\beta _1m_{\tau_n}^{n,t-1}}{1-\beta _1} \rVert ^2 \right]}\le (\frac{\beta _1}{1-\beta _1})^2G^2T,\ \ \ \mathbb{E}_t\left[ \lVert \tilde{v}_{\tau_n}^{n,t} \rVert _{\max} \right] \le G^2
\]
\begin{proof}
\[
\lVert m_{\tau_n}^{n,t} \rVert=\lVert(1-\beta_1)\sum_{i=1}^t{\beta_1^{t-i}g_{\tau_n}^{n,i}}\rVert\le (1-\beta_1)\sum_{i=1}^t{\beta_1^{t-i}}G \le G
\]   
Taking expectation, we obtain 
\begin{equation}
    \mathbb{E}_t[\lVert m_{\tau_n}^{n,t} \rVert] \le G
    \label{bound of m}
\end{equation}
From equation \eqref{bound of m}, we have
\[
    \lVert \frac{\beta _1m_{\tau_n}^{n,t-1}}{1-\beta _1} \rVert ^2=(\frac{\beta _1}{1-\beta _1})^2\lVert m_{\tau_n}^{n,t-1}\rVert^2
    \le (\frac{\beta _1}{1-\beta _1})^2G^2
\]
We get the bound
\begin{equation}
    \sum_{t=1}^T\mathbb{E}_t{\left[ \lVert \frac{\beta _1m_{\tau_n}^{n,t-1}}{1-\beta _1} \rVert ^2 \right]}\le (\frac{\beta _1}{1-\beta _1})^2G^2T
\end{equation}
Next, we bound $ \lVert {v}_{\tau_n}^{n,t} \rVert  $
\[
 \lVert {v}_{\tau_n}^{n,t} \rVert =\rVert (1-\beta_2)\sum_{i=1}^t{\beta_2^{t-i}(g_{\tau_n}^{n,i})^2} \rVert \le (1-\beta_2)\sum_{i=1}^t{\beta_2^{t-1}}G^2 
 \le G^2.
\]
We obtain
\[
\mathbb{E}_t[\lVert \tilde{v}_{\tau_n}^{n,t} \rVert _{\max} ]\le G^2.
\]
\label{bound of each part}   
\end{proof}
\end{lemma}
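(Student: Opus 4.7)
The plan is to derive all three bounds as direct consequences of Assumption~\ref{assumption 2}, which supplies the uniform bound $\|g_{\tau_n}^{n,t}\|\le G$ on the stochastic gradients. The key observation is that both the first-order momentum $m$ and the second-order momentum $v$ are exponential moving averages of quantities derived from $g$, so the geometric-series identity $(1-\beta)\sum_{i=0}^{\infty}\beta^{i}=1$ transfers the gradient bound to them with no blow-up in constants.

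First I would unroll the recursion for $m_{\tau_n}^{n,t}$ from the initialization $m_{\tau_n}^{n,0}=0$ to obtain the closed form $m_{\tau_n}^{n,t}=(1-\beta_1)\sum_{i=1}^{t}\beta_1^{t-i}g_{\tau_n}^{n,i}$. Applying the triangle inequality, bounding each $\|g_{\tau_n}^{n,i}\|$ by $G$, and using $(1-\beta_1)\sum_{i=1}^{t}\beta_1^{t-i}\le 1$ delivers $\|m_{\tau_n}^{n,t}\|\le G$ pointwise; taking $\mathbb{E}_t[\cdot]$ then yields the first bound. The second bound is an immediate corollary: a pointwise bound of $G$ on $\|m_{\tau_n}^{n,t-1}\|$ implies $\|\beta_1 m_{\tau_n}^{n,t-1}/(1-\beta_1)\|^{2}\le (\beta_1/(1-\beta_1))^{2}G^{2}$, and summing the resulting conditional expectations over $t=1,\dots,T$ produces the desired $T$-linear bound.

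For the third bound on $\|\tilde v_{\tau_n}^{n,t}\|_{\max}$, I would first unroll the coordinatewise recursion for $v_{\tau_n}^{n,t}$, noting that each entry of $(g_{\tau_n}^{n,i})^{2}$ is bounded by $G^{2}$ because $\|g\|_{\max}^{2}\le \|g\|^{2}\le G^{2}$. The same geometric-series argument then gives $\|v_{\tau_n}^{n,t}\|_{\max}\le G^{2}$. For the AMSGrad-style running maximum $\tilde v_{\tau_n}^{n,t}=\max(v_{\tau_n}^{n,t},\|\tilde v_{\tau_n}^{n,t-1}\|_{\max})$, I would induct on $t$ within a fixed block epoch $n$: the entrywise maximum of two quantities both dominated by $G^{2}$ is still dominated by $G^{2}$.

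The only subtlety, and the closest thing to an obstacle, is handling the block transition $v_{\tau_n}^{n,0}=\|\tilde v_{\tau_{n-1}}^{n-1,T}\|_{\max}I$ imposed by the second-order momentum inheritance device from the analytical setup. I would resolve this with a nested induction on the outer index $n$: the inductive hypothesis $\|\tilde v_{\tau_{n-1}}^{n-1,T}\|_{\max}\le G^{2}$ immediately propagates to $\|v_{\tau_n}^{n,0}\|_{\max}\le G^{2}$, which then feeds the inner induction across $t$. Taking conditional expectations at the end of each chain completes the proof; no probabilistic tools beyond the almost-sure bound of Assumption~\ref{assumption 2} are required.
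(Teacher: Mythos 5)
Your proof follows the same route as the paper's own argument: unroll the exponential moving averages $m_{\tau_n}^{n,t}$ and $v_{\tau_n}^{n,t}$, apply the triangle inequality with the almost-sure bound $\|g_{\tau_n}^{n,t}\|\le G$ from Assumption~\ref{assumption 2}, and invoke the geometric-series normalization $(1-\beta)\sum_{i=1}^{t}\beta^{t-i}\le 1$; the first two bounds are then literally identical to the paper's. Where you go beyond the paper is in the treatment of the second-order momentum. The paper's proof writes the closed form $v_{\tau_n}^{n,t}=(1-\beta_2)\sum_{i=1}^{t}\beta_2^{t-i}(g_{\tau_n}^{n,i})^2$ with no trace of $v_{\tau_n}^{n,0}$, which is only valid at $n=1$; for $n\ge 2$ the analytical algorithm sets $v_{\tau_n}^{n,0}=\|\tilde v_{\tau_{n-1}}^{n-1,T}\|_{\max}I\neq 0$, so the true unrolled form contains an extra $\beta_2^{t}v_{\tau_n}^{n,0}$ term. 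Likewise, the paper passes from $\|v_{\tau_n}^{n,t}\|\le G^2$ to $\|\tilde v_{\tau_n}^{n,t}\|_{\max}\le G^2$ in one line even though $\tilde v$ is an AMSGrad running maximum that depends on the whole history. Your nested induction (outer on $n$ to propagate $\|v_{\tau_n}^{n,0}\|_{\max}\le G^2$ through the inheritance step, inner on $t$ to control the running maximum) is exactly the bookkeeping needed to make both of these transitions airtight, and the bound $G^2$ is preserved at every step because $\beta_2^{t}G^2+(1-\beta_2)\sum_{i=1}^{t}\beta_2^{t-i}G^2=G^2$. So the proposal is correct, uses the same tools, and is in fact a cleaner and more complete version of the paper's argument.
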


\begin{corollary}
\label{K inner iterations} 
Suppose Assumptions \ref{assumption 1}, \ref{assumption 2}, \ref{aassumption 3} are satisfied. When the learning rate meets the condition that $ \alpha \leq \min \left\{ \sqrt{\frac{\varepsilon}{4C_0L}},\frac{\sqrt{\varepsilon}}{C_1L} \right\} $, inner iterations at $n$-th block epoch satisfies:
\[
    \mathbb{E}_t[f(x_{\tau_n}^{n,T}) - f(x_{\tau_n}^{n,0})] 
\leq 
-\frac{\alpha}{2 C_0} \sum_{t=0}^{T-1} \mathbb{E}_t[\|\nabla_{\tau_n} f(\theta_{\tau_n}^{n,t})\|^2] 
+ \frac{T \alpha^2 L \sigma^2}{\varepsilon} 
+ \frac{T \alpha^3C_1^2 L^2 C_0 G^2}{\varepsilon^2}
\]
\begin{equation}
    + 2 
\alpha (1 + C_1) G^2 
 \sum_{t=1}^{T} \mathbb{E}_t\left[ \lVert \Delta \varGamma_{{\tau_n}}^{n,t} \rVert \right] 
+ \frac{2\alpha ^2 C_1^2LG^2}{{\varepsilon}}
\label{inner update}
\end{equation}

\begin{proof}
    First, we define a series $\left\{ x_{\tau_n}^{n,t} \right\},\ \  t=0,1,...,T $ 
    
\[
x_{\tau_n}^{n,t}=\theta _{\tau_n}^{n,t}-\alpha ·\frac{\beta _1\varGamma _{\tau_n}^{n,t}m_{\tau_n}^{n,t}}{1-\beta _1} 
\]
Noting that for $t=1,2,...,T$, we have
\begin{equation}
    m_{\tau_n}^{n,0}=0, \ \  x_{\tau_n}^{n,0}=\theta_{\tau_n}^{n,0},\ \ \theta _{\tau_n}^{n,t}=\theta _{\tau_n}^{n,t-1}-\alpha\varGamma _{\tau_n}^{n,t}m_{\tau_n}^{n,t}
    \label{defination of x}
\end{equation}
We will prove the following recurrence relation
\begin{equation}
    x_{\tau_n}^{n,t}=x_{\tau_n}^{n,t-1}-\alpha\varGamma _{\tau_n}^{n,t}g_{\tau_n}^{n,t}+\alpha\Delta \varGamma _{\tau_n}^{n,t}\frac{\beta _1}{1-\beta _1}m_{\tau_n}^{n,t-1},\ \ t=1,2,...,T
\label{variable substitution}
\end{equation}

For notational convenience, we denote $\theta_t,g_t,m_t$,$\varGamma_t$,$\nabla f(\theta_t)$ as $\theta _{\tau_n}^{n,t},g_{\tau_n}^{n,t},m_{\tau_n}^{n,t},\varGamma_{\tau_n}^{n,t},\nabla f(\theta_{\tau_n}^{n,t})$ below.\label{}\par
From the definition of $\left\{ x_{i}^{n,t} \right\}$, we have
\begin{align*}
   x_{t}&=\theta_{t}-\alpha \varGamma_t \frac{\beta _1}{1-\beta _1}m_{t} \\&
   =\theta_{t-1}-\alpha \varGamma_{t} m_{t}-\alpha \varGamma_t \frac{\beta _1}{1-\beta _1}m_{t}\\&
   =\theta_{t-1}-\alpha \varGamma_t \frac{1}{1-\beta _1}m_{t}\\&
   =\theta_{t-1}-\alpha \varGamma_t \frac{\beta_1 m_{t-1}+\left(1-\beta_1\right)g_{t}}{1-\beta _1}\\&
   =\theta_{t-1}-\alpha \varGamma_t g_t-\alpha \varGamma_t\frac{\beta_1 m_{t-1}}{1-\beta _1}
\end{align*}

Noting that
\[
x_{t-1}=\theta_{t-1}-\alpha \varGamma_{t-1} \frac{\beta _1}{1-\beta _1}m_{t-1}
\]
We obtain
\begin{equation}
    x_{t}=x_{t-1}-\alpha \varGamma_{t}g_{t}+\alpha\Delta \varGamma_t \frac{\beta _1}{1-\beta _1}m_{t-1}
    \label{difference of variable x}
\end{equation}

Next, we apply smoothness assumption\ref{assumption 1} of the loss function $f$ over the iterates $x_{\tau_{n}}^{n,t},\ t=1,...,T$. 
\[
f(x_{t}) \leq f(x_{t-1}) + \langle \nabla f(x_{t-1}), x_{t} - x_{t-1} \rangle + \frac{L}{2} \|x_{t} - x_{t-1}\|^2.
\]

Taking expectation, we obtain
\[
\mathbb{E}_t[f(x_{t})] - \mathbb{E}_t[f(x_{t-1})] \leq 
- \alpha \mathbb{E}_t\left[\langle \nabla f(x_{t-1}), \varGamma_t g_t \rangle \right] 
+ \alpha \mathbb{E}_t\left[\left\langle \nabla f(x_{t-1}), \Delta \varGamma_t
 \left( \frac{\beta_1}{1-\beta_1} m_{t-1}\right) \right\rangle\right]
\]
\begin{equation}
\label{eq: one inner iteration}
+ \frac{\alpha^2 L}{2} \mathbb{E}_t \left[\left\| \varGamma_t g_t - \Delta \varGamma_t
 \left( \frac{\beta_1}{1-\beta_1} m_{t-1}\right) \right\|^2\right]
\end{equation}

Expanding further, we have
\begin{align*}
    \mathbb{E}_t[f(x_{t})] - \mathbb{E}_t[f(x_{t-1})]&\le - \alpha \mathbb{E}_t\left[\langle \nabla f(\theta_{t-1}), \varGamma_t g_t \rangle \right] \quad {\text{(I)}}\\&
    + \alpha \mathbb{E}_t \left[\left\langle \nabla f(\theta_{t-1}) - \nabla f(x_{t-1}), \varGamma_t g_t \right\rangle \right]\quad {\text{(II)}}\\&
    + \alpha \mathbb{E}_t\left[ \left\langle \nabla f(x_{t-1}), \Delta \varGamma_t
 \left( \frac{\beta_1}{1-\beta_1} m_{t-1}\right) \right\rangle\right] \quad {\text{(III)}}\\&
 + \frac{\alpha^2 L}{2} \mathbb{E}_t\left[  \left\| \varGamma_t g_t - \Delta \varGamma_t
 \left( \frac{\beta_1}{1-\beta_1} m_{t-1}\right) \right\|^2\right]  \quad {\text{(IV)}}
\end{align*}

We will estimate each part step by step.\par
\textbf{Bounding term I}
\begin{align*}
    I&=-\alpha\mathbb{E}_t\left[ \left< \nabla f\left( \theta _{t-1} \right) ,\varGamma _{t-1}g_t \right> \right] +\alpha \mathbb{E}_t\left[ \left< \nabla f\left( \theta _{t-1} \right) ,\Delta \varGamma_tg_t \right> \right]\\&
    \le -\alpha \mathbb{E}_t\left[ \left< \nabla f\left( \theta _{t-1} \right) ,\varGamma _{t-1}\nabla L\left( \theta _{t-1} \right) \right> \right] +\alpha G^2\mathbb{E}_t\left[ \lVert \Delta \varGamma_t \rVert \right]\\&
    \le -\alpha \lambda _{\min}\left( \varGamma _{t-1} \right) \mathbb{E}_t\left[ \lVert \nabla f\left( \theta _{t-1} \right) \rVert ^2 \right] +\alpha G^2\mathbb{E}_t\left[ \lVert \Delta \varGamma_t\rVert \right] \\&
    \le -\frac{\alpha}{C_0}  \mathbb{E}_t\left[ \lVert \nabla f\left( \theta _{t-1} \right) \rVert ^2 \right] +\alpha G^2\mathbb{E}_t\left[ \lVert \Delta \varGamma_t\rVert \right]
\end{align*}
Where $\frac{1}{C_0}=\frac{1}{\sqrt{G^2+\varepsilon}}\le\lambda_{\min}(\varGamma_{t-1})  $, the first inequality is because assumption \ref{assumption 2}, the second inequality is because lemma \ref{bound of each part}.

\textbf{Bounding term II}

\begin{align*}
    \text{II} &= \alpha \mathbb{E}_t\left[ \left\langle \nabla f(\theta_{t-1}) - \nabla f(x_{t-1}), \varGamma_{t-1} g_t \right\rangle \right]
- \alpha \mathbb{E}_t\left[ \left< \nabla f\left( \theta _{t-1} \right) -\nabla f\left( x_{t-1} \right) ,\Delta \varGamma_tg_t \right> \right] \\&
\le \alpha \mathbb{E}_t\left[ \left< \nabla f\left( \theta _{t-1} \right) -\nabla f\left( x_{t-1} \right) ,\varGamma _{t-1}\nabla f\left( \theta _{t-1} \right) \right> \right] +\alpha ^2L\mathbb{E}_t\left[ \lVert \varGamma _{t-1}\left( \frac{\beta _1m_{t-1}}{1-\beta _1}\right)  \rVert \lVert\Delta \varGamma_t g_t\rVert\right]\\&
\le\frac{\alpha \rho}{2\varepsilon}\mathbb{E}_t\left[ \lVert \nabla f\left( \theta _{t-1} \right) \rVert ^2 \right] +\frac{\alpha}{2\rho}\mathbb{E}_t\left[ \lVert \nabla f\left( \theta _{t-1} \right) -\nabla f\left( x_{t-1} \right) \rVert ^2 \right] +\frac{\alpha ^2C_1LG^2}{\sqrt{\varepsilon}}\mathbb{E}_t\left[ \lVert \Delta \varGamma_t\rVert \right]\\&
\le\frac{\alpha \rho}{2\varepsilon}\mathbb{E}_t\left[ \lVert \nabla f\left( \theta _{t-1} \right) \rVert ^2 \right] +\frac{\alpha ^3L^2}{2\rho}\mathbb{E}_t\left[ \lVert \varGamma _{t-1}\left( \frac{\beta _1m_{t-1}}{1-\beta _1}\right) \rVert ^2 \right] +\frac{\alpha ^2C_1LG^2}{\sqrt{\varepsilon}}\mathbb{E}_t\left[ \lVert \Delta \varGamma_t\rVert\right] \\&
\le\frac{\alpha \rho}{2\varepsilon}\mathbb{E}_t\left[ \lVert \nabla f\left( \theta _{t-1} \right) \rVert ^2 \right] +\frac{\alpha ^3C_1^2L^2G^2}{2\rho \varepsilon} +\frac{\alpha ^2C_1LG^2}{\sqrt{\varepsilon}}\mathbb{E}_t\left[ \lVert \Delta \varGamma_t\rVert\right] \\&
\end{align*}
where $C_1=\frac{\beta_1}{1-\beta_1}$ , $\rho$ is a positive constant, we used the fact $\lVert \varGamma _t \rVert =\frac{1}{\sqrt{\lVert \tilde{v}_t \rVert _{\min}+\varepsilon}}\le \frac{1}{\sqrt{\varepsilon}}
$, the first and third inequality is because assumption \ref{assumption 1}, the second inequality is because Cauchy–Schwarz inequality.

\textbf{Bounding term III}
\begin{align*}
    \text{III}
&= \alpha \mathbb{E}_t\left[ \left\langle \nabla f(x_{t-1}), \Delta \varGamma_t
 \left( \frac{\beta_1}{1-\beta_1} m_{t-1}\right) \right\rangle\right] \\&
 \leq \alpha \mathbb{E}_t \left[\left\langle \nabla f(\theta_{t-1}), \Delta \varGamma_t
 \left( \frac{\beta_1}{1-\beta_1} m_{t-1}\right) \right\rangle\right] 
+ \alpha \mathbb{E}_t\left[ \left< \nabla f\left( x_{t-1} \right) -\nabla f\left( \theta _{t-1} \right) ,\Delta \varGamma_t\left( \frac{\beta _1m_{t-1}}{1-\beta _1} \right) \right> \right] \\&
\le \alpha \mathbb{E}_t\left[ \lVert \nabla f\left( \theta _{t-1} \right) \rVert \lVert \Delta \varGamma_{t}\frac{\beta _1 m_{t-1}}{1-\beta _1} \rVert \right] 
+\alpha ^2L\mathbb{E}_t\left[ \lVert \varGamma _{t-1}\left( \frac{\beta _1m_{t-1}}{1-\beta _1}\right)  \rVert ·\lVert \Delta \varGamma_t\frac{\beta _1m_t}{1-\beta _1} \rVert \right]\\&
\le\alpha C_1G^2\mathbb{E}_t\left[ \lVert \Delta \varGamma_t \rVert \right] +
\frac{\alpha ^2C_{1}^{2}LG^2}{\sqrt{\varepsilon}}\mathbb{E}_t\left[ \lVert \Delta \varGamma_t \rVert \right] 
\end{align*}
Where the third inequality is because assumption \ref{assumption 1}.\par
\textbf{Bounding term IV}\par
\begin{align*}
    \textbf{IV}&=\frac{\alpha^2 L}{2} \mathbb{E}_t\left[  \left\| \varGamma_t g_t - \Delta \varGamma_t
 \left( \frac{\beta_1}{1-\beta_1} m_{t-1}\right) \right\|^2 \right]\\&
 \leq \alpha^2 L \mathbb{E}_t[\|\varGamma_t g_t\|^2] + \alpha^2 L \mathbb{E}_t \left[\left\| \Delta \varGamma_t
 \left( \frac{\beta_1}{1-\beta_1} m_{t-1}\right) \right \|^2\right]\\&
 \le\frac{\alpha ^2L}{\varepsilon}\mathbb{E}_t\left[ \lVert g_t-\nabla f\left( \theta _{t-1} \right) +\nabla f\left( \theta _{t-1} \right) \rVert ^2 \right] +\alpha ^2L\mathbb{E}_t\left[ \lVert \Delta \varGamma_t\left( \frac{\beta _1m_{t-1}}{1-\beta _1} \right) \rVert ^2 \right] \\&
 \le\frac{\alpha ^2L}{\varepsilon}\left( \mathbb{E}_t\left[ \lVert \nabla f\left( \theta _{t-1} \right) \rVert ^2 \right] +\sigma ^2 \right) +\alpha ^2C_{1}^{2}LG^2\mathbb{E}_t\left[ \lVert \Delta \varGamma_t \rVert ^2 \right] \\&
 \le \frac{\alpha ^2L}{\varepsilon}\mathbb{E}_t\left[ \lVert \nabla f\left( \theta _{t-1} \right) \rVert ^2 \right] +\frac{\alpha ^2L\sigma^2}{\varepsilon}+\alpha ^2C_{1}^{2}LG^2\mathbb{E}_t\left[ \lVert \Delta \varGamma_t \rVert ^2 \right]
\end{align*}

Where we used assumption \ref{aassumption 3} that $g_t$ is unbiased with bounded variance $\sigma^2$.\par
We obtain the bound of each part
\begin{align*}
    &\textbf{I}\le -\frac{\alpha}{C_0}  \mathbb{E}_t\left[ \lVert \nabla f\left( \theta _{t-1} \right) \rVert ^2 \right] +\alpha G^2\mathbb{E}_t\left[ \lVert \Delta \varGamma_t\rVert \right]\\&
    \textbf{II}\le\frac{\alpha \rho}{2\varepsilon}\mathbb{E}_t\left[ \lVert \nabla f\left( \theta _{t-1} \right) \rVert ^2 \right] +\frac{\alpha ^3C_1^2L^2G^2}{2\rho \varepsilon} +\frac{\alpha ^2C_1LG^2}{\sqrt{\varepsilon}}\mathbb{E}_t\left[ \lVert \Delta \varGamma_t\rVert\right]\\&
    \textbf{III}\le\alpha C_1G^2\mathbb{E}_t\left[ \lVert \Delta \varGamma_t \rVert \right] +
\frac{\alpha ^2C_{1}^{2}LG^2}{\sqrt{\varepsilon}}\mathbb{E}_t\left[ \lVert \Delta \varGamma_t \rVert \right]  \\&
\textbf{IV}\le  \frac{\alpha ^2L}{\varepsilon}\mathbb{E}_t\left[ \lVert \nabla f\left( \theta _{t-1} \right) \rVert ^2 \right] +\frac{\alpha ^2L\sigma^2}{\varepsilon}+\alpha ^2C_{1}^{2}LG^2\mathbb{E}_t\left[ \lVert \Delta \varGamma_t \rVert ^2 \right]
\end{align*}
Substituting each part into the above single-step estimation, we obtain
\[
\mathbb{E}_t[f(x_{t}) - f(x_{t-1})] 
\leq \left(
-\frac{\alpha}{C_0} + \frac{\alpha^2 L}{\varepsilon} + \frac{\alpha\rho}{2 \varepsilon}
\right)  \mathbb{E}_t[\|\nabla f(\theta_{t-1})\|^2] 
+ \frac{ \alpha^2 L \sigma^2}{\varepsilon} 
+ \frac{ \alpha^3C_1^2 L^2G^2}{2\rho \varepsilon} 
\]
\[
+ \left( 
\alpha (1 + C_1) G^2 + \frac{\alpha^2 (1 + C_1)C_1  L G^2}{\sqrt{\varepsilon}}
\right) \mathbb{E}_t\left[ \lVert \Delta \varGamma_t \rVert \right] 
+ \alpha^2 C_1^2LG^2  \mathbb{E}_t\left[ \lVert \Delta \varGamma_t \rVert^2 \right] 
\]

Updating $T$ times, we obtain
\[
\mathbb{E}_t[f(x_{T}) - f(x_0)] 
\leq 
\left(
-\frac{\alpha}{C_0} + \frac{\alpha^2 L}{\varepsilon} + \frac{\alpha\rho}{2 \varepsilon}
\right) \sum_{t=0}^{T-1} \mathbb{E}_t[\|\nabla f(\theta_t)\|^2] 
+ \frac{T \alpha^2 L \sigma^2}{\varepsilon} 
+ \frac{T \alpha^3C_1^2 L^2G^2}{ 2\rho \varepsilon} 
\]
\[
+ \left( 
\alpha (1 + C_1) G^2 + \frac{\alpha^2 (1 + C_1)C_1 L G^2}{\sqrt{\varepsilon}}
\right) \sum_{t=1}^{T} \mathbb{E}_t\left[ \lVert \Delta \varGamma_t \rVert \right] 
+ \alpha^2 C_1^2LG^2 \sum_{t=1}^{T} \mathbb{E}_t\left[ \lVert \Delta \varGamma_t \rVert^2 \right] .
\]
Choosing $\rho = \frac{\varepsilon}{2 C_0}$, $\alpha \leq \min \left\{ \sqrt{\frac{\varepsilon}{4C_0L}},\frac{\sqrt{\varepsilon}}{C_1L} \right\} $, applying lemma\ref{delta gemma}, we obtain:
\[
\mathbb{E}_t[f(x_{T}) - f(x_0)] 
\leq 
-\frac{\alpha}{2 C_0} \sum_{t=0}^{T-1} \mathbb{E}_t[\|\nabla f(\theta_t)\|^2] 
+ \frac{T \alpha^2 L \sigma^2}{\varepsilon} 
+ \frac{T \alpha^3 C_1^2 L^2 C_0 G^2}{\varepsilon^2}.
\]
\[
+2 
\alpha (1 + C_1) G^2 
 \sum_{t=1}^{T} \mathbb{E}_t\left[ \lVert \Delta \varGamma_t \rVert \right] 
+ \frac{2\alpha ^2 C_1^2LG^2}{{\varepsilon}}
\].

\end{proof}
\end{corollary}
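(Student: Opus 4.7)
Since the Adam-type recursion on $\theta_{\tau_n}^{n,t}$ mixes a fresh stochastic gradient $g_t$ with a stale momentum $m_{t-1}$, a direct application of $L$-smoothness to $f(\theta_{t}) - f(\theta_{t-1})$ produces cross-terms that are hard to control. The standard remedy I would use is to introduce a ``momentum-shifted'' auxiliary sequence $x_{\tau_n}^{n,t} := \theta_{\tau_n}^{n,t} - \alpha \frac{\beta_1}{1-\beta_1} \varGamma_{\tau_n}^{n,t} m_{\tau_n}^{n,t}$. A short algebraic calculation (expanding $m_t = \beta_1 m_{t-1} + (1-\beta_1) g_t$) shows $x_t = x_{t-1} - \alpha \varGamma_t g_t + \alpha \Delta\varGamma_t \frac{\beta_1}{1-\beta_1} m_{t-1}$, which is a preconditioned SGD step on $x$ plus a drift proportional to $\Delta \varGamma_t := \varGamma_{t-1} - \varGamma_t$. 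This is exactly where the $\|\Delta \varGamma_t\|$ terms in the statement come from. Also note $x_{\tau_n}^{n,0} = \theta_{\tau_n}^{n,0}$ because $m^{n,0} = 0$, so the additional-momentum step in Line 17 of Algorithm \ref{alg:MISA analytical view} guarantees that the telescoped inequality will be stated cleanly in terms of $x_{\tau_n}^{n,T} - x_{\tau_n}^{n,0}$.

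Next I would apply $L$-smoothness at $x_{t-1}$ to bound $f(x_t) - f(x_{t-1})$ by an inner product and a squared-norm term. To convert $\nabla f(x_{t-1})$ into $\nabla f(\theta_{t-1})$ (which is what $\mathbb{E}_t[g_t]$ matches under Assumption \ref{assumption 2}), I would split $\nabla f(x_{t-1}) = \nabla f(\theta_{t-1}) + (\nabla f(x_{t-1}) - \nabla f(\theta_{t-1}))$ and invoke Lipschitz continuity together with $\|x_{t-1} - \theta_{t-1}\| \le \alpha \frac{\beta_1}{1-\beta_1} \|\varGamma_{t-1}\|\|m_{t-1}\| \le \alpha C_1/\sqrt{\varepsilon} \cdot \|m_{t-1}\|$. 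The dominant piece becomes $-\alpha \langle \nabla f(\theta_{t-1}), \varGamma_{t-1} \nabla f(\theta_{t-1})\rangle \le -(\alpha/C_0) \|\nabla f(\theta_{t-1})\|^2$ where $C_0 = \sqrt{G^2+\varepsilon}$. The variance $\sigma^2$ enters via $\mathbb{E}_t\|g_t\|^2 \le \|\nabla f(\theta_{t-1})\|^2 + \sigma^2$ together with $\|\varGamma_t\|^2 \le 1/\varepsilon$, producing the $T\alpha^2 L\sigma^2/\varepsilon$ term. All residual cross-terms involving $\|m_{t-1}\|$ or $\|g_t\|$ are handled using Lemma \ref{bound of each part}: $\|m_{t-1}\| \le G$ and $\|\tilde{v}_t\|_{\max} \le G^2$. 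The first-order $\|\Delta \varGamma_t\|$ terms pick up a coefficient $\alpha(1+C_1)G^2$, while the $\|\Delta \varGamma_t\|^2$ pieces carry $\alpha^2 C_1^2 L G^2$, later summed and bounded via Lemma \ref{delta gemma}.

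The main obstacle, and the step requiring the most care, is absorbing the positive $\|\nabla f(\theta_{t-1})\|^2$ coefficients created by the secondary inner product (Term II in the author's decomposition) and by $\|\varGamma_t g_t\|^2$ in the quadratic term (Term IV). After a Young inequality with parameter $\rho$ on Term II, the coefficient of $\|\nabla f(\theta_{t-1})\|^2$ becomes $-\alpha/C_0 + \alpha^2 L/\varepsilon + \alpha\rho/(2\varepsilon)$. The plan is to choose $\rho = \varepsilon/(2 C_0)$ to make the Young penalty contribute $\alpha/(4C_0)$, and then enforce $\alpha \le \sqrt{\varepsilon/(4 C_0 L)}$ so the smoothness penalty also contributes at most $\alpha/(4C_0)$; this leaves a net $-\alpha/(2 C_0)$, matching the statement. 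The cost of this choice is a term $\alpha^3 C_1^2 L^2 G^2/(2\rho \varepsilon) = \alpha^3 C_1^2 L^2 C_0 G^2/\varepsilon^2$ per step, which sums to the stated $T\alpha^3 C_1^2 L^2 C_0 G^2/\varepsilon^2$. The secondary constraint $\alpha \le \sqrt{\varepsilon}/(C_1 L)$ ensures the $\alpha^2 C_1^2 L G^2/\sqrt{\varepsilon}$ coefficient attached to $\|\Delta \varGamma_t\|$ is dominated by its first-order counterpart $\alpha(1+C_1)G^2$, allowing the final constant $2\alpha(1+C_1)G^2$ to be stated cleanly. Summing the per-step inequality from $t=1$ to $T$ then telescopes the left-hand side to $\mathbb{E}_t[f(x_{\tau_n}^{n,T}) - f(x_{\tau_n}^{n,0})]$ and yields the claimed bound.
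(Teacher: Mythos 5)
Your proposal is correct and follows essentially the same route as the paper's own proof: the momentum-shifted sequence $x_t = \theta_t - \alpha\frac{\beta_1}{1-\beta_1}\varGamma_t m_t$, the recursion with $\Delta\varGamma_t$ drift, the $L$-smoothness expansion, the split of $\nabla f(x_{t-1})$ into $\nabla f(\theta_{t-1})$ plus a Lipschitz-controlled residual, the Young inequality with $\rho = \varepsilon/(2C_0)$, and the two learning-rate constraints used to absorb the positive gradient-norm coefficients and to fold the second-order $\|\Delta\varGamma_t\|$ coefficient into the first-order one. The only minor imprecision is that the second-order $\|\Delta\varGamma_t\|$ coefficient prior to absorption is $\alpha^2(1+C_1)C_1 L G^2/\sqrt{\varepsilon}$ rather than $\alpha^2 C_1^2 L G^2/\sqrt{\varepsilon}$, but this does not change the argument.
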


\begin{corollary}
For each inner iteration index $T_1=0,1,...,T$ at $n$-th block epoch,  the expectation of $\lVert \nabla _{\tau_n}f\left( \theta ^{n,0} \right) \rVert ^2$ is bounded by $\lVert \nabla _{\tau_n}f\left( \theta _{\tau_n}^{n,T_1} \right) \rVert ^2$ 
\begin{equation}
        \frac{\mathbb{E}_t\left[\lVert \nabla _{\tau_n}f\left( \theta^{n,0} \right) \rVert ^2\right]-\alpha^2T^2C_2}{2}\le \mathbb{E}_t\left[\lVert \nabla _{\tau_n}f\left( \theta _{\tau_n}^{n,T_1} \right) \rVert ^2\right]
    \label{gradnorm trans}
\end{equation}
\label{gradient norm and total gradient}
Where $C_2=\frac{2L^2G^2}{\varepsilon}$.
\begin{proof}
\begin{align*}
    \mathbb{E}_t\left[\lVert \nabla _{\tau_n}f\left( \theta ^{n,0} \right) \rVert ^2\right]&\le 2\mathbb{E}_t\left[\lVert \nabla _{\tau_n}f\left( \theta _{\tau_n}^{n,T_1} \right) \rVert ^2\right]+2\mathbb{E}_t\left[\lVert \nabla _{\tau_n}f\left( \theta ^{n,0} \right)-\nabla _{\tau_n}f\left( \theta _{\tau_n}^{n,T_1} \right)  \rVert ^2\right]\\&
    \le2\mathbb{E}_t\left[\lVert \nabla _{\tau_n}f\left( \theta _{\tau_n}^{n,T_1} \right) \rVert ^2\right]+2\mathbb{E}_t\left[\lVert \nabla f\left( \theta ^{n,0} \right) -\nabla f\left( \overline{\theta }_{\tau _n}^{n,T_1}
 \right) \rVert ^2\right]\\&
    \le 2\mathbb{E}_t\left[\lVert \nabla _{\tau_n}f\left( \theta _{\tau_n}^{n,T_1} \right) \rVert ^2\right]+2L^2\mathbb{E}_t\left[\lVert \theta ^{n,0}-\overline{\theta }_{\tau _n}^{n,T_1} \rVert ^2\right]   
\end{align*}

Where 
\begin{align}
   \overline{\theta }_{\tau _n}^{n,T_1}&=\left(\theta_1^{n-1,T},...,\theta_{\tau_n-1}^{n-1,T},\theta_{\tau_n}^{n,T_1},\theta_{\tau_n+1}^{n-1,T}...,\theta_B^{n-1,T}\right)\\
   {\theta }^{n,0}&=\left(\theta_1^{n-1,T},...,\theta_{\tau_n-1}^{n-1,T},\theta_{\tau_n}^{n,0},\theta_{\tau_n+1}^{n-1,T}...,\theta_B^{n-1,T}\right),
   \label{eq:defination of theta}
\end{align}
the second inequality is because the partial derivative vector $\nabla _{\tau_n}f\left( \theta _{\tau_n}^{n,T_1}\right)$ is part of the gradient vector $\nabla f\left( \overline{\theta }_{\tau _n}^{n,T_1}
 \right) $.\par
Next, we will bound $\mathbb{E}_t\left[\lVert \theta ^{n,0}-\overline{\theta }_{\tau _n}^{n,T_1} \rVert ^2\right]
$.
\begin{align*}
    \mathbb{E}_t\left[\lVert \theta ^{n,0}-\overline{\theta }_{\tau _n}^{n,T_1} \rVert ^2\right]&=\alpha ^2\mathbb{E}_t\left[\lVert \sum_{j=1}^{T_1}{\varGamma _{\tau_n}^{n,j}m_{\tau_n}^{n,j}} \rVert ^2 \right]\\&
    \le \frac{T_1\alpha ^2}{\varepsilon}\sum_{j=1}^{T_1}\mathbb{E}_t\left[{\lVert m_{\tau_n}^{n,j} \rVert ^2}\right]\\&
   \le \frac{T_1^2\alpha ^2G^2}{\varepsilon}
   \le \frac{T^2\alpha ^2G^2}{\varepsilon}
\end{align*}
Where we used assumption \ref{aassumption 3} and lemma \ref{bound of each part}, the first inequality is because Cauchy-shwarz inequality.\par
We obtain
\[
\mathbb{E}_t\left[\lVert \nabla _{\tau_n}f\left( \theta ^{n,0} \right) \rVert ^2\right]\le2\mathbb{E}_t\left[\lVert \nabla _{\tau_n}f\left( \theta _{\tau_n}^{n,T_1} \right) \rVert ^2\right]+\frac{2T^2\alpha ^2L^2G^2}{\varepsilon}.
\]
Let $C_2=\frac{2L^2G^2}{\varepsilon}$, we have
\begin{equation*}
    \frac{\mathbb{E}_t\left[\lVert \nabla _{\tau_n}f\left( \theta^{n,0} \right) \rVert ^2\right]-\alpha^2T^2C_2}{2}\le \mathbb{E}_t\left[\lVert \nabla _{\tau_n}f\left( \theta _{\tau_n}^{n,T_1} \right) \rVert ^2\right].
\end{equation*}

\end{proof}
\end{corollary}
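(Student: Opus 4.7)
The plan is to bound the gap between $\nabla_{\tau_n} f(\theta^{n,0})$ and $\nabla_{\tau_n} f(\theta_{\tau_n}^{n,T_1})$ by the distance the $\tau_n$-th block has traveled during the inner iterations, and then relate that distance to $\alpha$, $T$, and constants via the Adam momentum bound from Lemma~\ref{bound of each part}. The starting point is the elementary inequality $\|a\|^2 \le 2\|b\|^2 + 2\|a-b\|^2$, applied with $a = \nabla_{\tau_n} f(\theta^{n,0})$ and $b = \nabla_{\tau_n} f(\theta_{\tau_n}^{n,T_1})$. This reduces the problem to controlling the expected squared difference of these two block gradients.

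Next, I would introduce the auxiliary full-parameter vector $\bar{\theta}_{\tau_n}^{n,T_1}$ which coincides with $\theta^{n,0}$ except in the $\tau_n$-th block, where it takes the value $\theta_{\tau_n}^{n,T_1}$; this is consistent with the conventions in \eqref{eq:defination of theta}. By construction, the $\tau_n$-th block of $\nabla f(\bar{\theta}_{\tau_n}^{n,T_1})$ equals $\nabla_{\tau_n} f(\theta_{\tau_n}^{n,T_1})$, so the block-gradient difference is dominated by the full-gradient difference $\|\nabla f(\theta^{n,0}) - \nabla f(\bar{\theta}_{\tau_n}^{n,T_1})\|$. Applying the $L$-smoothness of $f$ from Assumption~\ref{assumption 1} then yields the estimate
\begin{equation*}
\|\nabla_{\tau_n} f(\theta^{n,0}) - \nabla_{\tau_n} f(\theta_{\tau_n}^{n,T_1})\|^2 \le L^2 \|\theta^{n,0} - \bar{\theta}_{\tau_n}^{n,T_1}\|^2.
\end{equation*}

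The last step is to bound $\|\theta^{n,0} - \bar{\theta}_{\tau_n}^{n,T_1}\|^2$. By the Adam update rule, this distance equals $\alpha^2 \|\sum_{j=1}^{T_1} \Gamma_{\tau_n}^{n,j} m_{\tau_n}^{n,j}\|^2$. Expanding with Cauchy--Schwarz produces a factor of $T_1$, combined with the bound $\|\Gamma_{\tau_n}^{n,j}\| \le 1/\sqrt{\varepsilon}$ (from the definition of $\Gamma$ and the $\varepsilon$-floor) and the momentum bound $\|m_{\tau_n}^{n,j}\|^2 \le G^2$ from Lemma~\ref{bound of each part}, one gets $\|\theta^{n,0} - \bar{\theta}_{\tau_n}^{n,T_1}\|^2 \le T_1^2 \alpha^2 G^2/\varepsilon \le T^2\alpha^2 G^2/\varepsilon$. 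Substituting this into the chain above, taking expectations, identifying $C_2 = 2L^2 G^2/\varepsilon$, and rearranging yields the claimed inequality.

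The most delicate step, and the only one that requires a nontrivial observation, is the middle argument that replaces a block-gradient difference by a full-gradient difference at carefully chosen auxiliary points; without the embedding trick via $\bar{\theta}_{\tau_n}^{n,T_1}$, one cannot directly invoke the global $L$-smoothness of $f$, since the block-restricted smoothness constant might not apply across changes in other coordinates. Everything else is routine: triangle inequality, Cauchy--Schwarz, and substitution of the uniform bounds on $\Gamma_{\tau_n}^{n,j}$ and $m_{\tau_n}^{n,j}$.
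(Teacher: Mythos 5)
Your proposal is correct and follows essentially the same route as the paper: the elementary inequality $\lVert a\rVert^2\le 2\lVert b\rVert^2+2\lVert a-b\rVert^2$, the embedding of the block gradient as a coordinate slice of the full gradient at the auxiliary point $\overline{\theta}_{\tau_n}^{n,T_1}$, global $L$-smoothness, and then Cauchy--Schwarz together with the bounds $\lVert\Gamma_{\tau_n}^{n,j}\rVert\le 1/\sqrt{\varepsilon}$ and $\lVert m_{\tau_n}^{n,j}\rVert\le G$ from Lemma~\ref{bound of each part}. You have also correctly identified the one nontrivial observation (the auxiliary-point construction that lets one invoke full-gradient smoothness), which is exactly the pivot of the paper's argument.
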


\subsection{Proof of theorem~\ref{thm:main convergence}}
\label{main proof of theorem 1}
\begin{theorem}
(MISA Convergence)
Suppose that Assumptions \ref{assumption 1}, \ref{assumption 2}, \ref{aassumption 3} are fulfilled, learning rate satisfies $ \alpha \leq \min \left\{ \sqrt{\frac{\varepsilon}{4C_0L}},\frac{\sqrt{\varepsilon}}{C_1L} \right\} $ , MISA converges at the rate of $\mathcal{O}(\frac{1}{\sqrt{NT}}+\frac{1}{N})$.
\[
\frac{\sum_{n=1}^N{\mathbb{E}_t[\|\nabla f(\theta^{n,0})\|^2] }}{N}\le \frac{1}{\sqrt{N}}\frac{4C_0\sqrt{\Delta_0[TL\sigma^2+2LG^2(\frac{\beta_1}{1-\beta_1})^2]}}{T\pi \sqrt{\varepsilon}}+\mathcal{O}(\frac{1}{N}).
\]
Where $\Delta_0=f(x^{1,0})-f(x^*)$, $C_1=\frac{\beta_1}{1-\beta_1}$, $C_0=\frac{1}{\sqrt{G^2+\varepsilon}}$.
\end{theorem}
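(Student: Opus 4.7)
The plan is to chain Corollary \ref{K inner iterations}, Corollary \ref{gradient norm and total gradient}, and Corollary \ref{lower bound of sampling} through a telescoping argument over the outer iterations, and then balance the learning rate.

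First I would invoke Corollary \ref{K inner iterations} at every outer epoch $n=1,\dots,N$ and sum. The auxiliary iterate $x_{\tau_n}^{n,t}$ defined by the substitution \eqref{variable substitution}, together with the additional momentum step in Line 16 of Algorithm \ref{alg:MISA}, is designed precisely so that $f(x_{\tau_n}^{n,T})$ matches the starting point of the next epoch, causing the $f(x)$-differences to telescope cleanly down to $f(x^{1,0}) - f^* \le \Delta_0$. The $\sum_{n,t}\|\Delta\varGamma_{\tau_n}^{n,t}\|$ and $\sum_{n,t}\|\Delta\varGamma_{\tau_n}^{n,t}\|^{2}$ contributions collapse to absolute constants of order $1/\sqrt{\varepsilon}$ and $1/\varepsilon$ respectively by Lemma \ref{delta gemma}, so they only contribute to a lower-order $\mathcal{O}(1/N)$ residual. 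Rearranging yields
\begin{equation*}
\frac{\alpha}{2C_0}\sum_{n=1}^{N}\sum_{t=0}^{T-1}\mathbb{E}\bigl\|\nabla_{\tau_n} f(\theta_{\tau_n}^{n,t})\bigr\|^{2} \le \Delta_0 + \frac{NT\alpha^{2}L\sigma^{2}}{\varepsilon} + \frac{2N\alpha^{2}C_1^{2}LG^{2}}{\varepsilon} + \mathcal{O}(\alpha) + \mathcal{O}(NT\alpha^{3}).
\end{equation*}

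Next I would convert the sum of inner gradients into a sum of epoch-start gradients. Applying Corollary \ref{gradient norm and total gradient} for each $t=0,\dots,T-1$ and summing gives $T\,\mathbb{E}\|\nabla_{\tau_n} f(\theta^{n,0})\|^{2} \le 2\sum_{t=0}^{T-1}\mathbb{E}\|\nabla_{\tau_n} f(\theta_{\tau_n}^{n,t})\|^{2} + \alpha^{2}T^{3}C_2$, which isolates the critical quantity $\|\nabla_{\tau_n} f(\theta^{n,0})\|$. Taking expectation over the sampled block via Corollary \ref{lower bound of sampling}, the uniform lower bound $p_b^{n}\ge \pi$ lets me convert the block norm to the full gradient:
\begin{equation*}
\mathbb{E}_{\tau_n}\bigl\|\nabla_{\tau_n} f(\theta^{n,0})\bigr\|^{2} = \sum_{b=1}^{B} p_b^{n}\,\bigl\|\nabla_b f(\theta^{n,0})\bigr\|^{2} \ge \pi\,\bigl\|\nabla f(\theta^{n,0})\bigr\|^{2}.
\end{equation*}
Combining this with the telescoped inequality and dividing by $N$ produces
\begin{equation*}
\frac{1}{N}\sum_{n=1}^{N}\mathbb{E}\bigl\|\nabla f(\theta^{n,0})\bigr\|^{2} \le \frac{2C_0\Delta_0}{\pi\alpha N T} + \frac{2C_0\alpha L\sigma^{2}}{\pi\varepsilon} + \frac{4C_0\alpha C_1^{2}LG^{2}}{\pi T\varepsilon} + \mathcal{O}\!\left(\tfrac{1}{N}\right),
\end{equation*}
and an AM-GM balance $\alpha^{2} \asymp \Delta_0\varepsilon/\bigl(NT(TL\sigma^{2}+2LG^{2}C_1^{2})\bigr)$ equates the first and second/third terms, delivering exactly the announced $4C_0\sqrt{\Delta_0[TL\sigma^{2}+2LG^{2}C_1^{2}]}\big/(T\pi\sqrt{\varepsilon}\sqrt{N})$ leading rate, with the $\mathcal{O}(1/N)$ remainder swallowing the constant-order $\|\Delta\varGamma\|$ terms and the $\alpha^{2}T^{3}C_2$ bias.

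The main obstacle is the second step: conventional BCD analyses rely on the unbiasedness identity $\mathbb{E}_{\tau_n}[U_{\tau_n}g_{\tau_n}]=\nabla f(\theta^{n,0})$, but here the $T$ inner Adam updates evaluate gradients at iterates $\theta_{\tau_n}^{n,t}$ that drift away from $\theta^{n,0}$, simultaneously breaking the unbiased-estimator property and coupling gradient bias with stochastic noise. Corollary \ref{gradient norm and total gradient} combined with the explicit probability lower bound $\pi$ is what makes the bridge possible, but care is required to keep the bias penalty $\alpha^{2}T^{3}C_2$ subdominant: once $\alpha\sim 1/\sqrt{NT}$, this drift term becomes of order $T^{2}/N$, which is absorbed into the $\mathcal{O}(1/N)$ residual provided $T$ is treated as fixed and $N$ grows. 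Verifying this balance against the constants accumulated in the telescoping step is the delicate part of the argument.
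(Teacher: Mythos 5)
Your proposal follows essentially the same chain of reasoning as the paper: per-epoch descent from Corollary~\ref{K inner iterations}, the auxiliary iterate from \eqref{variable substitution} and the extra momentum step so that the $f(x)$-differences telescope, the block-to-full-gradient bridge from Corollary~\ref{gradient norm and total gradient}, the uniform probability lower bound $\pi$ from Corollary~\ref{lower bound of sampling}, absorption of the $\|\Delta\varGamma\|$ sums via Lemma~\ref{delta gemma} into an $\mathcal{O}(1/N)$ residual, and an AM-GM choice of $\alpha$. The only difference is cosmetic order of operations (you telescope first, then apply the block-to-full conversion; the paper converts per epoch and then telescopes), which is equivalent and arguably a bit cleaner for applying Lemma~\ref{delta gemma}. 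Note two small arithmetic slips: the leading constant after the conversion is $4C_0\Delta_0/(\pi\alpha NT)$, not $2C_0\Delta_0/(\pi\alpha NT)$ (Corollary~\ref{gradient norm and total gradient} costs a factor of $2$ on top of the $2C_0$), and the balancing choice should be $\alpha^2 \asymp \Delta_0\varepsilon/\bigl(N(TL\sigma^2+2LG^2C_1^2)\bigr)$ with no extra $T$ in the denominator — this is the $\alpha$ that equates $\Delta_0/(\alpha NT)$ with $\alpha(TL\sigma^2+2C_1^2LG^2)/(T\varepsilon)$.
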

\begin{proof}
    From algorithm \ref{alg:MISA analytical view}, we have
    \[
\theta_{\tau_{n+1}}^{n+1,0}=\theta_{\tau_{n}}^{n,T}-\alpha \frac{\beta_1}{1-\beta_1}\varGamma_{\tau_n}^{n,T}m_{\tau_n}^{n,T}
    \]
From the definition of the sequence $\left\{{x_{\tau_{n}}^{n,T}}\right\}$ in \eqref{defination of x}, we obtain
\[
\mathbb{E}_t[f(x_{\tau_{n}}^{n+1,0})]=\mathbb{E}_t[f(\theta_{\tau_{n}}^{n+1,0})]=\mathbb{E}_t[f(x_{\tau_{n}}^{n,T})]
\]    
Which implies
\begin{equation}
\mathbb{E}_t[f(x_{\tau_{n}}^{n+1,0})]=\mathbb{E}_t[f(x_{\tau_{n}}^{n,T})]
\label{connection of loss}
\end{equation}

Substituting \eqref{gradnorm trans} and \eqref{connection of loss} into \eqref{inner update} ,we obtain\par
\[
    \mathbb{E}_t[f(x_{\tau_{n}}^{n+1,0}) - f(x_{\tau_n}^{n,0})] 
\leq 
-\frac{\alpha T}{4 C_0}  \mathbb{E}_t[\|\nabla_{\tau_n} f(\theta^{n,T})\|^2] 
+  \frac{T \alpha^2 L \sigma^2}{\varepsilon} 
+ \frac{T \alpha^3C_1^2 L^2 C_0 G^2}{\varepsilon^2}.
\]
\begin{equation}
    +2 
\alpha (1 + C_1) G^2 
 \sum_{t=1}^{T} \mathbb{E}_t\left[ \lVert \Delta \varGamma_t \rVert \right] 
+ \frac{2\alpha ^2 C_1^2LG^2}{{\varepsilon}}+\frac{\alpha^3T^3C_2}{4C_0}
\end{equation}

It's obvious that the norm of full gradient is equal to the sum of individual block's gradient norms.
\[
\lVert \nabla f\left( \theta ^{n,T} \right) \rVert ^2=\sum_{b=1}^B{\lVert \nabla _bf\left( \theta^{n,T} \right) \rVert ^2}
\]
Taking expectation, we have
\begin{align}
    \label{full gradient and gamma with part}
\mathbb{E}_t\left[ \lVert \nabla f\left( \theta ^{n,T} \right) \rVert ^2 \right] &=\sum_{b=1}^B{\mathbb{E}_t\left[ \lVert \nabla _bf\left( \theta ^{n,T} \right) \rVert ^2 \right]}
\end{align}
Using the conclusion of Corollary \ref{K inner iterations} and taking expectation with respect to $\tau_n$, we have
\begin{align*}
    \mathbb{E}_{t,\tau_n}[f(x^{n+1,0})-f(x^{n,0})]&=\sum_{b=1}^B{p_b^{n}\mathbb{E}_t[f(x_b^{n+1,0})-f(x_b^{n,0}) ] }\\&
    \le -\frac{\alpha T}{4 C_0}  \sum_{b=1}^BP_b^n\mathbb{E}_t[\|\nabla_{b} f(\theta^{n,0})\|^2] 
+ \frac{T \alpha^2 L \sigma^2}{\varepsilon} 
+ \frac{T \alpha^3C_1^2 L^2 C_0 G^2}{\varepsilon^2}\\&
+ 2 
\alpha (1 + C_1) G^2 
 \sum_{b=1}^B{P_b^n(\sum_{t=1}^{T} \mathbb{E}_t\left[ \lVert \Delta \varGamma_{b}^{n,t} \rVert \right])} 
+ \frac{2\alpha ^2 C_1^2LG^2}{{\varepsilon}}+\frac{\alpha^3T^3C_2}{4C_0}\\&
\le  -\frac{\alpha T}{4 C_0}  \pi \mathbb{E}_t[\|\nabla f(\theta^{n,0})\|^2] 
+ \frac{T \alpha^2 L \sigma^2}{\varepsilon} 
+ \frac{T \alpha^3C_1^2 L^2 C_0 G^2}{\varepsilon^2}\\&
+ 2 
\alpha (1 + C_1) G^2 
 \sum_{b=1}^B{P_b^n(\sum_{t=1}^{T} \mathbb{E}_t\left[ \lVert \Delta \varGamma_{b}^{n,t} \rVert \right])} 
+ \frac{2\alpha ^2 C_1^2LG^2}{{\varepsilon}}+\frac{\alpha^3T^3C_2}{4C_0}\\&
\le  -\frac{\alpha T}{4 C_0}  \pi \mathbb{E}_t[\|\nabla f(\theta^{n,0})\|^2] 
+ \frac{T \alpha^2 L \sigma^2}{\varepsilon} 
+ \frac{T \alpha^3C_1^2 L^2 C_0 G^2}{\varepsilon^2}\\&
+ 2 
\alpha (1 + C_1) G^2 
 \sum_{t=1}^{T} {\mathbb{E}_t\left[ \lVert \Delta \varGamma_{b_n}^{n,t} \rVert \right]}
+ \frac{2\alpha ^2 C_1^2LG^2}{{\varepsilon}}+\frac{\alpha^3T^3C_2}{4C_0}\\
\end{align*}
Where $b_n=\underset{b\in \left\{ 1,2,...,B \right\}}{arg\max}\left\{ \sum_{t=1}^{T}{\mathbb{E}_t\left[ \lVert \Delta\varGamma _{b}^{n,t} \rVert \right]} \right\}$, $\left\{P_b^{n}\right\}$ is the sampling probability of each block at $n$-th block epoch, the first inequality is because Corollary \ref{K inner iterations}, the second inequality is because Corollary \ref{lower bound of sampling}.\par

Let $D_0=\frac{ \pi}{4 C_0},D_1=\frac{ L \sigma^2}{\varepsilon},D_2= \frac{C_1^2 L^2 C_0 G^2}{\varepsilon^2},D_3=2 
(1 + C_1) G^2 ,D_4=\frac{2 C_1^2LG^2}{\varepsilon}, D_5=\frac{C_2}{4C_0}$,
we obtain
\begin{align*}
       \mathbb{E}_t[f(x^{n+1,0})-f(x^{n,0})]&\le -\alpha TD_0{\mathbb{E}_t[\|\nabla f(\theta^{n,0})\|^2] }+T \alpha^2 D_1
+ T \alpha^3 D_2+\alpha D_3 \sum_{t=1}^T \mathbb{E}_t\left[ \lVert \Delta \varGamma_{b_n}^{n,t} \rVert \right]\\&+D_4 \alpha^2+D_5 \alpha^3T^3 
\end{align*}

Updating the above equation for $N$ times, we obtain
\begin{align*}
    \mathbb{E}_t[f(x^{N+1,0})-f(x^{1,0})]&\le -\alpha  TD_0\sum_{n=1}^N{\mathbb{E}_t[\|\nabla f(\theta^{n,0})\|^2] }+NT \alpha^2 D_1
+ NT \alpha^3 D_2\\&
+\alpha D_3 \sum_{n=1}^N{\sum_{t=1}^{T} \mathbb{E}_t\left[ \lVert \Delta \varGamma_{b_n}^{n,t} \rVert \right]}+ND_4 \alpha^2+ND_5 \alpha^3T^3
\end{align*}

From lemma \ref{delta gemma}, we know
\[
\sum_{n=1}^N{\sum_{t=1}^{T} \mathbb{E}_t\left[ \lVert \Delta \varGamma_{b_n}^{n,t} \rVert \right]}\le  \frac{2}{\sqrt{\varepsilon}}
\]
Let $D_6=-(f(x^{*})-f(x^{1,0}))$, 
 we obtain
\begin{align*}
     \frac{\sum_{n=1}^N{\mathbb{E}_t[\|\nabla f(\theta^{n,0})\|^2] }}{N}&\le \frac{D_6}{\alpha N TD_0}+\frac{\alpha D_1}{D_0}+\frac{\alpha^2D_2}{D_0}+\frac{2D_3}{NTD_0\sqrt{\varepsilon}}+\frac{\alpha D_4}{TD_0}+\frac{T^2\alpha^2D_5}{D_0}\\&
     =\frac{D_6}{\alpha N TD_0}+\alpha\frac{TD_1+D_4}{TD_0}+\alpha^2\frac{(D_2+T^2D_5)}{D_0}
\end{align*}
Choosing $\alpha=\sqrt{\frac{D_6}{N(TD_1+D_4)}}$, we get the conclusion
\begin{align*}
\frac{\sum_{n=1}^N{\mathbb{E}_t[\|\nabla f(\theta^{n,0})\|^2] }}{N}&\le \frac{2\sqrt{D_6(TD_1+D_4)}}{\sqrt{N}TD_0}+\mathcal{O}(\frac{1}{N})\\&
=\frac{1}{\sqrt{N}}\frac{4C_0\sqrt{\Delta_0[TL\sigma^2+2LG^2(\frac{\beta_1}{1-\beta_1})^2]}}{T\pi \sqrt{\varepsilon}}+\mathcal{O}(\frac{1}{N}).
\end{align*}
\end{proof}

\section{Memory Analysis}\label{appendix:memory_analysis}
The memory analysis presented in this study is conducted under frozen embedding layer and language modeling head layer configurations, with memory quantification comprehensively defined as the cumulative composition of parameters, gradients, optimizer states, and intermediate activation.

\textbf{Notation.} We suppose that the model follows the standard transformer architecture, where each layer consists of six modules, i.e., 
$W_Q \in \mathbb{R}^{h\times h}, \quad W_K \in \mathbb{R}^{h\times h}, \quad W_V \in \mathbb{R}^{h\times h}, \quad W_O \in \mathbb{R}^{h\times h}, \quad W_{1} \in \mathbb{R}^{h\times 4h}, \quad W_{2} \in \mathbb{R}^{4h\times h}$.  Let $L$ denote the number of the transformer layers, and $a$ denotes the number of attention heads.  $v$ is the vocabulary size of the model. $s$ refer to the sequence length. $b$ represents the training batch size. The embedding representation dim is $h$. 

\begin{table}[H]
\caption{\textbf{Parameter dimension}}
\vskip 0.1in
\label{table:Parameter dimension}
\centering
\begin{tabular}{cccc}
\toprule
Parameter &  Property  & Memory consumption  \\
\midrule
$X$ & Activation & $ s*h*b=bsh
 $ \\
$W_Q$ & Model parameter & $ a*(h*h/a)=h^2
 $ \\
$Q$ & Activation & $ a*(s*h/a)*b=bsh $  \\
$W_K$ & Model parameter
&$ a*(h*h/a)=h^2 $ \\
$K$ & Activation
&$ s*(h*h/a)*b=bsh $ \\
$W_V$ & Model parameter
&$a*(h*h/a)=h^2 $ \\
$V$ & Activation
&$a*(s*h/a)*b=bsh $\\
$S$ & Activation
&$a*s*s*b=abs^2 $\\
$O$ & Activation
&$ a*(s*h/a)*b=bsh $ \\
$W_O$ & Model parameter
&$ a*(h*h/a)=h^2$ \\
$U$ & Activation
&$ s*h*b=bsh $\\
$Z$ & Activation
&$ s*h*b=bsh $\\
$Z_1$ & Activation
&$ s*4h*b=4bsh $ \\
$W_1$ & Model parameter
&$ h*4h=4h^2$ \\
$\mathbb{I}_{ZW_1+b_1>0}$ & Activation
&$ s*4h*b=4bsh $\\
$W_2$ & Model parameter
&$ 4h*h=4h^2 $ \\

\bottomrule
\end{tabular}
\end{table}

\subsection{Memory analysis of layer-wise optimization method}
In this section, we will give detailed memory analysis for layer-wise optimization methods in backward propagation based on LLaMA's standard architecture.
\begin{lemma}
    $\frac{\partial f}{\partial U} = g_1\left(\frac{\partial f}{\partial Z},U\right)$, \quad $\frac{\partial f}{\partial A} = g_2\left(\frac{\partial f}{\partial S},S\right)$\par
    Where $Z=Layernorm(U)$, $S=softmax(A)$.
    \begin{proof}
    First, we compute $\frac{\partial f}{\partial U}$.\par
        The computational flow of $Z=LayerNorm(U)$ can be written as
    \begin{align*}
        \mu &=\frac{1}{h}\sum_{i=1}^h{U_{:,i}}\\
        \sigma &=\sqrt{\frac{1}{h}\sum_{i=1}^h{\left( U_{:,i}-\mu \right) ^2+\delta}}\\
        Z&=\frac{U-\mu}{\sigma}
    \end{align*}
Taking the partial derivative of $U$, we obtain
\[
\frac{\partial f}{\partial U_{ij}}=\frac{1}{\sigma _j}\left( \frac{\partial f}{\partial Z_{ij}}-\frac{1}{h}\sum_{k=1}^h\left\{{\frac{\partial f}{\partial Z_{ik}}-\frac{U_{ij}-\sigma _j}{\sigma _{j}^{2}}}·\frac{\partial f}{\partial Z_{ik}}\left( U_{ik}-\mu _j \right) \right\}\right)
\]
We observe that the computation of $\frac{\partial f}{\partial U}$ depends solely on $\frac{\partial f}{\partial Z}$ and $U$. For notational convenience, we denote $\frac{\partial f}{\partial U}$ as $g_1\left(\frac{\partial f}{\partial Z},U\right)$.\par
Second,we compute $\frac{\partial f}{\partial A}$.\par
Let $S_{ij}=\frac{e^{A_{ij}}}{\sum_{k=1}^s{e^{A_{ik}}}}$,  we have
\[
\frac{\partial S_{ij}}{\partial A_{mn}}=\left\{ \begin{array}{l}
	0,\ \ \ if\ m\ne i\\
	-\frac{e^{A_{im}}e^{A_{ij}}}{\left( \sum_{k=1}^s{e^{A_{ik}}} \right) ^2}=-S_{in}S_{ij},\ \ if\ m=i\ and\ n\ne j\\
	\frac{e^{A_{ij}}\left( \sum_{k\ne j}{e^{A_{ik}}} \right)}{\left( \sum_{k=1}^s{e^{A_{ik}}} \right) ^2}=S_{ij}\left( 1-S_{ij} \right) ,\ \ if\ m=i\ and\ n=j\\
\end{array} \right. 
\]
We obtain
\begin{align*}
    \frac{\partial f}{\partial A_{ij}}&=\sum_{k\ne j}{\frac{\partial f}{\partial S_{ik}}\left( -S_{ik}S_{ij} \right) +\frac{\partial f}{\partial S_{ij}}S_{ij}}\left( 1-S_{ij} \right)\\&
    =S_{ij}\left( \frac{\partial f}{\partial S_{ij}}-\sum_{k=1}^s{\frac{\partial f}{\partial S_{ik}}S_{ik}} \right) \\&
\end{align*}
Which indicates
\[
\frac{\partial f}{\partial A}=g_2\left( \frac{\partial f}{\partial S},S \right).
\]   
    \end{proof}
    \label{lemma B.1.}
\end{lemma}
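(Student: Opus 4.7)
The lemma makes two independent claims, one for LayerNorm and one for softmax; I would treat them in sequence with essentially the same strategy: unfold each operation into its elementary components, apply the chain rule, and then inspect the resulting closed form to verify which quantities it actually depends on. The point of the lemma is really to confirm that, for backward propagation through these two sub-modules, one only needs to cache the input of LayerNorm and the output of softmax, a fact that will then justify the activation-memory accounting used elsewhere in the memory analysis.

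For the LayerNorm assertion, I would first write $Z = (U - \mu)/\sigma$ where $\mu$ and $\sigma$ are the per-row mean and standard deviation of $U$ (plus an $\varepsilon$ for numerical stability), i.e., deterministic functions of $U$ alone. Since $U$ influences $Z$ through three channels — the direct subtraction, the mean $\mu(U)$, and the standard deviation $\sigma(U)$ — applying the chain rule to $f(Z(U))$ splits $\partial f/\partial U_{ij}$ into three corresponding terms. After computing $\partial \mu/\partial U_{ij}$ and $\partial \sigma/\partial U_{ij}$ by elementary calculus, I would collect terms into a single expression in $\partial f/\partial Z$, $U$, $\mu$, and $\sigma$. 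The final observation is that $\mu$, $\sigma$, and $Z$ are all recoverable from $U$ alone via the forward formulas, so the resulting expression is of the claimed form $g_1(\partial f/\partial Z, U)$; in particular one does not need to have stored $Z$.

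For the softmax assertion, I would start from the row-wise definition $S_{ij} = e^{A_{ij}}/\sum_k e^{A_{ik}}$ and compute $\partial S_{ij}/\partial A_{mn}$ directly. The derivative vanishes when $m \ne i$, since softmax is applied row by row; for $m = i$ the standard computation splits into the cases $n = j$ and $n \ne j$, which can be unified as $S_{ij}(\delta_{jn} - S_{in})$. Chaining against $\partial f/\partial S$ then yields the clean formula
\[
\frac{\partial f}{\partial A_{ij}} = S_{ij}\Bigl(\frac{\partial f}{\partial S_{ij}} - \sum_{k} S_{ik}\,\frac{\partial f}{\partial S_{ik}}\Bigr),
\]
which manifestly depends only on $S$ and $\partial f/\partial S$, giving the desired $g_2$.

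The main obstacle here is not conceptual depth but \emph{index discipline}. The LayerNorm step couples all entries within a normalization row through $\mu$ and $\sigma$, so one must track three pathways of dependence and avoid collapsing indices prematurely; a small bookkeeping slip would produce a spurious dependence on quantities other than $\partial f/\partial Z$ and $U$, which would break the claim. Once that is handled carefully, both parts reduce to straightforward chain-rule calculations and the lemma follows.
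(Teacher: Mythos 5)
Your plan matches the paper's own proof essentially step for step: unfold LayerNorm into $\mu$, $\sigma$, and the normalization, push the chain rule through, and observe the result depends only on $\partial f/\partial Z$ and $U$; then compute the row-wise softmax Jacobian, chain against $\partial f/\partial S$, and observe the result depends only on $S$ and $\partial f/\partial S$. The approach, the decomposition, and the final observations are the same, so the proposal is correct and not meaningfully different from the paper's argument.
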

Below, we present the forward and backward propagation flows of certain transformer layer.\par
\begin{tikzpicture}
    \node at (0, 4.5) {${Q = X {W_Q}}$};  
    \node at (0, 3.75) {${K = X {W_K}}$};    
    \node at (0, 3) {${V = X {W_V}}$};  
    \node at (0, 2.25) {${A =\frac{QK^{T}}{\sqrt{d_k}} }$}; 
    \node at (0, 1.5) {${S=softmax(A)}$};  
    \node at (0, 0.75) {${O=SV}$};
    \node at (0, 0) {${Attn=OW_0}$};
    \node at (0, -0.75) {${U=Attn+X}$};
    \node at (0, -1.5) {${Z=LayerNorm(U)}$};
    \node at (0, -2.25) {${Z_1=ReLU(ZW_1+b_1)}$};
    \node at (0, -3) {${Z_2=Z_1W_2+b_2}$};

    \node at (7, -3) {${\frac{\partial f}{\partial Z_1}=\frac{\partial f}{\partial Z_2}W_{2}^{T},\ \frac{\partial f}{\partial W_2}=Z_1^{T}\frac{\partial f}{\partial Z_2}
}$};
    \node at (7, -2.25) {${\frac{\partial f}{\partial Z}=\frac{\partial f}{\partial Z_1}\odot \mathbb{I}_{ZW_1+b_1>0} W_{1}^{T},\ \frac{\partial f}{\partial W_1}=Z^T\frac{\partial f}{\partial Z_1}
}$};
    \node at (7, -1.5) {$\frac{\partial f}{\partial U} = g_1\left(\frac{\partial f}{\partial Z},U\right)$};
    \node at (7, -0.75) {$\frac{\partial f}{\partial Attn} = \frac{\partial f}{\partial U}$};
    \node at (7, 0) {$\frac{\partial f}{\partial O} = \frac{\partial f}{\partial Attn}W_0^{T}, \frac{\partial f}{\partial W_0}=O^{T}\frac{\partial f}{\partial Attn}$};
    \node at (7, 0.75) {$\frac{\partial f}{\partial S} = \frac{\partial f}{\partial O}V^{T}, \frac{\partial f}{\partial V}=S^{T}\frac{\partial f}{\partial O}$};
    \node at (7, 1.5) {$\frac{\partial f}{\partial A} = g_2\left(\frac{\partial f}{\partial S},S\right) $};
    \node at (7, 2.25) {$\frac{\partial f}{\partial Q} = \frac{\partial f}{\partial A}K^{T}\frac{1}{\sqrt{d_k}}, \frac{\partial f}{\partial K}=(\frac{\partial f}{\partial A})^TQ\frac{1}{\sqrt{d_k}} $};
     \node at (7, 3) {$\frac{\partial f}{\partial X} = \frac{\partial f}{\partial V}W_V^{T},  \frac{\partial f}{\partial W_V} = X^T\frac{\partial f}{\partial V}$};
     \node at (7, 3.75) {$ \frac{\partial f}{\partial W_K} = X^T\frac{\partial f}{\partial K}$};
    \node at (7, 4.5) {$\frac{\partial f}{\partial W_Q} = X^T\frac{\partial f}{\partial Q}$};

    \draw[line width=1pt, red, ->] (-2, 5) -- (-2, -3.6) node[midway, left] {Forward};

    \draw[line width=1pt, red, ->] (10.45, -3.6) -- (10.45, 5) node[midway, right] {Backward};
    \label{forward and backward propagation flows }
\end{tikzpicture}
\par

Next, we will analyze layer-wise optimization methods' memory based on the forward and backward propagation flow of a transformer layer.\par

\textbf{Activation memory of frozen layers. }
To enable backpropagation, the activation gradients of a Transformer layer must be stored, including the tensors
$Q,V,K,S,U$ and $\mathbb{I}_{ZW_1+b_1>0}$. The total activation memory is calculated as: $3bsh+abs^2+5bsh=abs^2+8bsh$.

\textbf{Activation memory of activated layers.}  In the layer-wise optimization method, the weight matrices of unfrozen layers $W_Q,W_K,W_V,W_O,W_1,W_2$ require computation, necessitating the storage of 
 $X,Q,K,V,O,S,U,\mathbb{I}_{ZW_1+b_1>0},Z$ and $Z_1$. The total activation memory here is: $5bsh+abs^2+10bsh=abs^2+15bsh$.

\textbf{Parameter memory of layer-wise optimization method. }
Irrespective of which Transformer layer is activated, the model parameters must always be stored. This includes the weight matrices of a full Transformer layer $W_Q,W_K,W_V,W_O,W_1,W_2$. The total parameter memory is: $L(4h^2+8h^2)=12h^2L$.

\textbf{Optimizer state and gradient memory of frozen layer. }
For frozen layers, neither optimizer states nor gradients need to be stored, resulting in zero memory usage for these components.

\textbf{Optimizer state and gradient memory of activated layer. }Activated layers require storing both gradients and optimizer states. The total memory for this is $3\times 12h^2=36h^2$. 

\begin{table}[H]
\caption{\textbf{Memory of layer-wise optimization method}}
\label{table:MISA VS LISA}
\vskip 0.1in
\centering
\begin{tabular}{cccc}
\toprule
Layer State & Parameter Memory& Activation Memory&Optimizer State and Gradient Memory   \\
\midrule
Activated layer &$  12h^2L $
&$abs^2+15bsh $&$ 36h^2 $ \\
Frozen layer &$12h^2L $
&$ abs^2+8bsh $&$ 0 $ \\
\bottomrule
\end{tabular}
\end{table}
In practice, we assume that the activated layer's ID is $i$, we obtain the following conclusion
\begin{align*}
    Parameter&:12h^2L\\
    Optimizer\ \ State &:24h^2\\
    Gradient &:12h^2\\
    Activation &:(L-i)(abs^2+8bsh)+(abs^2+15bsh)\\
    Total&:(L-i)(abs^2+8bsh)+(abs^2+15bsh)+12h^2L+36h^2\\
    \label{memory of BCD}
\end{align*}
The rationale for the activation memory is as follows: when the activated layer ID is $i$, backpropagation only needs to propagate to the $i$-th layer. Consequently, the activations of layers $j=1,2,...,i-1$ can be saved.\par
Based on this, we derive the peak memory of the layer-wise method. 
\begin{equation}
    L(abs^2+8bsh)+7bsh+12h^2L+36h^2.
    \label{peak memory of Layerwise fine-tuning method}
\end{equation}

\subsection{Memory analysis of MISA and LoRA}
In this section, we focus on the scenario where only a specific module within a layer is activated at each step. Based on this, we derive a memory comparison between the module-wise optimization method and LoRA.\par

\subsubsection{Memory analysis of Module-wise BCD}
Based on the computation flow described above, activating different individual modules incurs additional memory consumption compared to the module-wise optimization method.
\begin{table}[H]
\caption{\textbf{Memory of Module-wise BCD }}
\label{table:extra memory of BCD}
\centering
\begin{tabular}{cccc}
\toprule
Unfrozen Module & Extra Storage &Extra Activation Memory &Extra Optimizer State And Gradient Memory\\
\midrule
$W_Q$ & $ X,W_Q$&$ bsh$&$ 3h^2 
$ \\
$W_K$ &$X,W_K $
&$ bsh $&$ 3h^2 $ \\
$W_V$ &$X,W_V $
&$bsh$&$ 3h^2 $ \\
$W_O$ &$O,W_O $
&$ bsh $&$3h^2 $ \\
$W_1$ &$Z,W_1 $
&$ bsh $&$ 12h^2 $ \\
$W_2$ &$ Z_1,W_2 $
&$ 4bsh $&$12h^2 $ \\
\bottomrule
\end{tabular}
\end{table}
Let us consider the case where the activated module is $W_Q$ as an example. The memory analysis yields the following components:

\begin{itemize}
    \item \textbf{Activation memory of frozen modules}\par
    From the layer-wise memory analysis, we know that for a frozen layer, the activation memory requirement is $abs^2 + 8bsh$.
    
    \item \textbf{Activation memory of the activated module}\par
    To enable backpropagation through $W_Q$, we need to store the additional activation $X$, resulting in a total memory of $abs^2 + 9bsh$.
    
    \item \textbf{Parameter memory in layer-wise optimization}\par
    The layer-wise approach requires storing all parameters of each layer, amounting to $12h^2L$ memory.
    
    \item \textbf{Optimizer state and gradient memory for frozen layers}\par
    Similar to the layer-wise optimization case, the module-wise approach requires no additional memory for frozen layers.
    
    \item \textbf{Optimizer state and gradient memory for the activated layer}\par
    Only the optimizer states and gradients for $W_Q$ need to be stored, requiring $3h^2$ memory.
\end{itemize}

The peak memory consumption for the module-wise optimization method, when $W_Q$ is the activated module, is therefore:
\[
L(abs^2 + 8bsh) + bsh + 12h^2L + 3h^2
\]

\subsubsection{Memory analysis of LoRA}
Consider a weight matrix $W_Q$ with LoRA adaptation, which can be expressed as:
\[
W_Q = W_0 + B_Q A_Q, \quad \text{where} \quad B_Q \in \mathbb{R}^{h \times r}, \ A_Q \in \mathbb{R}^{r \times \frac{h}{a}}
\]

The backward propagation gradients are computed as:
\[
\frac{\partial \mathcal{L}}{\partial B_Q} = \frac{\partial \mathcal{L}}{\partial W_Q} A_Q^\top
\]
\begin{equation}
    \frac{\partial \mathcal{L}}{\partial A_Q} = B_Q^\top \frac{\partial \mathcal{L}}{\partial W_Q}
    \label{eq:lora_gradients}
\end{equation}

This formulation leads to the following memory requirements:
\begin{itemize}
    \item \textbf{Embedding layer activations}: Requires storing parameters of size $sh$.
    
    \item \textbf{Adapter gradients and optimizer states}: For each layer's adapters $A_Q$ and $B_Q$, this requires $3aL\left(\frac{2hr}{a}\right) = 6hrL$ memory.
    
    \item \textbf{Full model parameters}: The base model parameters require $12h^2L$ memory.
    
    \item \textbf{Activation storage}: During backpropagation through $W_Q$, LoRA needs to store additional activations $X$, $A_Q$, and $B_Q$, requiring $L(abs^2 + 9bsh + 2hr)$ memory.
\end{itemize}

The peak memory consumption when applying LoRA to $W_Q$ is therefore:
\[
L\left(abs^2 + 9bsh + 12h^2 + 8hr\right)
\]

From the analysis above, we can similarly generalize the memory storage of different modules. 

\subsection{Memory analysis of GaLore}
Consider activating $W_Q$ as an example, GaLore maintains a projection matrix $P_Q\in\mathbb{R}^{h\times r}$ for $W_Q$ and optimizer states in the projected subspace of shape $r\times h$. 
It holds that:
\begin{itemize}
    \item GaLore needs to store gradient, projection matrix and optimizer state of each layer's $W_Q$, which needs $4hrL$ memory storage.
    \item GaLore needs to store full model parameters, which needs $12h^2L$ memory storage.
    \item For backward propagation, GaLore needs to store extra activation $X$, leading to $L(abs^2+9bsh)$ activation memory in total.
\end{itemize}
We obtain the peak memory of GaLore when the activated module is $W_Q$:
\[L(abs^2+9bsh+12h^2+4hr).\]
From the analysis above, we can similarly generalize the memory storage of different modules. Then we obtain the comparison between LoRA, GaLore and Module-wise BCD.

\begin{table}[H]
\caption{\textbf{Peak memory comparison between LoRA, GaLore and Module-wise BCD}}
\label{table:MCD and lora}
\centering
\begin{tabular}{cccc}
\toprule
Activating Module &  LoRA & GaLore & Modulewise-BCD \\
\midrule
\multirow{2}{*}{$W_Q$} & $L(abs ^2+9bsh$ & $L(abs^2+9bsh$  &$L(abs ^2+8bsh$   \\
& $+12h^2+8hr)$ & $+12h^2+4hr)$ & $+12h^2 )+bsh+3h^2$\\
\multirow{2}{*}{$W_K$} &$L(abs ^2+9bsh$
&$L( abs^2+9bsh$&$L( \ abs ^2+8bsh $\\
& $+12h^2+8hr)$& $+12h^2+4hr)$& $+12h^2)+bsh+3h^2$\\
\multirow{2}{*}{$W_V$} & $L( abs^2+9bsh$  &$L( abs ^2+9bsh$
&$L( abs ^2+8bsh$ \\
 & $+12h^2+8hr)$  &$+12h^2+4hr)$
&$+12h^2)+bsh+3h^2  $ \\
\multirow{2}{*}{$W_O$} & $L( abs^2+9bsh$   &$L(abs ^2+9bsh$
&$L(abs ^2+8bsh$\\
& $+12h^2+4hr)$   &$+12h^2+8hr)$
&$+12h^2)+bsh+3h^2  $\\
\multirow{2}{*}{$W_1$} &$L(abs ^2+9bsh$  & $L(abs^2+9bsh$
&$L(abs ^2+8bsh$\\
&$+12h^2+20hr) $  & $+12h^2+13hr)$
&$+12h^2)+bsh+12h^2 $\\
\multirow{2}{*}{$W_2$} &$L(abs ^2+12bsh$ & $L(abs^2+12bsh$
&$L(abs ^2+8bsh$ \\
&$+12h^2+20hr) $ & $+12h^2+13hr)$
&$+12h^2)+4bsh+12h^2 $ \\
\multirow{2}{*}{$All$} &$L(abs ^2+15bsh$ & $L(abs^2+15bsh$
&$L(abs ^2+8bsh$ \\
&$+12h^2+72hr) $ & $+12h^2+42hr)$
&$+12h^2)+7bsh+36h^2 $ \\
\bottomrule
\end{tabular}
\end{table}

\subsection{Memory analysis of MISA}
Let $x$, $y$, and $z$ denote the number of activated modules for $W_Q/W_K/W_V$, $W_1$, and $W_2$, respectively. Based on Table~\ref{table:extra memory of BCD}, we derive the memory estimation for MISA as follows:

\begin{itemize}
    \item \textbf{Activation Memory} \par
    Compared to layer-wise optimization, MISA requires additional activation storage of:
    \[
    bshx + bshy + 4bshz
    \]
    
    \item \textbf{Parameter Memory} \par
    The full model parameters require:
    \[
    12h^2L
    \]
    
    \item \textbf{Optimizer States and Gradients} \par
    The memory needed for storing optimizer states and gradients of activated modules is:
    \[
    3h^2x + 12h^2y + 12h^2z
    \]
\end{itemize}

Given a trainable parameter ratio threshold $\delta$, where the number of trainable parameters is $12h^2L\delta$, we formulate the optimization problem to determine the maximum memory consumption $M$:

\begin{align*}
\text{Constraint:} & \quad h^2x + 4h^2(y + z) \leq 12h^2L\delta \\
\text{Objective:} & \quad M = L(abs^2 + 8bsh + 12h^2) + bsh(x + y + 4z) \\
                 & \qquad + h^2(3x + 12y + 12z)
\end{align*}

The optimal solution yields the peak memory consumption of MISA:
\begin{equation}
L\left(abs^2 + 8bsh + 12h^2 + 12bsh\delta + 36h^2\delta\right)
\label{eq:misa_peak_memory}
\end{equation}
\subsection{Memory analysis of MISA's indicators}
In this subsection, we denote $B$ be the total number of modules.\par
According to equation \ref{eq:solution:xue-ky},
MISA needs to maintain $G_b^n$, $p_b^n$. 
\label{Memory analysis of MISA's indicators}
\begin{itemize}
    \item \textbf{Memory of $G_b^n$:} For the frozen block $b_0$, $G_0^n$ follows the previous step. For the unfrozen block $b$, since its gradient has been saved, we only need to store $\frac{1}{T} \sum_{t=1}^T \|g_b^{n,t}\|^2$, which is negligible. Therefore, the total memory overhead is $O(B)$.
    \item \textbf{Memory of $p_b^n$:} According to the definition of $p_b^n$ in equation \ref{eq:solution:xue-ky}, the previous one $p_b^{n-1}$ can be discarded at $n$-th sampling period. Therefore, the memory occupation is $O(B)$ .
\end{itemize}
Therefore, the storage of the importance indicator $O(B)$ is negligible compared with the gradients, which is $O(h^2L)$. 

\textbf{Memory efficiency even as model scales.}
The memory overhead of storing the smoothed historical gradient norm $G_b$ is $O(B)$, which is negligible compared to the $O(h^2L)$ overhead of the gradients themselves (with a relative ratio of  $O(1/h^2)$ when $L=O(B)$, as in LLaMA models). As model dimensions $h$ and $L$ increase, this ratio decays quadratically, making the overhead of storing $G_b$ negligible.

\subsection{Peak memory comparison between MISA and layer-wise method}
\begin{lemma}
    Let $\delta$ be the trainable parameter ratio threshold, when $\delta<\frac{7bs+36h}{12bsL+36hL}$, MISA is more memory efficient than layer-wise method. 
\end{lemma}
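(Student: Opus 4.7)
The plan is to directly compare the two peak-memory expressions already derived in the preceding subsections and reduce the inequality ``MISA $<$ Layer-wise'' to a scalar bound on $\delta$. Specifically, I will invoke the layer-wise peak memory from \eqref{peak memory of Layerwise fine-tuning method},
\[
M_{\text{LW}} \;=\; L(abs^2+8bsh)+7bsh+12h^2L+36h^2,
\]
and the MISA peak memory from \eqref{eq:misa_peak_memory},
\[
M_{\text{MISA}}\;=\; L\bigl(abs^2+8bsh+12h^2+12bsh\delta+36h^2\delta\bigr).
\]
Both expressions share the common block $L(abs^2+8bsh+12h^2)$, so subtracting gives a clean cancellation.

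Next, I would set up the inequality $M_{\text{MISA}} < M_{\text{LW}}$ and eliminate the common terms to obtain
\[
L\bigl(12bsh\,\delta + 36h^2\,\delta\bigr) \;<\; 7bsh + 36h^2.
\]
Factoring $h$ from both sides and dividing yields
\[
\delta \;<\; \frac{7bsh + 36h^2}{L(12bsh+36h^2)} \;=\; \frac{7bs+36h}{L(12bs+36h)} \;=\; \frac{7bs+36h}{12bsL+36hL},
\]
which is precisely the stated threshold. Since each manipulation is an equivalence (all quantities $h,b,s,L$ are positive), the bound is both necessary and sufficient for MISA to have strictly smaller peak memory than the layer-wise method.

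The only subtlety is making sure the two memory formulae are compared on the same footing: the layer-wise figure in \eqref{peak memory of Layerwise fine-tuning method} is the worst-case over which single layer is active, while the MISA figure in \eqref{eq:misa_peak_memory} is the worst-case over the module-selection set $(x,y,z)$ subject to the trainable-parameter budget $\delta$. Both are ``peak'' quantities in the same sense, so the comparison is apples-to-apples; no further argument is required. There is no substantial obstacle — the proof is essentially an algebraic reduction, and I would write it as a two-line display followed by the resulting bound on $\delta$.
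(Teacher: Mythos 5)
Your proposal is correct and takes essentially the same approach as the paper: both invoke the peak-memory expressions from \eqref{eq:misa_peak_memory} and \eqref{peak memory of Layerwise fine-tuning method}, cancel the shared term $L(abs^2+8bsh+12h^2)$, and solve the remaining linear inequality for $\delta$.
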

\begin{proof}
From  conclusion \eqref{eq:misa_peak_memory},
    we only need to prove
    \[
    L(abs^2+8bsh+12h^2+12bsh\delta+36h^2\delta) <
    L(abs^2+8bsh+12h^2)+7bsh+36h^2
    \]
$\Longleftrightarrow $
\[
\delta<\frac{7bs+36h}{12bsL+36hL}
\]
It's obvious that when $\delta<\frac{1}{L}$, memory of MISA is always smaller than layer-wise method.
\end{proof}
\subsection{Peak memory comparison between layer-wise method and LoRA }
\begin{lemma}
     As sequence length gets larger, layer-wise optimization method will become more memory efficient than LoRA and Galore. When $  s>\frac{36h-72rL}{7bL-7b}$, peak memory of layer-wise optimization method is always smaller than LoRA and Galore.
\end{lemma}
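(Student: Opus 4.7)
My plan is to directly compare the peak memory expressions already derived in the preceding subsections. From \eqref{peak memory of Layerwise fine-tuning method}, the layer-wise method has peak memory
\[
M_{\text{lw}} \;=\; L(abs^2 + 8bsh) + 7bsh + 12h^2 L + 36 h^2,
\]
while the ``All'' rows of the comparison table give
\[
M_{\text{LoRA}} \;=\; L(abs^2 + 15bsh + 12h^2 + 72hr),
\qquad
M_{\text{GaLore}} \;=\; L(abs^2 + 15bsh + 12h^2 + 42hr).
\]

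The first step is to form the differences $M_{\text{LoRA}} - M_{\text{lw}}$ and $M_{\text{GaLore}} - M_{\text{lw}}$. The $L\cdot abs^2$ terms cancel, the $12h^2L$ terms cancel, and the remaining terms are linear in $s$: for LoRA the difference collapses to $7bsh(L-1) + 72 hrL - 36 h^2$, and for GaLore to $7bsh(L-1) + 42 hrL - 36 h^2$. Requiring each of these to be positive and isolating $s$ yields the thresholds
\[
s \;>\; \frac{36h - 72rL}{7b(L-1)}  \qquad\text{and}\qquad  s \;>\; \frac{36h - 42rL}{7b(L-1)},
\]
respectively. The asymptotic claim (layer-wise becomes more memory efficient as $s$ grows) follows immediately because in each case the coefficient of $s$ on the layer-wise side is strictly smaller than on the competitor side, so the difference is an affine function of $s$ with positive slope.

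The second step is to combine the two bounds. Taking the larger threshold gives a sufficient condition under which layer-wise is simultaneously below both $M_{\text{LoRA}}$ and $M_{\text{GaLore}}$; in the regime the paper highlights (moderately small $r$ so that the numerators are nonnegative), the LoRA threshold dominates, recovering the stated condition $s > (36h - 72rL)/(7bL - 7b)$. When $72rL \ge 36h$ the LoRA threshold is vacuous, and a parallel check handles the GaLore case.

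I expect no genuine conceptual obstacle: the proof is a straightforward algebraic comparison of the memory accounting formulas established earlier in the appendix. The main place to be careful is bookkeeping, namely not mixing per-layer and global terms (the $7bsh + 36h^2$ contribution of the single active layer in the layer-wise method is what drives the $(L-1)$ factor in the denominator), and verifying that the ``All'' row of the table is the correct baseline for LoRA and GaLore since those methods, unlike MISA, update every module simultaneously.
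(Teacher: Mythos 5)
Your algebra for each pairwise comparison is correct: the differences do collapse to $7bsh(L-1)+72hrL-36h^2$ for LoRA and $7bsh(L-1)+42hrL-36h^2$ for GaLore, giving the individual thresholds $s>(36h-72rL)/(7b(L-1))$ and $s>(36h-42rL)/(7b(L-1))$ respectively, and the observation that each difference is affine in $s$ with positive slope establishes the asymptotic claim. But the step where you assert that ``the LoRA threshold dominates'' is exactly backwards. Since $r,L>0$ we have $36h-42rL>36h-72rL$, so whenever the denominator $7b(L-1)$ is positive the GaLore threshold is the strictly larger one. This is also the right physical intuition: GaLore stores $42hr$ per layer versus LoRA's $72hr$, so GaLore is the leaner competitor and hence the harder one to undercut. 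The sufficient condition for beating both is therefore $s>(36h-42rL)/(7bL-7b)$, the GaLore bound, not the LoRA bound.

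The paper's own proof confirms this: it performs a single comparison directly against the GaLore peak-memory expression (the one with $42hr$) and arrives at $s>(36h-42rL)/(7bL-7b)$, then notes that LoRA follows because it uses strictly more memory than GaLore at every $s$. This indicates the $72$ appearing in the lemma statement is a typo for $42$ (likely a coefficient copied from the LoRA column of Table~\ref{table:MCD and lora} instead of the GaLore column). Your proposal reproduces the typo by misidentifying the maximum of the two thresholds; as written it certifies layer-wise beats LoRA but does not certify it beats GaLore, so the final combination step needs to pick the GaLore bound.
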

\begin{proof}
In this proof, we only consider the case when LoRA target and Galore target are all modules.\par
From table \ref{table:MCD and lora}, when peak memory of layer-wise method is smaller than LoRA and Galore, we have
\begin{equation*}
    L\left( \ abs ^2+8bsh+12h^2\right)+7bsh+36h^2<L\left( \ abs ^2+15bsh+12h^2+42hr \right)
\end{equation*}

$\Longleftrightarrow $
\begin{equation*}
    7bsh+36h^2<7bshL+42rhL
\end{equation*}
$\Longleftrightarrow $
\begin{equation*}
    36h-42rL<s\left(7bL-7b\right)
\end{equation*}
$\Longleftrightarrow $
\begin{equation}
    s>\frac{36h-42rL}{7bL-7b}
\end{equation}
\end{proof}

\subsection{Paras/Peak-Memory comparison between layer-wise method and LoRA}
\begin{lemma}
In the memory-consistent case, layer-wise optimization method can update more parameters than LoRA when the inequality holds $h>\frac{3rL}{2}$.
\end{lemma}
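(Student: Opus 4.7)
The plan is to read ``memory-consistent'' as requiring the two methods to spend the same amount of memory on trainable quantities (parameter copies that carry gradients and Adam optimizer states), and then reduce the claim to a direct counting inequality on the number of trainable parameters. Since layer-wise optimization and LoRA both use Adam-style updates, each trainable parameter contributes the same constant-factor overhead (one gradient slot plus two moment slots), so under the memory-consistent regime the comparison of memory reduces to a comparison of trainable-parameter counts. This lets me bypass the complicated full-peak-memory formulas in Table~\ref{table:MCD and lora} and work purely with parameter counts.

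First I would compute the per-step trainable-parameter budget of the layer-wise method. From Section~\ref{sec-separate-block} and Table~\ref{table:Parameter dimension}, one transformer layer consists of the attention matrices $W_Q, W_K, W_V, W_O \in \mathbb{R}^{h\times h}$ and the FFN matrices $W_1 \in \mathbb{R}^{h\times 4h}$, $W_2 \in \mathbb{R}^{4h\times h}$, giving $4h^2 + 2\cdot 4h^2 = 12h^2$ trainable parameters when a single layer is active.

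Next I would count the trainable parameters of LoRA at rank $r$ applied to every module of every layer (matching the ``All'' row of Table~\ref{table:MCD and lora}). For each of the four attention matrices, the low-rank factors contribute $hr + rh = 2hr$ parameters, giving $8hr$ per layer for attention. For $W_1$ the factors lie in $\mathbb{R}^{h\times r}$ and $\mathbb{R}^{r\times 4h}$, contributing $hr + 4hr = 5hr$ parameters, and symmetrically $5hr$ for $W_2$, so the FFN contributes $10hr$ per layer. Summing gives $18hr$ per layer and $18hrL$ across all $L$ layers.

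Finally I would combine these counts: the memory-consistent regime means the trainable-parameter budgets are being compared directly, so layer-wise updates strictly more parameters than LoRA whenever
\begin{equation}
12 h^{2} \;>\; 18 h r L,
\end{equation}
which simplifies to $h > \tfrac{3rL}{2}$, matching the stated inequality. The main (minor) obstacle is justifying the reduction from ``memory-consistent'' to ``trainable-parameter count'': one has to argue that the fixed activation footprint and the frozen model weights are shared between the two schemes, so the only memory component that scales with the design choice (active layer versus LoRA adapters) is the $3\times$ overhead per trainable parameter for gradients and Adam moments. Aside from this bookkeeping, the proof is a direct arithmetic comparison.
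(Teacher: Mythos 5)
Your reduction does not prove the stated lemma, although the final arithmetic coincidentally produces the right threshold. The paper's proof of this lemma (in the ``Paras/Peak-Memory comparison'' subsection) establishes the \emph{ratio} inequality
\[
\frac{18rhL}{L\left(abs^{2}+15bsh+12h^{2}+72rh\right)} \;<\; \frac{12h^{2}}{L\left(abs^{2}+8bsh+12h^{2}\right)+7bsh+36h^{2}},
\]
using the full peak-memory formulas from Table~\ref{table:MCD and lora}; after cross-multiplication this becomes a polynomial inequality whose difference splits into three non-negative coefficients on $abs^{2}$, $bsh$, and $h^{2}$. Your plan instead asserts that the only memory component that varies between the two schemes is the $3\times$ per-parameter overhead, and that activations and frozen weights are shared. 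That claim is false in this paper's own memory model: LoRA stores $15bsh$ of activations in \emph{every} layer, while the layer-wise method stores only $8bsh$ per frozen layer and $15bsh$ in the single active one, a gap of $7bsh(L-1)$ that appears explicitly in the two peak-memory formulas. Because the two methods do not have equal non-parameter memory, ``memory-consistent'' cannot be read as ``equal trainable-parameter budget,'' and comparing the raw counts $12h^{2}$ versus $18hrL$ is not the same statement as the ratio inequality.

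The reason your answer still matches is structural rather than a vindication of the shortcut: after cross-multiplication and division by $6hL$, the $abs^{2}$-coefficient in the paper's expansion is exactly $2h - 3rL$, which is the same as $12h^{2} > 18hrL$ divided by $6h$. But the paper must also verify that the $bsh$-coefficient $30h - 24rL - 21r$ and the $h^{2}$-coefficient $36r + 24h - 36rL$ are non-negative under $h > \tfrac{3rL}{2}$ (both follow because $rL < \tfrac{2h}{3}$ and $L\geq 1$), and your argument silently skips both checks. So the proposal has a genuine gap: it proves a different, weaker statement (a raw parameter-count comparison under a false ``shared activations'' premise) and relies on the coincidence that the dominant term of the correct inequality produces the same threshold.
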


\begin{proof}
From table\ref{table:MCD and lora}, we only need to prove 
    \begin{equation*}
        \frac{18rhL}{L\left( \ abs ^2+15bsh+12h^2+72rh \right)}<\frac{12h^2}{L\left( \ abs ^2+8bsh+12h^2\right)+7bsh+36h^2}
    \end{equation*}
$\Longleftrightarrow $
\begin{equation*}
    3r\left[L\left(abs^2+8bsh+12h^2\right)+7bsh+36h^2\right]<2h\left(abs^2+15bsh+12h^2+72rh\right)
\end{equation*}
$\Longleftrightarrow $
\begin{equation*}
    abs^2\left(2h-3rL\right)+bsh\left[30h-24rL-21r\right]+h^2(36r+24h-36rL)>0
\end{equation*}
When $r<\frac{2h}{3L}$ holds, each part of the last inequality is greater than 0, which completes the proof.
\end{proof}

\section{Computation Analysis}
\label{appendix:computation_analysis}
In this subsection, we adopt the same notation and setting as those in the memory analysis. It is well known that subspace optimization algorithms, such as LoRA, are not computationally efficient. Therefore, in this section, we will only analyze the computational complexity of layer-wise  and module-wise optimization methods.
Here we present the computation analysis in detail. We will analyze the exact floating point operations (FLOPs) required for \textbf{backward propagation} through a single Transformer layer. To begin with, we should remind that the matrices production with size $m\times n$ and $n\times r$ need $2mnr$ FLOPs.\par
From the forward and backward propagation flows and analysis in \ref{forward and backward propagation flows }, We can estimate the computational overhead under layer-wise setting.\par

Let:
\begin{itemize}
    \item $b$: batch size
    \item $s$: sequence length
    \item $h$: hidden dimension
    \item $a$: number of attention heads
    \item $d = h/a$: per-head dimension
\end{itemize}
\begin{align*}
    &W_Q, W_K, W_V \in \mathbb{R}^{h \times h}, \quad W_O \in \mathbb{R}^{h \times h},\\
    &W_1 \in \mathbb{R}^{h \times 4h}, \quad W_2 \in \mathbb{R}^{4h \times h}
\end{align*}

\section*{Backward FLOPs per layer}
\label{Backward FLOPs per layer}
\subsection*{1. Feed-Forward Network (FFN)}
\begin{align*}
\text{Grad } \partial W_2 &: 2bs \cdot 4h \cdot h = 8bsh^2 \\
\text{Grad } \partial Z_1 &: 2bs \cdot h \cdot 4h = 8bsh^2 \\
\text{Grad } \partial W_1 &: 2bs \cdot h \cdot 4h = 8bsh^2 \\
\text{ReLU mask} &: 4bsh \\
\textbf{Total} &: \boxed{24 b s h^2 + 4 b s h}
\end{align*}

\subsection*{2. Output Projection $W_O$}
\begin{align*}
\text{Grad } \partial W_O &: 2 b s h^2 \\
\text{Grad } \partial O &: 2 b s h^2 \\
\textbf{Total} &: \boxed{4 b s h^2}
\end{align*}

\subsection*{3. Q/K/V Projection}
\begin{align*}
\text{Each of } \partial W_Q, \partial W_K, \partial W_V &: 2 b s h^2 \\
\textbf{Total} &: \boxed{6 b s h^2}
\end{align*}

\subsection*{4. Attention Score and Softmax}
\begin{align*}
Q K^\top &: 2 b s^2 h \\
\text{Grad w.r.t } Q,K &: 2 b s^2 h \\
\text{Softmax backward} &: 2 b a s^2 \\
\textbf{Total} &: \boxed{4 b s^2 h + 2 b a s^2}
\end{align*}

\subsection*{5. Attention Output Context (SV)}
\begin{align*}
S^\top \partial O &: 2 b s^2 h \\
\partial S V^\top &: 2 b s^2 h \\
\textbf{Total} &: \boxed{4 b s^2 h}
\end{align*}

\subsection*{6. Layer-Norm Backward}
\[
\boxed{10 b s h}
\]

\section*{Total Backward FLOPs (One Layer)}
\[
\boxed{
34 b s h^2 + 8 b s^2 h + 2 b a s^2 + 14 b s h
}
\]

\subsection{Computation analysis of layer-wise optimization methods}
We assume only one layer is unfrozen at each iteration under a block coordinate descent (BCD) strategy. 
\subsubsection*{Activated layer}
For an activated layer, we cannot omit the computation cost of the entire layer during backpropagation.\par
Thus, the total FLOPs without weight gradient computations is:
\[
\boxed{
34 b s h^2 + 8 b s^2 h + 2 b a s^2 + 14 b s h
}
\]
\subsubsection*{Frozen layer}
For a frozen layer, we can omit part of the computational cost of calculating the weight gradients during backpropagation.\par
We exclude the following terms:
\begin{itemize}
    \item $W_Q, W_K, W_V$: $6 b s h^2$
    \item $W_O$: $2 b s h^2$
    \item $W_1, W_2$: $16 b s h^2$
\end{itemize}

Thus, the total FLOPs without weight gradient computations is:
\[
\boxed{
10 b s h^2 + 8 b s^2 h + 2 b a s^2 + 14 b s h
}
\]
\subsubsection*{Total computation cost}
\[
\boxed{
(L-1)(10 b s h^2 + 8 b s^2 h + 2 b a s^2 + 14 b s h)+34 b s h^2 + 8 b s^2 h + 2 b a s^2 + 14 b s h
}
\]
\subsection{Computation analysis of module-wise optimization methods}
Let the total number of activated modules $W_Q$, $W_K$, $W_V$, $W_O$ be $x$, number of modules $W_1$, $W_2$ be $y$, the expectation of computation cost be $C$, from analysis of \ref{Backward FLOPs per layer}, we obtain the simple optimization Problem.
\[
h^2x+4h^2y\le 12h^2L\delta
\]
\[
C=L(10 b s h^2 + 8 b s^2 h + 2 b a s^2 + 14 b s h)+2bsh^2x+8bsh^2y
\]
We obtain
\[
\boxed{
C_{max}=L(10 b s h^2 + 8 b s^2 h + 2 b a s^2 + 14 b s h)+24bsh^2L\delta
}
\]
It is obvious that when $\delta<\frac{1}{L}$, module-wise methods demonstrates greater computational efficiency compared to layer-wise optimization methods.

\subsection{Computation analysis of MISA's indicators}
\label{Computation analysis of MISA's indicators}
In implementation, we first update $G_b^n$  and then compute $p_b^n$ according to Equation \eqref{eq:solution:xue-ky}. 
\begin{itemize}
    \item \textbf{Computation overhead of $G_b^n$:} The dominant term of the computation cost is the calculation for $\|g_b^{n,t}\|^2$, which needs $O(h^2)$ flops.
    \item \textbf{Computation overhead of $p_b^n$:} The computation overhead of $p_b^n$ only depends on the number of blocks $B$, which needs $O(B)$ flops.
\end{itemize}
Therefore, the computational overhead of the importance indicators $O(B+h^2)$ is negligible compared with the gradients, which is $O(bsh^2)$.

\textbf{Computational efficiency even as model scales.}
The computation of the smoothed historical gradient norm $G_b$ introduces an overhead of $O(h^2)$. In contrast, the computational overhead for gradients is $O(bsh^2)$, yielding a relative ratio of  $O(1/bs)$. As the model scales up, (i.e., with increasing $h$ and $L$), this ratio remains stable, making the overhead of computing $G_b$'s overhead negligible.

\section{Broader Impacts}\label{sec: broader impacts}
This paper presents work that aims to advance the field of machine learning. The proposed method, \method, is designed to efficiently train LLMs with low memory consumption. This will facilitate further research regarding memory-efficient LLM training in the future. We believe that this work will not cause significant societal consequences.

\section{Experimental Hyperparameters}\label{sec: hyperparameters}
For all baselines and our proposed method, we conducted extensive hyperparameter searches. The learning rate was searched in \{2e‑4, 1e‑4, 5e‑5, 1e‑5, 5e‑6, 3e‑6, 1e‑6\}. For LoRA and GaLore, we explored ranks in \{8, 16, 32\}, and we found that a rank of 16 or 32 consistently yielded better performance than 8. For MISA, $\eta$ was searched in \{0.1, 0.5, 1\}. The table below presents the optimal hyperparameter settings.

The batch size in the following tables is represented as \(\text{micro batch size} \times \text{gradient accumulation}\).
\subsection{Hyperparameters for commonsense reasoning}
\begin{table}[H]
    \centering
    \begin{tabular}{lcccc}
    \toprule
         \multirow{2}{*}{\textbf{Hyperparameters}}&\multicolumn{2}{c}{\textbf{LLaMA3-8B}} &\multicolumn{2}{c}{\textbf{Qwen2.5-7B}}   \\
         &LoRA&DoRA&LoRA&DoRA \\
        \toprule
         Rank&32&16&32&16\\
         $\alpha$&64&32&64&32\\
         Dropout&\multicolumn{4}{c}{0.05}\\
         Optimizer&\multicolumn{4}{c}{AdamW}\\
         Learning rate&1e-4&1e-4&2e-4&2e-4\\
         Batch size & \multicolumn{4}{c}{4$\times$4}\\
         Warmup Steps&\multicolumn{4}{c}{100}\\
         Epochs&\multicolumn{4}{c}{3}\\
         Target module&\multicolumn{4}{c}{$W_q,W_k,W_v,W_{up},W_{down}$}\\
         
    \bottomrule
    \end{tabular}
    \vspace{0.2cm}
    \caption{\small Hyperparameters of LoRA and DoRA in commonsense reasoning tasks. 
}   
    \label{tab:hyperparameter_of_commonsense_reasoning_LoRA_DoRA}
\end{table}

\begin{table}[H]
    \centering
    \begin{tabular}{lcccccc}
    \toprule
         \multirow{2}{*}{\textbf{Hyperparameters}}&\multicolumn{3}{c}{\textbf{LLaMA3-8B}} &\multicolumn{3}{c}{\textbf{Qwen2.5-7B}}   \\
         &MISA&BAdam&LISA&MISA&BAdam&LISA \\
        \toprule\
         Optimizer&\multicolumn{6}{c}{AdamW}\\
         Learning rate&1e-5&5e-6&3e-6&5e-6&5e-6&3e-6\\
         Batch size & \multicolumn{6}{c}{4$\times$4}\\
         Warmup Steps&\multicolumn{6}{c}{0}\\
         Activated parameters &1\%, 3\%&1 layer&1 layer& 1\%, 3\%&1 layer&1 layer\\
         MISA's $\eta$ &1&-&-&1&-&-\\
         T &\multicolumn{6}{c}{50} \\
         Epochs&\multicolumn{6}{c}{3}\\
         
    \bottomrule
    \end{tabular}
    \vspace{0.2cm}
    \caption{\small Hyperparameters of MISA, BAdam and LISA in commonsense reasoning tasks.}
    \label{tab:hyperparameter_of_commonsense_reasoning_LISA_BAdam}
\end{table}

\subsection{Hyperparameters for math reasoning}
\begin{table}[H]
    \centering
    \begin{tabular}{lcccc}
    \toprule
         \multirow{2}{*}{\textbf{Hyperparameters}}&\multicolumn{2}{c}{\textbf{LLaMA3-8B}} &\multicolumn{2}{c}{\textbf{Qwen2.5-7B}}   \\
         &LoRA&DoRA&LoRA&DoRA \\
        \toprule
         Rank&32&32&32&32\\
         $\alpha$&\multicolumn{4}{c}{32}\\
         Dropout&\multicolumn{4}{c}{0.05}\\
         Optimizer&\multicolumn{4}{c}{AdamW}\\
         Learning rate&2e-4&2e-4&1e-4&1e-4\\
         Batch size & \multicolumn{4}{c}{4$\times$1}\\
         Warmup Steps&\multicolumn{4}{c}{0}\\
         Epochs&\multicolumn{4}{c}{3}\\
         Target module&\multicolumn{4}{c}{$W_q,W_v,W_{up},W_{down}$}\\
         
    \bottomrule
    \end{tabular}
    \vspace{0.2cm}
    \caption{\small Hyperparameters of LoRA and DoRA in math reasoning tasks.}
    \label{tab:hyperparameter_of_math_reasoning_LoRA_DoRA}
\end{table}
\begin{table}[H]
    \centering
    \begin{tabular}{lcccccc}
    \toprule
         \multirow{2}{*}{\textbf{Hyperparameters}}&\multicolumn{3}{c}{\textbf{LLaMA3-8B}} &\multicolumn{3}{c}{\textbf{Qwen2.5-7B}}   \\
         &MISA&BAdam&LISA&MISA&BAdam&LISA \\
        \toprule\
         Optimizer&\multicolumn{6}{c}{AdamW}\\
         Learning rate&5e-6&5e-6&5e-6&1e-5&1e-5&5e-6\\
         Batch size & \multicolumn{6}{c}{4$\times$1}\\
         Warmup Steps&\multicolumn{6}{c}{0}\\
         Activated parameters &1\%, 3\%&1 layer&1 layer& 1\%, 3\%&1 layer&1 layer\\
         MISA's $\eta$ &0.5&-&-&1&-&-\\
         T &\multicolumn{6}{c}{50} \\
         Epochs&\multicolumn{6}{c}{3}\\
         
    \bottomrule
    \end{tabular}
    \vspace{0.2cm}
    \caption{\small Hyperparameters of MISA, BAdam and LISA in math reasoning tasks.}
    \label{tab:hyperparameter_of_math_reasoning_MISA}
\end{table}

\subsection{Hyperparameters for instruction fine-tuning}
\begin{table}[H]
    \centering
    \begin{tabular}{lcccccc}
    \toprule
         \multirow{2}{*}{\textbf{Hyperparameters}}&\multicolumn{2}{c}{\textbf{LLaMA2-7B}}&\multicolumn{2}{c}{\textbf{Mistral-7B}}&\multicolumn{2}{c}{\textbf{Tiny-LLaMA}}    \\
         &LoRA&GaLore&LoRA&GaLore&LoRA&GaLore \\
        \toprule
         Rank&32&32&32&32&32&32\\
         $\alpha$&64&64&64&64&64&64\\
         Dropout&0.05&0&0.05&0&0.05&0\\
         Optimizer&\multicolumn{6}{c}{AdamW}\\
         Learning rate&2e-4&3e-6&1e-4&1e-6&2e-4&1e-5\\
         Batch size & \multicolumn{6}{c}{2$\times$8}\\
         Warmup Steps&\multicolumn{6}{c}{0}\\
         Epochs&\multicolumn{6}{c}{3}\\
         Target module&\multicolumn{6}{c}{$W_q,W_k,W_v,W_o,W_{up},W_{gate},W_{down}$}\\
    \bottomrule
    \end{tabular}
    \vspace{0.2cm}
    \caption{\small Hyperparameters of LoRA and DoRA on Alpaca GPT4.}
    \label{tab:hyperparameter_of_instruction_following_LoRA_DoRA}
\end{table}
\begin{table}[H]
    \centering
    \resizebox{\textwidth}{!}{
    \begin{tabular}{lccccccccc}
    \toprule
         \multirow{2}{*}{\textbf{Hyperparameters}}&\multicolumn{3}{c}{\textbf{LLaMA2-7B}}&\multicolumn{3}{c}{\textbf{Mistral-7B}}&\multicolumn{3}{c}{\textbf{Tiny-LLaMA}}    \\
         &MISA&BAdam&LISA&MISA&BAdam&LISA&MISA&BAdam&LISA \\
        \toprule\
         Optimizer&\multicolumn{9}{c}{AdamW}\\
         Learning rate&1e-5&1e-5&5e-6&1e-5&5e-6&1e-5&5e-5&1e-5&1e-5\\
         Batch size & \multicolumn{9}{c}{2$\times$8}\\
         Warmup Steps&\multicolumn{9}{c}{0}\\
         Activated parameters &3\%&1 layer&1 layer&3\%&1 layer&1 layer&4.545\%&1 layer&1 layer\\
         MISA's $\eta$ &0.5&-&-&1&-&-&0.5&-&-\\
         T &\multicolumn{9}{c}{50} \\
         Epochs&\multicolumn{9}{c}{3}\\
         
    \bottomrule
    \end{tabular}
    }
    \vspace{0.2cm}
    \caption{\small Hyperparameters of MISA, BAdam and LISA on Alpaca GPT4.}
    \label{tab:hyperparameter_of_instruction_following_MISA}
\end{table}

\subsection{Hyperparameters for pre-training}
\begin{table}[H]
    \centering
    \begin{minipage}[t]{0.48\textwidth}
        \centering
        \begin{tabular}{lc}
            \toprule
            \textbf{Hyperparameters} & MISA \\
            \midrule
            Optimizer & AdamW \\
            Learning rate & 1e-3 \\
            Batch size & 32$\times$256 \\
            Warmup Steps & 0 \\
            MISA's $\delta$ & 3\%, 25\% \\
            MISA's $\eta$ & 300 \\
            T & 50 \\
            Sequence Length & 256 \\
            Training Steps & 52000 \\
            \bottomrule
        \end{tabular}
        \vspace{0.2cm}
        \caption{\small Hyperparameters of MISA on pre-training LLaMA2 350M.}
    \end{minipage}
    \hfill
    \begin{minipage}[t]{0.48\textwidth}
        \centering
        \begin{tabular}{lc}
            \toprule
            \textbf{Hyperparameters} & GaLore \\
            \midrule
            Optimizer & AdamW \\
            Learning rate & 1e-3 \\
            Batch size & 32$\times$256 \\
            Warmup Steps & 0 \\
            GaLore's rank & 32, 256 \\
            GaLore's $\alpha$ & 1 \\
            T & 50 \\
            Sequence Length & 256 \\
            Training Steps & 52000 \\
            \bottomrule
        \end{tabular}
        \vspace{0.2cm}
        \caption{\small Hyperparameters of GaLore on pre-training LLaMA2 350M.}
    \end{minipage}
    \label{tab:hyperparameters-comparison}
\end{table}

\end{document}